\pgfplotsset{compat=newest}
 \pgfplotsset{width=7cm,compat=1.15}
\let\origtriangleright\triangleright
\let\triangleright\origtriangleright
\newcommand{\appendixref}[1]{{\autoref*{#1}}}
\colorlet{Mycolor1}{green!30!black!70!}
\newcommand{\effd}{d}
\newcommand{\ubact}{\mathtt{ub\_used}}
\newcommand{\calC}{\mathcal{C}}
\newcommand{\calI}{\mathcal{I}}
\newcommand{\calJ}{\mathcal{J}}
\newcommand{\calL}{\mathcal{L}}
\newcommand{\calU}{\mathcal{U}}
\newcommand{\calV}{\mathcal{V}}
\newcommand{\bbR}{\mathbb{R}}
\newcommand{\bfzero}{\mathbf{0}}
\newcommand{\bfone}{\mathbf{1}}
\newcommand{\Conv}{\operatorname{Conv}}
\newcommand{\Proj}{\operatorname{Proj}}
\newcommand\defeq{\mathrel{\overset{\makebox[0pt]{\mbox{\normalfont\tiny\sffamily def}}}{=}}}
\newcommand{\tx}{\tilde{x}}
\newcommand{\abs}[1]{\left|#1\right|}                    % Absolute value
\newcommand{\rbra}[1]{\left(#1\right)}
\newcommand{\sidx}[1]{\left\llbracket     #1 \right\rrbracket}
\DeclareMathOperator{\conv}{conv}
\newcommand{\Elide}{\texttt{Elide}}
\newcommand{\FastLin}{\texttt{Fast-Lin}}
\newcommand{\DeepPoly}{\texttt{DeepPoly}}
\newcommand{\RefineZono}{\texttt{RefineZono}}
\newcommand{\kPoly}{\texttt{kPoly}}
\newcommand{\FastCtV}{\texttt{FastC2V}}
\newcommand{\OptCtV}{\texttt{OptC2V}}
\newtheorem{corollary}{Corollary}
\newtheorem{definition}{Definition}
\newtheorem{example}{Example}
\newtheorem{lemma}{Lemma}
\newtheorem{observation}{Observation}
\newtheoremstyle{named}{}{}{\itshape}{}{\bfseries}{.}{.5em}{\thmnote{#3}}
\theoremstyle{named}
\newtheorem*{namedexample}{Example}
\algnewcommand{\Inputs}[1]{%
  \State \textbf{Inputs:}
  \Statex \hspace*{\algorithmicindent}\parbox[t]{.85\linewidth}{\raggedright #1}
}
\algnewcommand{\Outputs}[1]{%
  \State \textbf{Outputs:}
  \Statex \hspace*{\algorithmicindent}\parbox[t]{.85\linewidth}{\raggedright #1}
}
\algnewcommand{\Parameters}[1]{%
  \State \textbf{Parameters:}
  \Statex \hspace*{\algorithmicindent}\parbox[t]{.85\linewidth}{\raggedright #1}
}
\algnewcommand{\Initialize}[1]{%
  \State \textbf{Initialize:}
  \Statex \hspace*{\algorithmicindent}\parbox[t]{.85\linewidth}{\raggedright #1}
}
\title{The Convex Relaxation Barrier, Revisited:\\ Tightened Single-Neuron Relaxations for Neural Network Verification}
\author{%
  Christian Tjandraatmadja \\
  Google Research \\
  \texttt{ctjandra@google.com} \\
  \And
  Ross Anderson \\
  Google Research \\
  \texttt{rander@google.com} \\
  \And
  Joey Huchette \\
  Rice University \\
  \texttt{joehuchette@rice.edu} \\
  \And
  Will Ma \\
  Columbia University \\
  \texttt{wm2428@gsb.columbia.edu} \\
  \And
  Krunal Patel \\
  Polytechnique Montréal\thanks{This work was completed while this author was at Google Research.} \\
  \texttt{krunal.patel@polymtl.ca} \\
  \And
  Juan Pablo Vielma \\
  Google Research \\
  \texttt{jvielma@google.com}
}
\begin{document}

\maketitle
\begin{abstract}
We improve the effectiveness of propagation- and linear-optimization-based neural network verification algorithms with a new tightened convex relaxation for ReLU neurons. Unlike previous single-neuron relaxations which focus only on the univariate input space of the ReLU, our method considers the multivariate input space of the affine pre-activation function preceding the ReLU. Using results from submodularity and convex geometry, we derive an explicit description of the tightest possible convex relaxation when this multivariate input is over a box domain. We show that our convex relaxation is significantly stronger than the commonly used univariate-input relaxation which has been proposed as a natural convex relaxation barrier for verification. While our description of the relaxation may require an exponential number of inequalities, we show that they can be separated in linear time and hence can be efficiently incorporated into optimization algorithms on an as-needed basis. Based on this novel relaxation, we design two polynomial-time algorithms for neural network verification: a linear-programming-based algorithm that leverages the full power of our relaxation, and a fast propagation algorithm that generalizes existing approaches. In both cases, we show that for a modest increase in computational effort, our strengthened relaxation enables us to verify a significantly larger number of instances compared to similar algorithms.
\end{abstract}

\section{Introduction}

% Neural networks are now ubiquitous machine learning models, having proven their capability on a number of complex tasks ranging from XX to YY to ZZ. However, in recent years there has been recognition that this incredibly powerful family of models can be quite brittle: small perturbations to reasonable inputs, adversarially chosen, can quite often lead to dramatically unexpected behavior from the model. As a result, researchers have flocked to the field of verification, attempting to understand (among other important questions) when and how, exactly, a trained network can be certified to be robust against these adversarial attacks.

A fundamental problem in deep neural networks is to \emph{verify} or \emph{certify} that a trained network is \emph{robust}, i.e. not susceptible to adversarial attacks \cite{Carlini:2016,Papernot:2016,Szegedy:2013}. Current approaches for neural network verification can be divided into \emph{exact} (\emph{complete}) methods and \emph{relaxed} (\emph{incomplete}) methods. Exact verifiers are often based on mixed integer programming (MIP) or more generally branch-and-bound \cite{anderson2020strong,Strong-mixed-integer-programming-formulations-for-trained-CONF,botoevaefficient,bunel2020branch,cheng2017maximum,dutta2018output,fischetti2017deep,lomuscio2017approach,lu2020neural,Roessig2019,Tjeng:2017,Xiao:2018} or satisfiability modulo theories (SMT) \cite{ehlers2017formal,huang2017safety,Katz:2017,narodytska2018verifying,scheibler2015towards} and, per their name, exactly solve the problem, with no false negatives or false positives. However, exact verifiers are typically based on solving NP-hard optimization problems \cite{Katz:2017} which can significantly limit their scalability. In contrast, relaxed verifiers are often based on polynomially-solvable optimization problems such as convex optimization or linear programming (LP) \cite{anderson2020tightened,bhojanapalli2020efficient,dvijotham2018dual,liu2019training,lyu2019fastened,raghunathan2018semidefinite,salman2019convex,singh2019beyond,xiang2018output,zhu2020improving}, which in turn lend themselves to faster \emph{propagation-based} methods where bounds are computed by a series of variable substitutions in a backwards pass through the network~\cite{singh2019abstract,weng2018towards,wong2018provable,Wong:2018,zhang2018efficient}.  Unfortunately, relaxed verifiers achieve this speed and scalability by trading off effectiveness (i.e. increased false negative rates), possibly failing to certify robustness when robustness is, in fact, present. As might be expected, the success of relaxed methods hinges on their tightness, or how closely they approximate the object which they are relaxing.

As producing the tightest possible relaxation for an entire neural network is no easier than the original verification problem, most relaxation approaches turn their attention instead to simpler substructures, such as individual neurons. For example, the commonly used \emph{$\Delta$-relaxation}\footnote{Sometimes also called the \emph{triangle relaxation}~\cite{liu2019algorithms,singh2019beyond}.}\cite{ehlers2017formal} is simple and offers the tightest possible relaxation for the univariate ReLU function, and as a result is the foundation for many relaxed verification methods. Recently, Salman et al. \cite{salman2019convex} characterized the \emph{convex relaxation barrier}, showing that the effectiveness of all existing propagation-based fast verifiers is fundamentally limited by the tightness of this $\Delta$-relaxation. Unfortunately, they show computationally that this convex barrier can be a severe limitation on the effectiveness of relaxed verifiers based upon it. While the convex relaxation barrier can be bypassed in various ways (e.g.\ considering relaxations for multiple neurons~\cite{singh2019beyond}), as noted in \cite[Appendix A]{salman2019convex} all existing approaches that achieve this do so by trading off clarity and speed.

In this paper we improve the effectiveness of propagation- and LP-based relaxed verifiers  with a new tightened convex relaxation for ReLU neurons. Unlike the $\Delta$-relaxation which focuses only on the univariate input space of the ReLU, our relaxation considers the multivariate input space of the affine pre-activation function preceding the ReLU. By doing this, we are able to bypass the convex barrier from \cite{salman2019convex} while remaining in the realm of single-neuron relaxations that can be utilized by fast propagation- and LP-based verifiers.

More specifically, our contributions are as follows.
\begin{enumerate}
\item Using results from submodularity and convex geometry, we derive an explicit linear inequality description for the tightest possible convex relaxation of a single neuron, where, in the spirit of \cite{anderson2020strong,Strong-mixed-integer-programming-formulations-for-trained-CONF}, we take this to encompass the ReLU activation function, the affine pre-activation function preceding it, and known bounds on each input to this affine function. We show that this new convex relaxation is significantly stronger than the $\Delta$-relaxation, and hence bypasses the convex barrier from \cite{salman2019convex} without the need to consider multi-neuron interactions as in, e.g. \cite{singh2019beyond}.
\item We show that this description, while requiring an exponential number of inequalities in the worst case, admits an efficient separation routine. In particular, we present a linear time algorithm that, given a point, either asserts that this point lies within the relaxation, or returns an inequality that is not satisfied by this point. Using this routine, we develop two verification algorithms that incorporate our tighter inequalities into the relaxation.
\begin{enumerate}
    \item \OptCtV{}: We develop a polynomial-time LP-based algorithm that harnesses the full power of our new relaxation.
    \item \FastCtV{}: We develop a fast propagation-based algorithm that generalizes existing approaches (e.g.\  \FastLin{}~\cite{weng2018towards} and \DeepPoly{}~\cite{singh2019abstract}) by
    dynamically adapting the relaxation using our new inequalities.
  %  allowing a dynamic refinement of the relaxations and partially exploiting the power of our new relaxation.
\end{enumerate}
    \item Computational experiments on verification problems using networks from the ERAN dataset~\cite{eran_benchmark} demonstrate that leveraging these inequalities yields a substantial improvement in verification capability. In particular, our fast propagation-based algorithm surpasses the strongest possible algorithm restricted by the convex barrier (i.e.\ optimizing over the $\Delta$-relaxation at every neuron). We also show that our methods are competitive with more expensive state-of-the-art methods such as \RefineZono{}~\cite{singh2019boosting} and \kPoly{}~\cite{singh2019beyond}, certifying more images than them in several cases.
    % \TODO{Complete with details of the computational results} Computational experiments on .... we show that for a modest increase in computational effort, propagation based verify a significantly larger number of instances compared to \DeepPoly{}. We also show that the LP-based algorithm can be more accurate than ....
    % Computation...\TODO{We are still working on the computational experiments and they will be added at a later point. Table 1 summarizes preliminary results on the ERAN benchmark from \url{https://github.com/eth-sri/eran}, on 100 instances for ReLU networks. We remark that our strengthening can verify a significantly greater number of input images than their base counterparts.}
\end{enumerate}

\section{Verification via mathematical optimization}
Consider a neural network $f : \bbR^{m} \to \bbR^r$ described in terms of $N$ neurons in a linear order.\footnote{This allows us to consider feedforward networks, including those that skip layers  (e.g.\ see \cite{salman2019convex,zhu2020improving}).}
 The first $m$ neurons are the \emph{input neurons}, while the remaining \emph{intermediate neurons} are indexed by $i=m+1,\ldots,N$. Given some input $x \in \bbR^m$, the relationship $f(x) = y$ can be described as
\begin{subequations} %\label{eqn:intermediate-neurons}
\label{eqn:topo-network-equations}
\begin{alignat}{4}
    x_i &= z_i \quad &\forall& i = 1,\ldots,m \quad &\text{(the inputs)} \label{eqn:topo-network-equations-1} \\
    \hat{z}_i &= \sum\nolimits_{j=1}^{i-1} w_{i,j}z_j + b_i \quad &\forall& i = m+1, \ldots, N \quad &\text{(the pre-activation value)} %\label{eqn:pre-activation}
    \label{eqn:topo-network-equations-2} \\
    z_i &= \sigma(\hat{z}_i) \quad &\forall& i = m+1, \ldots, N \quad &\text{(the post-activation value)}
    %\label{eqn:post-activation}
    \label{eqn:topo-network-equations-3} \\
    y_i &= \sum\nolimits_{j=1}^N w_{i,j}z_j + b_{i} \quad &\forall& i = N+1,\ldots,N+r \quad &\text{(the outputs)}. \label{eqn:topo-network-equations-4}
\end{alignat}
\end{subequations}
Here the constants $w$ and $b$ are the weights and biases, respectively, learned during training, while $\sigma(v) \defeq \max\{0,v\}$ is the ReLU activation function. Appropriately, for each neuron $i$ we dub the variable $\hat{z}_i$ the \emph{pre-activation variable} and $z_i$ the \emph{post-activation variable}.

Given a trained network (i.e. fixed architecture, weights, and biases), we study a \emph{verification problem} of the following form: given constant $c \in \bbR^r$, polyhedron $X \subseteq \bbR^m$, $\beta \in \bbR$, and
\begin{equation}\label{verification_problem}
   \gamma(c,X) \defeq \max\nolimits_{x \in X} c \cdot f(x) \equiv \max\nolimits_{x,y,\hat{z},z}\Set{c\cdot y |x\in X,\quad \eqref{eqn:topo-network-equations} },
\end{equation}
%  for given $X\subseteq \bbR^{m}$ and $c\in \bbR^{r}$, and then verifying that $ \gamma(c,X) \leq \beta$ for some constant $\beta$ (e.g. see \cite[Equation (1)]{salman2019convex}). Problem \eqref{verification_problem} is equivalent to
%  \begin{equation}\label{verification_problem_vtwo}
%       \gamma(c,X)\defeq\max_{x,y,\hat{z},z}\Set{c\cdot y |x\in X,\quad (x,y,\hat{z},z) \text{ satisfy \eqref{eqn:topo-network-equations} }},
%  \end{equation}
does $\gamma(c,X) \leq \beta$? Unfortunately, this problem is NP-hard~\cite{Katz:2017}. Moreover, one is typically not content with solving just one problem of this form, but would like to query for many reasonable choices of $c$ and $X$ to be convinced that the network is robust to adversarial perturbations.
% considers an input domain $X$ and and output domain $Y$ and asks if, for each $x \in X$, it holds that $f(x) \in Y$. For the remainder, we will consider verification problems where $Y$ is a halfspace, in which case the verification problem can be modeled as by solving the optimization problem $\gamma = \max_{x \in X} c \cdot f(x)$, and then verifying that $\gamma \leq \beta$ for some constant $\beta$.
% This is a fundamental problem in a number of problem domains, and is especially crucial in deep learning, where models are typically very uninterpretable and often exhibit local fragility and susceptibility to adversarial
%attacks.~\cite{???}
% This is particularly problematic as you typically will want to solve a large number of instances of \eqref{verification_problem} with different $X$ and $c$ in order to ensure a single trained neural network is robust across a range of reasonable inputs.

A promising approach to approximately solving the verification problem is to replace the intractable optimization problem defining $\gamma$ in \eqref{verification_problem} with a tractable \emph{relaxation}. In particular, we aim to identify a tractable optimization problem whose optimal objective value $\gamma_R(c,X)$ satisfies $\gamma(c,X) \leq \gamma_R(c,X)$, for all parameters $c$ and $X$ of interest. Then, if $\gamma_R(c,X) \leq \beta$, we have answered the verification problem in the affirmative. However, note that it may well be the case that, by relaxing the problem, we may fail to verify a network that is, in fact, verifiable (i.e. $\gamma(c,X) \leq \beta < \gamma_R(c,X)$). Therefore, \emph{the strength of our relaxation is crucial for reducing the false negative rate of our verification method.}
% A promising approach to improve the tractability of optimization-based approaches to verification is replacing \eqref{verification_problem} with a tractable optimization problem that is a \emph{relaxation} of  \eqref{verification_problem}: i.e. an optimization problem whose optimal objective value $\gamma_R(c,X)$ satisfies $\gamma(c,X) \leq \gamma_R(c,X)$ for all $(c,X)\in \calX$, where $\calX$ is the set of all instances needed to verify robustness. Then, if $\gamma_R(c,X)\leq \beta$  for all $(c,X)\in \calX$, robustness of the network can be certified without ever needing to compute $\gamma(c,X)$ through \eqref{verification_problem}.  However, if $\gamma_R(c,X)$ is much larger than $\gamma(c,X)$ for some $(c,X) \in \calX$, then verification through the relaxation may fail (e.g. $\gamma_R(c,X)> \beta$) even if the network is indeed robust (e.g. $\gamma(c,X)\leq \beta$).
% \subsection{Convex relaxations for verification}
%The advantages and challenges of this relaxation strategy for verification are nicely illustrated through two recent papers that consider ReLU activation functions where $\sigma^i_j\rbra{x}= \max\set{0,x_j}$ \cite{salman2019convex,singh2019beyond}.

% \texorpdfstring below is to fix bookmarks, which don't support LaTeX.
\subsection{The \texorpdfstring{$\Delta$}{triangle}-relaxation and its convex relaxation barrier}
Salman et al.~\cite{salman2019convex} note that many relaxation approaches for ReLU networks are based on the single-activation-function set $A^{i} \defeq \{(\hat{z}_i,z_i) \in \bbR^2\ |\ \hat{L}_i \leq \hat{z}_i \leq \hat{U}_i, \: z_i = \sigma_j(\hat{z}_i)\}$, where the pre-activation bounds $\hat{L}_i, \hat{U}_i\in \bbR$ are taken so that $\hat{L}_i \leq \hat{z}_i \leq \hat{U}_i$  for any point that satisfies $x\in X$ and \eqref{eqn:topo-network-equations}.
The $\Delta$-relaxation $C_{\Delta}^i \defeq \Conv(A^i)$ is optimal in the sense that it describes the convex hull of $A^i$, with three simple linear inequalities:
$ z_i\geq 0,\ z_i \geq \hat{z}_i,$\ and $z_i\leq \frac{\hat{U}_i}{\hat{U}_i-\hat{L}_i}(\hat{z}_i - \hat{L}_i)$.
%The optimal convex relaxation of $A^{i}$ given by the  so-called \emph{$\Delta$-relaxation} or \emph{triangle-relaxation} $C_{\Delta}^{i}\defeq \Conv(A^{i})$ (e.g. \cite[Lemma 1]{Serra:2018}), which can be described with three simple linear inequalities: $C_{\Delta}^{i}=$
% \begin{subequations} \label{eqn:pre-activation-convex-hull}
% \begin{alignat}{2}
%     z^i_j &\geq 0 \\
%     z^i_j &\geq w^{i,j}\cdot z^{i-1} + b^i_j \\
%     z^i_j &\leq \frac{U^i_j}{U^i_j-L^i_j}(\hat{z}^i_j - L^i_j), \label{eqn:big-m}
% \end{alignat}
% \end{subequations}

The simplicity and small size of the $\Delta$-relaxation is appealing, as it leads to the relaxation
%  \begin{equation}\label{verification_problem_vtwo_triangle_relaxation}
%       \gamma_{\Delta}(c,X)\defeq\max_{x,y,\hat{z},z}\Set{c\cdot y |
%       \begin{alignedat}{3}
%       x&\in X,\quad (x,y,\hat{z},z) \text{ satisfy (\ref{eqn:topo-network-equations-1},\ref{eqn:topo-network-equations-2},\ref{eqn:topo-network-equations-4})} \\ (\hat{z}_i,z_i)&\in C^{i}_{\Delta}\quad \forall i = m+1,\ldots,N
%       \end{alignedat}
%       },
%  \end{equation}
  \begin{equation}\label{verification_problem_vtwo_triangle_relaxation}
      \gamma_{\Delta}(c,X)\defeq\max_{x,y,\hat{z},z} \Set{
        c\cdot y |
        x\in X, \quad
        \eqref{eqn:topo-network-equations-1},\eqref{eqn:topo-network-equations-2},\eqref{eqn:topo-network-equations-4}, \quad
        (\hat{z}_i,z_i)\in C^{i}_{\Delta}\: \forall i = m+1,\ldots,N
      }.
 \end{equation}
 This is a small\footnote{Here, ``small'' means the number of variables and constraints is $\mathcal{O}(\# \text{ of neurons})$.} Linear Programming (LP) problem than is theoretically tractable and relatively easy to solve in practice. Moreover, a plethora of fast propagation-based algorithms~\cite{singh2018fast, singh2019abstract, wang2018efficient, weng2018towards, wong2018provable, zhang2018efficient} center on an approach that can be interpreted as further relaxing $\gamma_\Delta$, where inequalities describing the sets $C^i_\Delta$ are judiciously dropped from the description in such a way that this LP becomes much easier to solve. Unfortunately, Salman et al.~\cite{salman2019convex} observe that the quality of the verification bounds obtained through the $\Delta$-relaxation are intrinsically limited; a phenomenon they call the \emph{convex relaxation barrier}. Nonetheless, this LP, along with faster propagation algorithms that utilize the inequalities defining $C^i_\Delta$, have been frequently  applied to the verification task, often with substantial success.

\subsection{Our approach: Eliding pre-activation variables}
In this paper, we show that we can significantly improve over the accuracy of $\Delta$-relaxation verifiers with only a minimal trade-off in simplicity and speed. The key for this result is the observation that pre-activation variables are a ``devil in disguise'' in the context of convex relaxations. For a neuron $i$, the pre-activation variable $\hat{z}_i$ and the post-activation variable $z_i$ form the minimal set of variables needed to capture (and relax) the nonlinearity introduced by the ReLU. However, this approach ignores the inputs to the pre-activation variable $\hat{z}_i$, i.e. the preceding post-activation variables $z_{1:i-1} \defeq (z_1,\ldots,z_{i-1})$.

% For a neuron $i$, the pre-activation variable immediately preceding the activation function and the post-activation variable immediately following the activation function form the minimal set of variables needed to capture (and relax) the nonlinearity introduced by the ReLU. However, this approach ignores the inputs to the pre-activation variable $\hat{z}_i$, i.e. the preceding post-activation variables $z_{1:i-1}$, and their interactions.

Our approach captures these relationships by instead turning our attention to the $i$-dimensional set\footnote{The \emph{effective} dimension of this set can be much smaller if $w_{i,\cdot}$ is sparse. This is the case with a feedforward network, where the number of nonzeros is (at most) the number of neurons in the preceding layer.} $S^{i} \defeq \Set{z \in \bbR^i | L \leq z_{1:i-1} \leq U, \quad z_i = \sigma\left(\sum_{j=1}^{i-1} w_{i,j}z_j + b_i\right)}$, where the post-activation bounds $L,U \in \bbR^{i-1}$ are such that $L_j \leq z_j \leq U_j$ for each point satisfying $x \in X$ and \eqref{eqn:topo-network-equations}. Note that no pre-activation variables appear in this description; we elide them completely, substituting the affine function describing them inside of the activation function.

\tikzstyle{yzx} = [
  x={(.9625cm, .9625cm)},
  y={(2.5cm, 0cm)},
  z={(0cm, 2.5cm)},
]

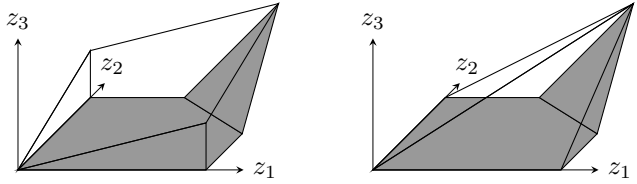
\begin{figure}[htpb]
    \centering
    \begin{tikzpicture}[yzx]
        % \draw [->, dashed, line width=1] (0,0,0) -- (1.2,0,0);
        % \draw [->, dashed, line width=1] (0,0,0) -- (0,1.2,0);
        % \draw [->, dashed, line width=1] (0,0,0) -- (0,0,0.7);
        \draw [->, >=stealth, line width=0.25] (0,0,0) -- (1.2,0,0);
        \draw [->, >=stealth, line width=0.25] (0,0,0) -- (0,1.2,0);
        \draw [->, >=stealth, line width=0.25] (0,0,0) -- (0,0,0.7);
        \node[above right] at (1.2,-.075,0) {$z_2$};
        \node[right] at (0,1.2,0) {$z_1$};
        \node[above] at (0,0,.7) {$z_3$};
        \coordinate (LL) at (0,0,0);
        \coordinate (UL) at (1,0,0);
        \coordinate (LU) at (0,1,0);
        \coordinate (UU) at (1,1,0.5);
        \coordinate (UM) at (1,0.5,0);
        \coordinate (MU) at (0.5,1,0);
        \coordinate (E1) at (1,0,1/4);
        \coordinate (E2) at (0,1,1/4);

        \draw [fill=gray!80] (LL) -- (UL) -- (UM) -- (MU) -- (LU) -- cycle;
        \draw [fill=gray!80] (UU) -- (UM) -- (MU) -- cycle;
        \draw (LL) -- (E2) -- (UU) -- (E1) -- cycle;
        \draw (LL) -- (UL) -- (E1) -- cycle;
        \draw (LL) -- (LU) -- (E2) -- cycle;

        % \begin{scope}[canvas is yz plane at x=0]
        %     \draw [fill] (1,1/4) circle [radius=.025];
        % \end{scope}
    \end{tikzpicture} \hspace{2em}
    \begin{tikzpicture}[yzx]
        % \draw [->, dashed, line width=1] (0,0,0) -- (1.2,0,0);
        % \draw [->, dashed, line width=1] (0,0,0) -- (0,1.2,0);
        % \draw [->, dashed, line width=1] (0,0,0) -- (0,0,0.7);
        \draw [->, >=stealth, line width=0.25] (0,0,0) -- (1.2,0,0);
        \draw [->, >=stealth, line width=0.25] (0,0,0) -- (0,1.2,0);
        \draw [->, >=stealth, line width=0.25] (0,0,0) -- (0,0,0.7);
        \node[above right] at (1.2,-.075,0) {$z_2$};
        \node[right] at (0,1.2,0) {$z_1$};
        \node[above] at (0,0,.7) {$z_3$};
        \coordinate (LL) at (0,0,0);
        \coordinate (UL) at (1,0,0);
        \coordinate (LU) at (0,1,0);
        \coordinate (UU) at (1,1,0.5);
        \coordinate (UM) at (1,0.5,0);
        \coordinate (MU) at (0.5,1,0);

        \draw [fill=gray!80] (LL) -- (UL) -- (UM) -- (MU) -- (LU) -- cycle;
        \draw [fill=gray!80] (UU) -- (UM) -- (MU) -- cycle;
        \draw (LL) -- (UL) -- (UU) -- cycle;
        \draw (LL) -- (LU) -- (UU) -- cycle;

        % \begin{scope}[canvas is yz plane at x=0]
        %     \draw [fill] (1,1/4) circle [radius=.025];
        % \end{scope}
    \end{tikzpicture}
    \caption{A simple neural network with $m=2$ dimensional input and one intermediate neuron ($N=3$). \textbf{(Left)} The feasible region for $\gamma_\Delta$, and \textbf{(Right)} The feasible region for $\gamma_\Elide$. The $x$, $y$, and $\hat{z}$ variables, which depend affinely on the others, are projected out.
    }
    % \textbf{(Left)} A depiction of all post-activation values feasible for the $\Delta$-relaxation, and \text{(Right)} the convex hull of all feasible post-activation values. On both sides we mark a point that lies in the left set, but not the right set.}
    \label{fig:relu}
\end{figure}

This immediately gives a single-neuron relaxation of the form
% \begin{equation}\label{verification_problem_vtwo_oneneuron_relaxation}
%       \gamma_{\Elide}(c,X) \defeq\max_{x,y,z}\Set{c\cdot y |
%       \begin{alignedat}{3}
%       x&\in X,\quad (x,y,z) \text{ satisfy (\ref{eqn:topo-network-equations-1},\ref{eqn:topo-network-equations-4})} \\ z_{1:i} &\in C^{i}_{\Elide}\quad \forall i=m+1,\ldots,N
%       \end{alignedat}
%       },
%  \end{equation}
\begin{equation}\label{verification_problem_vtwo_oneneuron_relaxation}
      \gamma_{\Elide}(c,X) \defeq\max_{x,y,z}\Set{
        c\cdot y |
        x \in X,\quad
        \eqref{eqn:topo-network-equations-1},\eqref{eqn:topo-network-equations-4}, \quad
        z_{1:i} \in C^{i}_{\Elide}\: \forall i=m+1,\ldots,N
      },
 \end{equation}
 where $C^{i}_{\Elide}\defeq \Conv(S^{i})$ is the convex hull of $S^i$, as shown in Figure~\ref{fig:relu} (adapted from \cite{anderson2020strong}), which contrasts it with the convex barrier and $\Delta$-relaxation. We will show that, unsurprisingly, $C^i_\Elide$ will require exponentially many inequalities to describe in the worst case. However, \emph{we show that this need not be a barrier to incorporating this tighter relaxation into verification algorithms.}

\section{An exact convex relaxation for a single ReLU neuron}\label{exactconvexsec}

Let $w\in \bbR^n$, $b\in \bbR$, $f(x)\defeq w\cdot x+b$, and $L,U\in\bbR^n$ be such that $L< U$. For ease of exposition, we  rewrite the single-neuron set $S^i$ in the generic form \begin{equation}\label{Sdef:eq}
    S \defeq \Set{(x,y)\in [L,U]\times \bbR |  y = \sigma(f(x)) }.
\end{equation}
  Notationally, take $\llbracket n \rrbracket \defeq \{1,\ldots,n\}$, $\breve{L}_i \defeq \begin{cases} L_i & w_i \geq 0 \\ U_i & \text{o.w.} \end{cases}$ and $\breve{U}_i \defeq \begin{cases} U_i & w_i \geq 0 \\ L_i & \text{o.w.} \end{cases}$ for each $i\in\sidx{n}$, $\ell(I) \defeq \sum_{i \in I} w_i\breve{L}_i + \sum_{i \not\in I} w_i\breve{U}_i + b$, and
    \[\calJ\defeq\Set{
        (I,h) \in 2^{\llbracket n \rrbracket} \times \llbracket n \rrbracket |
                    \ell(I) \geq  0,\quad
            \ell(I\cup\{ h\}) < 0,\quad w_i\neq 0 \; \forall i\in I
    }.\]

Our main technical result uses results from submodularity and convex geometry \cite{ahmed2011maximizing, bach2013learning, o1971hyperplane,tawarmalani2013explicit} to give the following closed-form characterization of $\Conv(S)$. For a proof of \autoref{thm:convex-hull}, see \appendixref{theooneproof}.
%\footnote{We also show in \appendixref{theooneproof} that $\Conv(S)$ is the projection of the LP relaxation of the MIP  in  \cite{anderson2020strong,Strong-mixed-integer-programming-formulations-for-trained-CONF}. }

%\begin{definition}[Strictly Active]
%An affine function $f(x)$ is strictly active over a domain $D$ if $\min_{x\in D}f(x)<0<\max_{x\in D}f(x)$.
%\end{definition}

\begin{restatable}{theorem}{generaltheo} \label{thm:convex-hull}
If $\ell(\sidx{n})\geq 0$, then $\Conv(S)= S = \Set{(x,y)\in [L,U]\times \bbR |  y = f(x) }$.
Alternatively, if $\ell(\emptyset)< 0$, then $\Conv(S) = S = [L,U]\times \set{0} $.
Otherwise, $\Conv(S)$ is equal to the set of all $(x,y)\in\bbR^n\times\bbR$ satisfying
% \begin{subequations} \label{eqn:relu-conv-hull}
% \begin{align}
% y &\geq w\cdot x + b \label{eqn:lb1} \\
% y &\geq 0 \label{eqn:lb2} \\
% y &\leq \sum_{i \in I} w_i(x_i-\breve{L}_i) + \frac{\ell(I)}{\breve{U}_{ h}-\breve{L}_{ h}} (x_{ h}-\breve{L}_{ h}) \quad &\forall (I, h) \in \calJ \label{eqn:cut-family-2} \\
% L &\leq x \leq U. \label{eqn:domain}
% \end{align}
% \end{subequations}
\begin{subequations} \label{eqn:relu-conv-hull}
\begin{align}
y &\geq w\cdot x + b, \quad y \geq 0, \quad L \leq x \leq U \label{eqn:relu-conv-remaining-constrs} \\
y &\leq \sum\nolimits_{i \in I} w_i(x_i-\breve{L}_i) + \frac{\ell(I)}{\breve{U}_{ h}-\breve{L}_{ h}} (x_{ h}-\breve{L}_{ h}) \quad &\forall (I, h) \in \calJ. \label{eqn:cut-family-2}
\end{align}
\end{subequations}
Furthermore, if  $\effd\defeq\abs{\Set{i\in \sidx{n}| w_i\neq 0}}$, then $\effd \leq \abs{\calJ}\leq \lceil \frac{1}{2} \effd \rceil \binom{\effd}{\lceil \frac{1}{2} \effd \rceil}$ and for each of these inequalities (and each $d\in \sidx{n}$) there exist data that makes it hold at equality.

\end{restatable}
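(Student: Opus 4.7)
The plan is to handle the two degenerate cases separately and then, in the main case, identify $\Conv(S)$ as the region bounded below by the graph of $g := \sigma\circ f$ and above by its concave envelope over $[L,U]$. Since $T \mapsto \ell(T)$ is modular and nonincreasing in $T$, it attains its minimum at $T = \sidx{n}$ and its maximum at $T = \emptyset$, coinciding with $\min_{x\in[L,U]} f(x)$ and $\max_{x\in[L,U]} f(x)$ respectively. Hence $\ell(\sidx{n})\geq 0$ forces $\sigma(f(x)) = f(x)$ everywhere on $[L,U]$ and $\ell(\emptyset) < 0$ forces $\sigma(f(x)) = 0$, collapsing $S$ to a polyhedron in either case. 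In the remaining case $g$ is convex on $[L,U]$ (as the composition of the convex nondecreasing ReLU with an affine map), so its convex envelope over the box is $g$ itself and $\Conv(S) = \set{(x,y) : x \in [L,U],\ g(x) \leq y \leq g_{\mathrm{conc}}(x)}$, with $g_{\mathrm{conc}}$ the concave envelope. The lower bound $y \geq g(x) = \max\{0, w\cdot x + b\}$ decomposes into precisely \eqref{eqn:relu-conv-remaining-constrs}.

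The core of the proof is then to show that \eqref{eqn:cut-family-2} describes $g_{\mathrm{conc}}$ over $[L,U]$. Each vertex of $[L,U]$ is $v_T$ for some $T\subseteq\sidx{n}$ with $(v_T)_i = \breve{L}_i$ iff $i\in T$, giving $f(v_T) = \ell(T)$; since $g$ is convex, $g_{\mathrm{conc}}$ is the piecewise-linear function interpolating the vertex values $\max\{0,\ell(T)\}$. A short case check on whether $\ell(T)$ and $\ell(T\cup\{i\})$ are nonnegative shows that $T \mapsto \max\{0,\ell(T)\}$ is a supermodular set function, whose per-element marginal is bounded in magnitude by $\alpha_i := |w_i|(U_i - L_i)$ and is nondecreasing in $T$. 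Invoking the concave-envelope theory for supermodular functions over the box from the cited literature \cite{bach2013learning,tawarmalani2013explicit}---most conveniently after the affine reparametrization $t_i = (\breve{U}_i - x_i)/(\breve{U}_i - \breve{L}_i)$, which sends $[L,U]$ to $[0,1]^{\effd}$ and $f$ to $\ell(\emptyset) - \alpha\cdot t$---the nontrivial facets of the concave envelope are indexed precisely by the ``descent'' pairs $(I,h)$ satisfying $\ell(I)\geq 0 > \ell(I\cup\{h\})$, i.e.\ by $\calJ$, and the resulting inequalities translate back to \eqref{eqn:cut-family-2} by routine algebra. Validity of each cut is verified by evaluating on the vertices $v_T$ alone (sufficient because the RHS is affine and $g$ is convex), with a case split on $h\in T$ vs.\ not and $T\supseteq I$ vs.\ not that reduces everything to the two defining inequalities of $\calJ$.

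The lower bound $|\calJ|\geq \effd$ is obtained by ordering the nonzero coordinates and extracting one descent pair per coordinate; the upper bound $\lceil\effd/2\rceil\binom{\effd}{\lceil\effd/2\rceil}$ combines a simple bound on the number of valid $h$ per admissible $I$ with a Sperner-type counting argument on the admissible $I$'s. Tightness is established by explicit construction: for any prescribed $(I,h) \in \calJ$ and dimension $\effd\in\sidx{n}$, one chooses $w$, $b$, and the box bounds so that an affinely independent collection of points of $S$ lies on the face defined by that cut, ruling out redundancy. The main obstacle I anticipate is the completeness direction---showing no cut beyond \eqref{eqn:cut-family-2} is missing---which most cleanly relies on the concave-envelope machinery of \cite{bach2013learning,tawarmalani2013explicit}; alternatively, one can argue directly by LP duality, decomposing any $(x^*,y^*)$ satisfying \eqref{eqn:relu-conv-hull} into a convex combination of points of $S$ via a greedy construction guided by the sorted order of the $t_i$.
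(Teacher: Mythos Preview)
Your proposal is correct and takes essentially the same route as the paper's submodularity-based proof: reduce to the concave envelope of $g=\sigma\circ f$ over the box, invoke the Lov\'asz-extension/Kuhn-triangulation description for the supermodular set function $T\mapsto\max\{0,\ell(T)\}$ (the paper works equivalently with $-g$ submodular on $[0,1]^n$), and then collapse the $n!$ triangulation facets to those indexed by $\calJ$ after the same affine reparametrization---your alternative LP-duality/greedy decomposition is exactly the paper's second proof. One small caution on the upper bound $|\calJ|\le\lceil\effd/2\rceil\binom{\effd}{\lceil\effd/2\rceil}$: the paper obtains it by identifying $\calJ$ with the edges of the $\effd$-cube cut by the hyperplane $f=0$ and citing O'Neil's bound, whereas a literal ``Sperner on the admissible $I$'s'' does not work directly since those $I$'s form a down-set rather than an antichain, so make sure your ``Sperner-type'' argument is really the edge-isoperimetric/O'Neil result in disguise.
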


Note that this is the tightest possible relaxation when $x \in [L, U]$. Moreover, we observe that the relaxation offered by $\Conv(S)$ can be arbitrarily tighter than that derived from the $\Delta$-relaxation.

  \begin{restatable}{proposition}{arbitrarygap}\label{prop:arbitrary-gap}
    For any input dimension $n$, there exists a point $\tilde{x}\in \bbR^n$, and a problem instance given by the affine function $f$, the $\Delta$-relaxation $C_\Delta$, and the single neuron set $S$ such that $\left(\max_{y : (f(\tilde{x}),y) \in C_\Delta} y\right) - \left(\max_{y : (\tilde{x},y) \in \Conv(S)} y\right) = \Omega(n)$.
  \end{restatable}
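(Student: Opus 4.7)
The plan is to exhibit, for each $n \ge 2$, an explicit instance in which $\tilde{x}$ is a vertex of the input box — forcing $\Conv(S)$ to the exact value $\sigma(f(\tilde{x}))$ — while simultaneously placing $f(\tilde{x})$ at a point where the $\Delta$-upper facet is loose by an $\Omega(n)$ amount. Concretely, I would take $L = \bfzero$, $U = \bfone$, $b = 0$, let $k = \lfloor n/2 \rfloor$, and set $w_i = 1$ for $i \le k$ and $w_i = -2$ for $i > k$. The distinguished point is $\tilde{x} = \bfone$, for which $f(\tilde{x}) = k - 2(n-k) = 3k - 2n$, a negative quantity of order $-n$.

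Next I would compute the $\Delta$-relaxation bound directly. The pre-activation bounds over $[L,U]$ are $\hat{U} = k$ (attained by setting $x_i = 1$ on the positive-weight coordinates and $x_i = 0$ elsewhere) and $\hat{L} = -2(n-k)$ (from the opposite assignment), so $\hat{L} < 0 < \hat{U}$ and the non-trivial upper facet $y \le \tfrac{\hat{U}}{\hat{U}-\hat{L}}(\hat{z}-\hat{L})$ applies. Evaluating at $\hat{z} = f(\tilde{x}) = 3k - 2n$ yields the upper bound $\tfrac{k^2}{2n-k}$, which is $\Theta(n)$ (exactly $n/6$ when $n$ is even and $k = n/2$).

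Finally, for $\Conv(S)$ at $\tilde{x} = \bfone$, I would use that $\bfone$ is an extreme point of $[L,U]$. Any $(\bfone, y) \in \Conv(S)$ arises as a convex combination of points $(x^{(j)}, \sigma(f(x^{(j)}))) \in S$ whose $x$-coordinates average to $\bfone$; by extremality each active $x^{(j)}$ must equal $\bfone$, forcing $y = \sigma(f(\bfone)) = \sigma(3k - 2n) = 0$. Subtracting, the gap in the proposition is at least $\tfrac{k^2}{2n-k} = \Omega(n)$, as required.

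The main obstacle is really just the design of the instance: choosing weights of mixed sign with balanced magnitudes so that (i) $f(\tilde{x})$ lands in a region of $[\hat{L}, \hat{U}]$ where the $\Delta$ triangle overshoots the true ReLU by a linear-in-$n$ amount, while (ii) $\tilde{x}$ simultaneously remains a vertex of $[L,U]$ so that $\Conv(S)$ collapses to the single point $\sigma(f(\tilde{x}))$. Everything after the construction is elementary algebra and a one-line extreme-point argument; no invocation of the explicit facet description from \autoref{thm:convex-hull} is needed.
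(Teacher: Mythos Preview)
Your construction is correct. The computations check out: with $k=\lfloor n/2\rfloor$ you get $\hat L=-2(n-k)<0<k=\hat U$, the $\Delta$-upper facet evaluated at $\hat z=f(\bfone)=3k-2n$ gives $k^2/(2n-k)=\Theta(n)$, and the extreme-point argument pinning $\max\{y:(\bfone,y)\in\Conv(S)\}$ to $\sigma(f(\bfone))=0$ is clean and valid.

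The paper's own proof is a one-line pointer: it invokes Example~2 of Anderson et al.\ \cite{anderson2020strong} together with the fact that the $\Delta$-relaxation coincides with the projection of the big-$M$ MIP relaxation studied there. So your argument is genuinely more self-contained. Where the paper leans on an external construction and on the equivalence between $C_\Delta$ and a projected formulation, you build an explicit instance and bound $\Conv(S)$ at $\tilde x$ via the extremality of $\bfone$ in $[L,U]$, never touching \autoref{thm:convex-hull} or any MIP machinery. The trade-off is that the paper's route situates the gap within the broader narrative connecting $\Conv(S)$ to the ideal MIP formulation of \cite{anderson2020strong}, whereas yours is a standalone elementary argument---arguably preferable for a reader who just wants the proposition established.
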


Although the family of upper-bounding constraints \eqref{eqn:cut-family-2} may be exponentially large, the structure of the inequalities is remarkably simple. As a result, the \emph{separation problem} can be solved efficiently: given $(x,y)$, either verify that $(x,y) \in \Conv(S)$, or produce an inequality from the description \eqref{eqn:relu-conv-hull} which is violated at $(x,y)$. For instance, we can solve in $\mathcal{O}(n \log n)$ time  the optimization problem
\begin{equation}\label{separationproblem}
    \upsilon(x)\defeq\min\Set{ \sum\nolimits_{i \in I} w_i(x_i-\breve{L}_i) + \frac{\ell(I)}{\breve{U}_{ h}-\breve{L}_{ h}} (x_{ h}-\breve{L}_{ h})  | (I, h) \in \calJ },
\end{equation}
by sorting the indices with $w_i\neq 0$ in nondecreasing order of values $(x_i - \breve{L}_i)/(\breve{U}_i - \breve{L}_i)$, then adding them to $I$ in this order so long as $\ell(I) \geq 0$ (note that adding to $I$ can only decrease $\ell(I)$), and then letting $h$ be the index that triggered the stopping condition $\ell(I \cup \{h\}) < 0$. For more details, see the proof of \autoref{lemma:general_separation} in \appendixref{otherproofsfromthree}.

%This can be done via sorting in $\mathcal{O}(n \log n)$ time, but we show in Proposition~\ref{lemma:general_separation} that this can be done in $\mathcal{O}(n)$ and that this produces the desired inequality. \cnote{I guess we also need to note that we do this for nonzero weights.}

% Heuristically (for now), I propose to select $I$ such that $h(I)$ is minimized, subject to $h(I) > 0$, and then select $j$ such that $h(I \cup \{j\})$ is maximized, subject to $h(I \cup \{j\}) \leq 0$ (equivalently, over all $j \not\in I$, subject to the constraints on selection of $I$).

Then, to check if $(x,y) \in \Conv(S)$, we first check if the point satisfies \eqref{eqn:relu-conv-remaining-constrs}, which can be accomplished in $\mathcal{O}(n)$ time. If so, we compute $\upsilon(x)$ in $\mathcal{O}(n \log n)$ time.
%which, by \autoref{lemma:general_separation}, can be done in $\mathcal{O}(n)$ time.
If $y\leq \upsilon(x)$, then $(x,y) \in \Conv(S)$. Otherwise, an optimal solution to \eqref{separationproblem} yields an inequality from \eqref{eqn:cut-family-2} that is most violated at $(x,y)$. In addition, we can also solve \eqref{separationproblem} slightly faster.
\begin{restatable}{proposition}{separationlemma}\label{lemma:general_separation}
The optimization problem \eqref{separationproblem}
%The following optimization problem
can be solved in $\mathcal{O}(n)$ time.

% This optimization problem can be solved in $\mathcal{O}(n)$ time.
    % Take some $(I,j) \in \calI$. For each $l \leq x \leq u$ and $y = \sigma(w \cdot x + b)$,
    % \[
    %     y \geq \sum_{i \in I} w_i(x_i - \breve{L}_i) + \frac{h(I)}{\breve{U}_j - \breve{L}_j}(x_j - \breve{L}_j) - \alpha,
    % \]
    % where $\alpha$ can be computed in $\mathcal{O}(n)$ time as the optimal cost of the fractional knapsack problem
    % \begin{align*}
    %     \alpha = \max_{x} \quad& \sum_{i \in I} w_i(x_i - \breve{L}_i) + \frac{h(I)}{\breve{U}_j - \breve{L}_j}(x_j - \breve{L}_j) \\
    %     \text{s.t.}\quad& w \cdot x + b = 0 \\
    %     & l \leq x \leq u.
    % \end{align*}
\end{restatable}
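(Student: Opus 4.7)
The plan is to leverage the structure of the optimal solution identified in the $\mathcal{O}(n \log n)$ algorithm: the optimal $I$ consists of the indices with the smallest values of the ratio $r_i \defeq (x_i - \breve{L}_i)/(\breve{U}_i - \breve{L}_i)$ (restricted to $w_i \neq 0$), and $h$ is the next index in this sorted order. Rather than sorting explicitly, I will find the correct threshold via a divide-and-conquer procedure built on linear-time deterministic median selection.

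First I discard indices with $w_i = 0$ in $\mathcal{O}(n)$ time. I then maintain a ``definitely-in'' set $I_0$ (initially $\emptyset$) and an ``undecided'' set $D$ (initially the remaining indices), with the invariant that $\ell(I_0) \geq 0$ and that the optimal $I$ equals $I_0$ together with some prefix of $D$ when $D$ is sorted by $r_i$. In each iteration, using a linear-time selection algorithm, I compute the median ratio $r^*$ over $D$ and partition $D$ into $D_{\mathrm{lo}}$ (ratios $\leq r^*$) and $D_{\mathrm{hi}}$ (ratios $> r^*$) in $\mathcal{O}(|D|)$ time. Exploiting that $\ell(I_0 \cup D_{\mathrm{lo}}) = \ell(I_0) + \sum_{i \in D_{\mathrm{lo}}} w_i(\breve{L}_i - \breve{U}_i)$, this quantity can be evaluated in $\mathcal{O}(|D_{\mathrm{lo}}|)$. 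If it is nonnegative, the greedy procedure would certainly include all of $D_{\mathrm{lo}}$ in $I$, so I set $I_0 := I_0 \cup D_{\mathrm{lo}}$, $D := D_{\mathrm{hi}}$, and recurse; otherwise both $h$ and all subsequent indices lie in $D_{\mathrm{lo}}$, so I set $D := D_{\mathrm{lo}}$ and recurse. Recursion terminates when $|D| = 1$, at which point the remaining element is $h$ and $I_0$ is the desired $I$. The total work satisfies $T(n) = T(n/2) + \mathcal{O}(n)$, yielding $\mathcal{O}(n)$.

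The main subtlety, and the step requiring the most care, will be justifying the invariant that slicing $D$ at an intermediate ratio preserves the form of the globally optimal $(I,h)$ as $(I_0 \cup \text{(prefix of $D$)}, h)$. This reduces to observing that the greedy threshold depends only on the relative order of the $r_i$, so partitioning $D$ at any median value is consistent with any full sort, and the decision of whether to absorb $D_{\mathrm{lo}}$ into $I_0$ is determined entirely by the sign of $\ell(I_0 \cup D_{\mathrm{lo}})$. Handling ties among ratios (by breaking ties arbitrarily but consistently) and verifying at termination that $(I_0, h) \in \calJ$ via the conditions $\ell(I_0) \geq 0$ and $\ell(I_0 \cup \{h\}) < 0$ requires only routine bookkeeping.
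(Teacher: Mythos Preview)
Your proposal is correct. Both your argument and the paper's rest on the same algorithmic idea: the optimal $(I,h)$ is the greedy threshold in the ratio ordering, and this threshold can be located in linear time by median-based bisection rather than sorting.

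The route differs in framing. The paper first rewrites \eqref{separationproblem} as the linear program
\[
\omega(x)=\min_v\Bigl\{\sum_i w_i(x_i-\breve L_i)v_i : \sum_i w_i(\breve U_i-\breve L_i)v_i=\ell(\emptyset),\; v\in[0,1]^n\Bigr\},
\]
proves $\omega(x)=\upsilon(x)$ via the map $\Phi(I,h)\mapsto v$, and then invokes the standard $\mathcal{O}(n)$ weighted-median algorithm for fractional knapsack (citing Korte--Vygen) to produce an optimal basic feasible solution, which is converted back to $(\hat I,\hat h)\in\calJ$. Your argument skips the LP detour and spells out the bisection directly on the combinatorial problem, maintaining the invariants $\ell(I_0)\ge 0$ and (implicitly) $\ell(I_0\cup D)<0$. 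The paper's LP reduction has the side benefit of giving a clean proof that the greedy prefix is actually optimal (which you take as given from the main-text sketch); conversely, your write-up is self-contained at the algorithmic level and does not rely on an external citation. One small point worth making explicit in your write-up: to guarantee the recursion halves $|D|$ you need the selection routine to return a balanced split even under ties (e.g.\ take $D_{\mathrm{lo}}$ to be exactly the $\lfloor|D|/2\rfloor$ smallest elements with ties broken by index), not merely the partition ``ratio $\le r^*$'' versus ``ratio $>r^*$''.
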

%Interestingly, this efficient separation procedure,
Together with the ellipsoid algorithm~\cite{grotschel2012geometric}, \autoref{lemma:general_separation} shows that the single-neuron relaxation $\gamma_\Elide$ can be efficiently solved (at least in a theoretical sense).
\begin{restatable}{corollary}{polylemma}\label{polylemma:ref}
If the weights $w$ and biases $b$ describing the neural network are rational, then the single-neuron relaxation \eqref{verification_problem_vtwo_oneneuron_relaxation}
 can be solved in polynomial time on the encoding sizes of $w$ and $b$.
\end{restatable}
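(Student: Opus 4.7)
The plan is to apply the classical Grötschel--Lovász--Schrijver equivalence between separation and optimization \cite{grotschel2012geometric} to the single-neuron relaxation $\gamma_\Elide$. First I would observe that \eqref{verification_problem_vtwo_oneneuron_relaxation} is a linear program: the objective $c\cdot y$ is linear, and the feasible region is the intersection of the rational polyhedron $X$, the affine equalities \eqref{eqn:topo-network-equations-1} and \eqref{eqn:topo-network-equations-4}, and the polyhedra $C^{i}_{\Elide}$, whose polyhedrality follows from the explicit inequality description in \autoref{thm:convex-hull}. Since $w$, $b$, and $X$ are rational, the post-activation bounds $L,U$ (and hence $\breve{L},\breve{U}$) used in the definition of each $C^i_\Elide$ can be produced by simple forward interval propagation through the network in polynomial time, with encoding sizes polynomial in those of $w$, $b$, and $X$.

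Next I would assemble a polynomial-time separation oracle for this feasible region. Given a candidate $(x,y,z)$, the affine constraints and membership in $X$ are checked in polynomial time in the usual way. For each intermediate neuron $i = m+1,\ldots,N$, \autoref{thm:convex-hull} splits the check into two pieces: the three simple inequalities \eqref{eqn:relu-conv-remaining-constrs}, which take $O(i)$ time; and the exponential family \eqref{eqn:cut-family-2}, for which \autoref{lemma:general_separation} produces the most-violated member (or certifies feasibility) in $O(i)$ arithmetic operations. The coefficients of any returned inequality are rational functions of $w_i$, $b$, $\breve{L}$, $\breve{U}$ involving only additions and a single division by some $\breve{U}_h-\breve{L}_h \neq 0$, so their bit-length is polynomial in the encodings of $w$, $b$, and the propagated bounds. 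Summed over all $i$, the oracle runs in $O(N^2)$ arithmetic operations with polynomially bounded output.

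Finally, I would invoke the ellipsoid algorithm with this oracle. The feasible region is bounded because every $z_j$ lies in the rational box $[L_j,U_j]$ and the remaining variables depend affinely on $z$ and $x\in X$, so a rational bounding ball of polynomially bounded radius is readily available; the standard perturbation arguments handle emptiness and degeneracy. This meets the hypotheses of the separation-implies-optimization theorem of \cite{grotschel2012geometric} and yields a polynomial-time algorithm for $\gamma_\Elide$.

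The main obstacle is the bit-complexity bookkeeping: one must check that the propagated bounds $L,U$ (and hence the coefficients $\ell(I)/(\breve{U}_h-\breve{L}_h)$ produced by \autoref{lemma:general_separation}) remain of size polynomial in the input encoding, and that the outer bounding ball used by the ellipsoid method is likewise polynomially sized. Both follow from a routine inductive argument on the layer depth, since each propagated bound is a rational combination of earlier bounds and the entries of $w$ and $b$; this is the only nontrivial step needed to convert ``polynomial number of oracle calls'' into ``polynomial time'' in the encoding sense required by the corollary.
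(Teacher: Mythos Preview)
Your proposal is correct and follows essentially the same route as the paper: exhibit a polynomial-time separation oracle for the feasible region of \eqref{verification_problem_vtwo_oneneuron_relaxation} using \autoref{thm:convex-hull} and \autoref{lemma:general_separation}, then invoke the Gr\"otschel--Lov\'asz--Schrijver equivalence between separation and optimization. The paper's own proof is a two-line version of yours (citing \cite[Theorem~7.26]{conforti2014integer} in place of \cite{grotschel2012geometric}); your additional remarks on producing the bounds $L,U$ by interval propagation and on the bit-complexity bookkeeping are details the paper simply elides.
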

For proofs of \autoref{prop:arbitrary-gap}, \autoref{lemma:general_separation} and \autoref{polylemma:ref}, see \appendixref{otherproofsfromthree}.

\paragraph{Connections with Anderson et al.~\cite{anderson2020strong,Strong-mixed-integer-programming-formulations-for-trained-CONF}}
Anderson et al.~\cite{anderson2020strong,Strong-mixed-integer-programming-formulations-for-trained-CONF} have previously presented a MIP formulation that exactly models the set $S$ in \eqref{Sdef:eq}. This formulation is \emph{ideal} so, in particular, its LP relaxation offers a \emph{lifted} LP formulation with one \emph{auxiliary variable} whose projection onto the \emph{original variables} $x$ and $y$ is exactly  $\Conv(S)$. Indeed, in Appendix~\ref{secondproof} we provide an alternative derivation for Theorem~\ref{thm:convex-hull} using the machinery presented in \cite{anderson2020strong}. This lifted LP can be used in lieu of our new formulation \eqref{eqn:relu-conv-hull}, though it offers no greater strength and requires an additional $N-m$ variables if applied for each neuron in the network. Moreover, it is not clear how to incorporate the lifted LP into propagation-based algorithms to be presented in the following section, which naturally work in the original variable space.

\section{A propagation-based algorithm}\label{propagationsec}
We now present a technique to use the new family of strong inequalities \eqref{eqn:cut-family-2} to generate strong post-activation bounds for a trained neural network. A step-by-step example of this method is available in Appendix~\ref{app:fastc2v_example}. To properly define the algorithm, we begin by restating a generic propagation-based bound generation framework under which various algorithms from the literature are special cases (partially or completely)~\cite{singh2018fast, singh2019abstract, wang2018efficient, weng2018towards, wong2018provable, zhang2018efficient}.

\subsection{A generic framework for computing post-activation bounds}\label{sec:generic_framework}

Consider a bounded input domain $X \subseteq \bbR^m$, along with a single output (i.e. $r=1$) to be maximized, which we name $\calC(z)=\sum_{i=1}^\eta c_iz_i + b$ for some $\eta\leq N$. In this section, our goal is produce efficient algorithms for producing valid upper bounds for $\calC$. First, let $z_i(x)$ denote the unique value of $z_i$ (post-activation variable $i$) implied by the equalities~(\ref{eqn:topo-network-equations-2}--\ref{eqn:topo-network-equations-3}) when we set $z_{1:m}=x$ for some $x \in X$.
Next, assume that for each intermediate neuron $i = m+1,\ldots,\eta$ we have affine functions of the form $\calL_i(z_{1:i-1})=\sum_{j=1}^{i-1}w^l_{ij}z_j+b^l_i$ and $\calU_i(z_{1:i-1})=\sum_{j=1}^{i-1}w^u_{ij}z_j+b^u_i$, such that
\begin{align} \label{eqn:affineValid}
\calL_i(z_{1:i-1}(x))\le z_i(x)\le\calU_i(z_{1:i-1}(x))\quad\forall x\in X,\quad i=1,\ldots,\eta.
\end{align}
We consider how to construct these functions in the next subsection.
Then, given these functions we can compute a bound on $\calC\rbra{z_{1:\eta}\rbra{x}}$ through the following optimization problem:
\begin{subequations} \label{eqn:relaxed-problem}
\begin{align}
B\rbra{\calC,\eta}\defeq \max_{z}\quad& \calC(z) \equiv \sum\nolimits_{i=1}^\eta c_i z_i + b \label{eqn:relaxedObj} \\
\text{s.t.}\quad& z_{1:m}\in X \\
& \calL_i(z_{1:i-1})\le z_i\le\calU_i(z_{1:i-1})\quad\forall i=m+1,\ldots,\eta. \label{eqn:relaxedConstr}
\end{align}
\end{subequations}
\begin{restatable}{proposition}{relaxequallp}
    The optimal value of \eqref{eqn:relaxed-problem} is no less than $\max_{x\in X} \calC\rbra{z_{1:\eta}\rbra{x}}$.
\end{restatable}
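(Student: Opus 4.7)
The plan is to show that every point produced by the network on an input $x \in X$ is a feasible point for the relaxed optimization problem \eqref{eqn:relaxed-problem}, and hence the maximum of \eqref{eqn:relaxedObj} over the relaxed feasible region can only be larger than the maximum over this restricted subset of feasible solutions. This is essentially a ``weak duality''-style observation: \eqref{eqn:relaxed-problem} is a relaxation by construction.

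In detail, I would fix an arbitrary $x \in X$ and consider the candidate solution $\bar z \defeq z_{1:\eta}(x)$, where $z_i(x)$ is defined (as in the text) by the unique forward pass through the network using \eqref{eqn:topo-network-equations-2}--\eqref{eqn:topo-network-equations-3}. I would then verify the three types of constraints in \eqref{eqn:relaxed-problem}: (i) $\bar z_{1:m} = x \in X$ holds by choice of $x$; (ii) for each intermediate neuron $i = m+1,\ldots,\eta$, the hypothesis \eqref{eqn:affineValid} gives exactly $\calL_i(\bar z_{1:i-1}) \le \bar z_i \le \calU_i(\bar z_{1:i-1})$, which is the constraint \eqref{eqn:relaxedConstr}. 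Thus $\bar z$ is feasible for \eqref{eqn:relaxed-problem}, and its objective value is $\calC(\bar z) = \calC(z_{1:\eta}(x))$.

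Since $B(\calC,\eta)$ is the maximum of $\calC(z)$ over all feasible $z$, and the feasible set contains $\{z_{1:\eta}(x) : x \in X\}$, taking the supremum over $x \in X$ yields
\begin{equation*}
B(\calC,\eta) \;\ge\; \sup_{x \in X} \calC(z_{1:\eta}(x)) \;=\; \max_{x\in X}\calC(z_{1:\eta}(x)),
\end{equation*}
where the last equality uses boundedness of $X$ together with continuity of $x \mapsto \calC(z_{1:\eta}(x))$ (a composition of affine maps and ReLUs).

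There is no real obstacle here: the entire content of the statement is the observation that \eqref{eqn:affineValid} is precisely what is needed to guarantee that forward-pass solutions are feasible for the relaxation. The only minor care point is being explicit that $z_i(x)$ is defined recursively through \eqref{eqn:topo-network-equations-2}--\eqref{eqn:topo-network-equations-3} so that the inequality \eqref{eqn:affineValid} can be applied inductively on $i$.
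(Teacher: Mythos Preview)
Your proposal is correct and is essentially identical to the paper's proof: fix $x\in X$, observe that $z_{1:\eta}(x)$ is feasible for \eqref{eqn:relaxed-problem} by virtue of \eqref{eqn:affineValid}, and conclude. The paper's argument is just a one-line version of yours (it does not bother to justify that the supremum is attained).
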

% \begin{proof}
% For any $x\in X$, by definition of validity in~\eqref{eqn:affineValid}, setting $z_i \gets z_i(x)$ for all $i=1,\ldots,N$ yields a feasible solution to~\eqref{eqn:relaxed-problem} with objective value $c\rbra{z_{1:N}\rbra{x}}$, completing the proof.
% \end{proof}

% \cnote{TODO: These probably need to be updated without Algorithm 1 and 2.}

The optimal value $B\rbra{\calC,\eta}$ can be quickly computed through propagation methods without explicitly computing an optimal solution to~\eqref{eqn:relaxed-problem}~\cite{salman2019convex,zhang2018efficient}. Such methods perform a backward pass to sequentially eliminate (project out) the intermediate variables $z_N,\ldots,z_{m+1}$, which can be interpreted as applying Fourier-Motzkin elimination~\cite[Chapter 2.8]{Bertsimas:1997}. In a nutshell, for $i = \eta, \ldots, m+1$, the elimination step for variable $z_i$ uses its objective coefficient (which may be changing throughout the algorithm) to determine which one of the bounds from \eqref{eqn:relaxedConstr} will be binding at the optimal solution and replaces $z_i$ by the expression  $\calL_i(z_{1:i-1})$ or $\calU_i(z_{1:i-1})$ accordingly. The procedure ends with a smaller LP that only involves the input variables $z_{1:m}$ and can be quickly solved with an appropriate method. For instance, when $X$ is a box, as is common in verification problems, this final LP can be trivially solved by considering each variable individually. For more details, see Algorithm~\ref*{alg:backwards} in \appendixref{propagationalgappendix}.

\subsection{Selecting the bounding functions}\label{ssec:selecting-bounding-functions}
The framework described in the previous section required as input the family of bounding functions $\{\calL_i,\calU_i\}_{i=m+1}^\eta$. A typical approach to generate these will proceed sequentially, deriving the $i$-th pair of functions using \emph{scalar bounds} $\hat{L}_i,\hat{U}_i \in \bbR$
% \wnote{I believe there's a typo here.  In either case, I would prefer to write $\hat{L}_i,\hat{U}_i\in\bbR$.}
on the $i$-th  pre-activation variables $\hat{z}_{i}$, which by \eqref{eqn:topo-network-equations-2} is equal to $ \sum\nolimits_{j=1}^{i-1} w_{i,j}z_j + b_i$. Hence, these scalar bounds must satisfy
%are \emph{valid} if, for each $x \in X$,
\begin{align} \label{eqn:scalarValid}
\hat{L}_i\le \sum\nolimits_{j=1}^{i-1} w_{i,j}z_j(x) + b_i\le \hat{U}_i\quad \forall x \in X.
\end{align}
These bounds can then be used as a basis to linearize the nonlinear equation
\begin{equation}\label{propagnonlineareq}
    z_i = \sigma\rbra{\sum\nolimits_{j=1}^{i-1} w_{i,j}z_j + b_i}
\end{equation}
 implied by (\ref{eqn:topo-network-equations-2}-\ref{eqn:topo-network-equations-3}).
% This deviates from traditional approaches where the linearization is based on bounds of the pre-activation variables, but any pre-activation-based approach can easily be translated to our framework. For instance, if $\hat{L_i}=b_i+\sum_{j=1}^{i-1} \max\Set{0,w_{i,j}} L_i -\max\Set{0,-w_{i,j}} U_i$ and $\hat{U_i}=b_i+\sum_{j=1}^{i-1} \max\Set{0,w_{i,j}} U_i -\max\Set{0,-w_{i,j}} L_i$, then $\hat{L_i}\leq \sum\nolimits_{j=1}^{i-1} w_{i,j}z_j(x) + b_i\leq \hat{U_i}$ for all $x\in X$
% and hence $\hat{L_i},\hat{U_i}$ are scalar bounds on the missing pre-activation variable $\hat{z}_i$ defined by \eqref{eqn:topo-network-equations-2}. In particular,
If $\hat{U}_i \leq 0$ or $\hat{L}_i \geq 0$, then \eqref{propagnonlineareq} behaves linearly when \eqref{eqn:scalarValid} holds, and so we can let
%we can completely linearize \eqref{propagnonlineareq}\cnote{Suggestion: "...then \eqref{propagnonlineareq} behaves linearly when $\hat{z}_i \in [\hat{L}_i, \hat{U}_i]$ and we can let...". I think it is important to note that it is only linear within the bounds.} and let
$\calL_i(z_{1:i-1}) = \calU_i(z_{1:i-1}) = \sum\nolimits_{j=1}^{i-1} w_{i,j}z_j + b_i$ or $\calL_i(z_{1:i-1}) = \calU_i(z_{1:i-1}) = 0$, respectively. Otherwise, we can construct non-trivial bounds such as
% \begin{subequations}
% \begin{align} \label{eqn:fastlin-inequalities}
\[
    \calL_i(z_{1:i-1}) = \frac{\hat{U}_i}{\hat{U}_i-\hat{L}_i}\rbra{\sum_{j=1}^{i-1} w_{i,j}z_j + b_i} \;\text{ and }\quad \calU_i(z_{1:i-1}) = \frac{\hat{U}_i}{\hat{U}_i-\hat{L}_i}\rbra{\sum_{j=1}^{i-1} w_{i,j}z_j + b_i-\hat{L}_i},
    \]
 %   \jnote{What are $\hat{L}_i$ and $\hat{U}_i$ in the above? I'm confused when/how these values are defined. It doesn't look like we assume $L_i/U_i$ are given, but they are used to compute the pre-activation bounds?}
% \end{align}
% \end{subequations}
which can be derived from the  $\Delta$-relaxation: $\calU_i(z_{1:i-1})$ is the single upper-bounding inequality present on the left side of \autoref{fig:relu}, and $\calL_i(z_{1:i-1})$ is a shifted down version of this inequality.\footnote{Note that these functions satisfy \eqref{eqn:affineValid} only when $\hat{U}_i > 0$ and $\hat{L}_i < 0$.}
%\cnote{We should point out that these bounding inequalities are only valid when the neuron is not linearizable. They break otherwise.}
This pair is used by algorithms such as \FastLin{}~\cite{weng2018towards}, \texttt{DeepZ}~\cite{singh2018fast}, \texttt{Neurify}~\cite{wang2018efficient}, and that of Wong and Kolter~\cite{wong2018provable}. Algorithms such as \DeepPoly{}~\cite{singh2019abstract} and \texttt{CROWN-Ada}~\cite{zhang2018efficient} can be derived by selecting the same $\calU_i(z_{1:i-1})$ as above and $\calL_i(z_{1:i-1}) = 0$ if $|\hat{L}_i| \geq |\hat{U}_i|$ or $\calL_i(z_{1:i-1}) = \sum\nolimits_{j=1}^{i-1} w_{i,j}z_j + b_i$ otherwise (i.e.\ whichever yields the smallest area of the relaxation).
%\cnote{I know that we need to save space, but can we put back the area criterion, maybe as a footnote? I think it helps understand it, and the DeepPoly paper doesn't even mention it the way I did (the CROWN one does).}
In the next subsection, we propose using~\eqref{eqn:cut-family-2} for $\calU_i(z_{1:i-1})$.

Scalar bounds satisfying \eqref{eqn:scalarValid} for the $i$-th pre-activation variable
can be computed by letting $\calC^{U,i}\rbra{z_{1:(i-1)}}=\sum\nolimits_{j=1}^{i-1} w_{i,j}z_j + b_{i}$ and then setting $\hat{L}_i=-B\rbra{\calC^{L,i},i-1}$ and $\hat{U}_i=B\rbra{\calC^{U,i},i-1}$. Therefore, to reach a final bound for $\eta = N$, we can iteratively compute $\hat{L}_i$ and $\hat{U}_i$ for $i = m+1,\ldots,N$ by solving~\eqref{eqn:relaxed-problem} each time, since each of these problems requires only affine bounding functions up to intermediate neuron $i-1$. See Algorithm~\ref{alg:full} in \appendixref{propagationalgappendix} for details.

\subsection{Our contribution: Tighter bounds by dynamically updating bounding functions}\label{sec:new_prop_algorithm}

\begin{figure}[t]
\centering
\begin{subfigure}[t]{.3\linewidth}
\centering
\includegraphics[width=\textwidth]{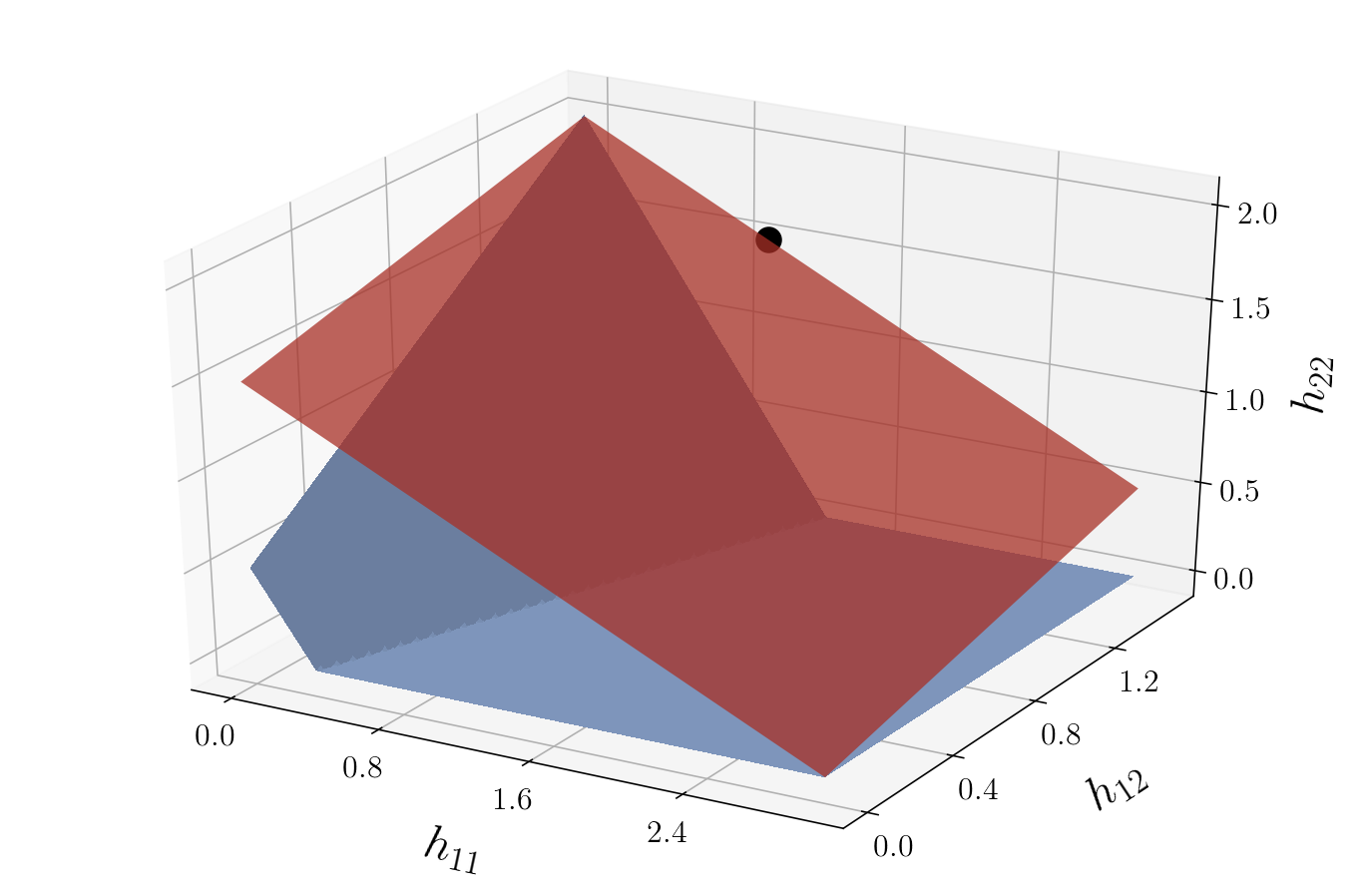}\caption{An inequality from the $\Delta$-relaxation.}
\end{subfigure}
\begin{subfigure}[t]{.3\linewidth}
\centering
\includegraphics[width=\textwidth]{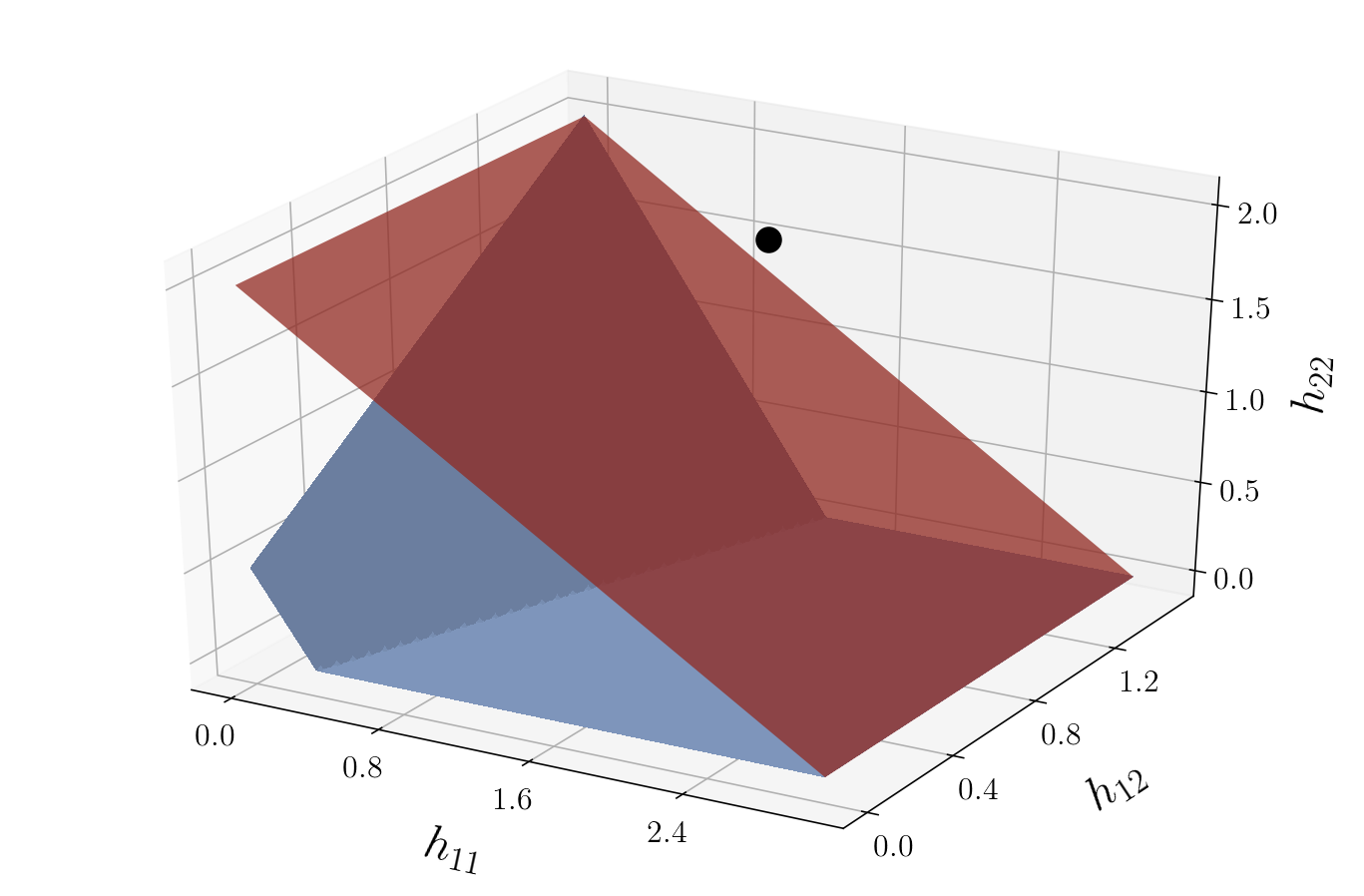}\caption{An inequality from~\eqref{eqn:cut-family-2}.}
\end{subfigure}
\begin{subfigure}[t]{.3\linewidth}
\centering
\includegraphics[width=\textwidth]{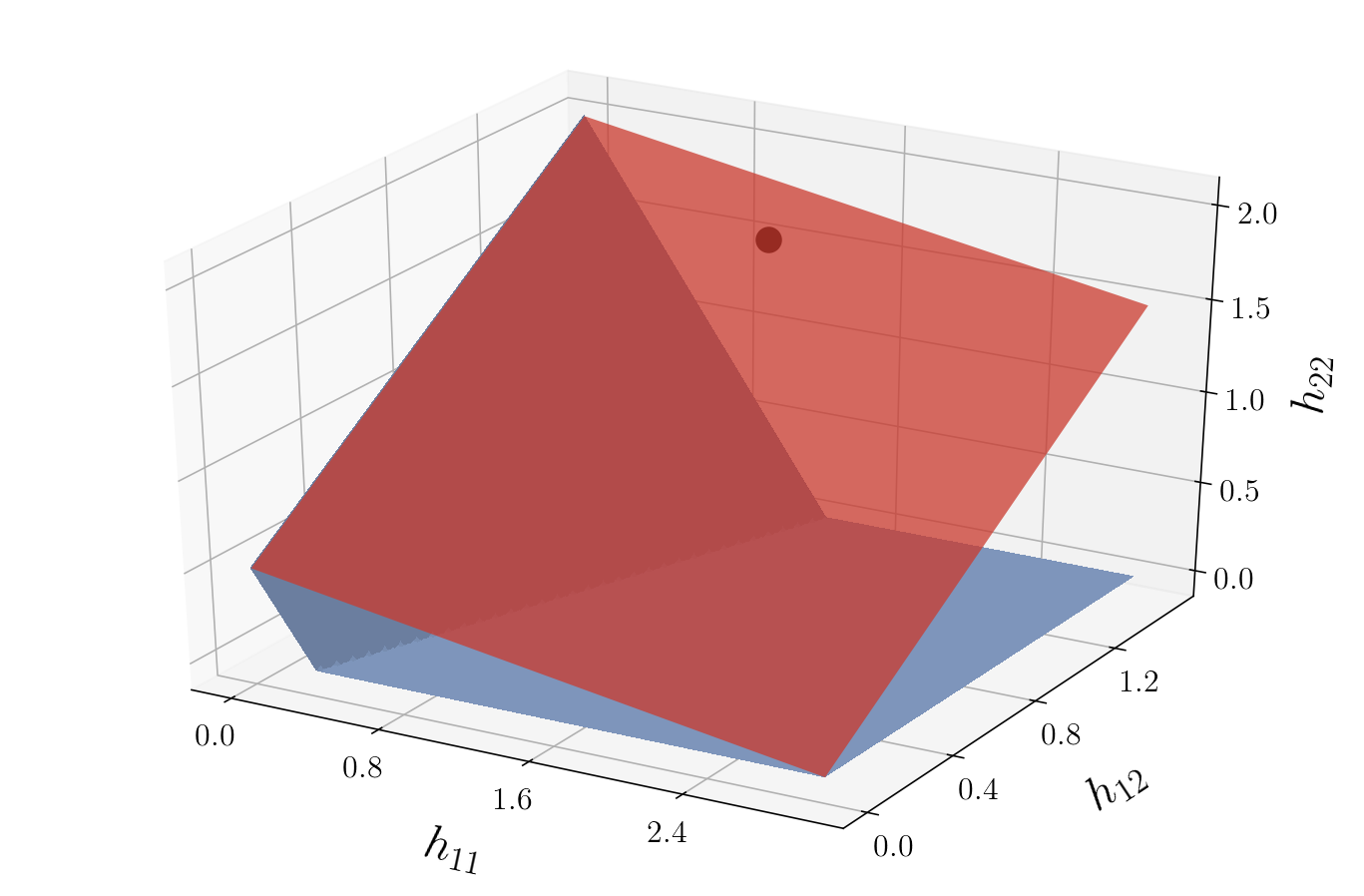}\caption{Another inequality from~\eqref{eqn:cut-family-2}.}
\end{subfigure}
\caption{Possible choices of upper bounding functions $\mathcal{U}_i$ for a single ReLU. The black point depicts a solution $z_{1:N}$ that we would like to separate (projected to the input-output space of the ReLU), which is cut off by the inequality in (b). A full example involving these particular inequalities can be found in Appendix~\ref{app:fastc2v_example}.}\label{fig:fastc2v_example_inequalities_main}
\end{figure}

In Theorem~\ref{thm:convex-hull} we have derived a family of inequalities, \eqref{eqn:cut-family-2}, which can be applied to yield valid upper bounding affine functions for each intermediate neuron in a network. As there may be exponentially many such inequalities, it is not clear \emph{a priori} which to select as input to the algorithm from Section~\ref{sec:generic_framework}. Therefore, we present a simple iterative scheme in which we apply a small number of solves of \eqref{eqn:relaxed-problem}, incrementally updating the set of affine bounding functions used at each iteration.

Our goal is to update the upper bounding function $\calU_i$ with one of the inequalities from \eqref{eqn:cut-family-2} as illustrated in Figure~\ref{fig:fastc2v_example_inequalities_main} via the separation procedure of \autoref{lemma:general_separation}, which requires an optimal solution $z_{1:N}$ for \eqref{eqn:relaxed-problem}. However, the backward pass of the propagation algorithm described in Section~\ref{sec:generic_framework} only computes the optimal value $B\rbra{\calC,\eta}$ and a partial solution $z_{1:m}$. For this reason, we first extend the propagation algorithm with a forward pass that completes the partial solution $z_{1:m}$ by propagating the values for $z_{m+1},\ldots,z_N$ through the network. This propagation uses the same affine bounding functions from \eqref{eqn:relaxedConstr} that were used to eliminate variables in the backward pass. For more details, see Algorithm~\ref*{alg:forwards} in \appendixref{propagationalgappendix}.

% \cnote{Proposal: The forward pass is only needed for our algorithm, especially now that we do not phrase things in a way that the propagation algorithm gives the optimal solution. Maybe we should move this to 4.3. As a bounding method like DeepPoly, we don't need to implement the forward pass.} While this backwards pass is sufficient if one is only interested in a bound, we may also recover a full optimal solution, which is required for the new algorithm proposed later in this paper. After an optimal $z_{1:m}$ is computed, optimal values for the intermediate variables $z_{m+1},\ldots,z_N$ are propagated forward (Algorithm~\ref{alg:forwards}) through the network, using boolean memos $\texttt{ub\_used}_i$
% %\jpvnote{Are ``boolean memos'' a standard term?}
% left from the backwards pass
% indicating whether the upper bound (as opposed to the lower bound) was binding at neuron $i$.

In essence, our complete dynamic algorithm initializes with a set of bounding functions (e.g.\ from \FastLin{} or \DeepPoly{}), applies a backward pass to solve the bounding problem, and then a forward pass to reconstruct the full solution. It then takes that full solution, and at each intermediate neuron $i$ applies the separation procedure of \autoref{lemma:general_separation} to produce an inequality from the family \eqref{eqn:cut-family-2}. If this inequality is violated, it replaces the upper bounding function $\calU_i$ with this inequality from \eqref{eqn:cut-family-2}. We repeat for as many iterations as desired and take the best bound produced across all iterations. In this way, we use separation to help us select from a large family just one inequality that will (hopefully) be most beneficial for improving the bound. For more details, see Algorithm~\ref*{alg:iterative} in \appendixref{propagationalgappendix}.

% Note that the bounds may not necessarily monotonically improve with the iteration count, and leave it as future work to develop more sophisticated algorithms for deciding which inequalities to swap.

%%%%%%%%%%%%%%%%%%%%%%%%%%%%%%%%%%%%%%%%%%%%%%%%%%%%%%%%%%%%%%%%%%%%%%%%%%%%%%%%%%
%%%%%%%%%%%%%%%%%%%%%%%%%%%%%%%%%%%%%%%%%%%%%%%%%%%%%%%%%%%%%%%%%%%%%%%%%%%%%%%%%%

\section{Computational experiments}\label{sec:computational}

\subsection{Computational setup}

We evaluate two methods: the propagation-based algorithm from Section~\ref{sec:new_prop_algorithm} and a method based on partially solving the LP from Theorem~\ref{thm:convex-hull} by treating the inequalities~\eqref{eqn:cut-family-2} as cutting planes, i.e.\ inequalities that are dynamically added to tighten a relaxation. To focus on the benefit of incorporating the inequalities \eqref{eqn:cut-family-2} into verification algorithms, we implement simple versions of the algorithms, devoid of extraneous features and fine-tuning. We name this framework ``Cut-to-Verify'' (C2V), and the propagation-based and LP-based algorithms \FastCtV{} and \OptCtV{}, respectively. See \url{https://github.com/google-research/tf-opt} for the implementation.

The overall framework in both methods is the same: we compute scalar bounds for the pre-activation variables of all neurons as we move forward in the network, using those bounds to produce the subsequent affine bounding functions and LP formulations as discussed in Section~\ref{ssec:selecting-bounding-functions}. Below, we describe the bounds computation for each individual neuron.

\paragraph{Propagation-based algorithm\ (\FastCtV{}).} We implement the algorithm described in Section~\ref{sec:new_prop_algorithm}, using the initial affine bounding functions $\{\calL_i, \calU_i\}_{i=m+1}^N$ from \DeepPoly{}~\cite{singh2019abstract} and \texttt{CROWN-Ada}~\cite{zhang2018efficient}, as described in Section~\ref{sec:generic_framework}.\footnote{Our framework supports initializing from the \FastLin{} inequalities as well, but it has been observed that the inequalities from \DeepPoly{} perform better computationally.} In this implementation, we run a single iteration of the algorithm.

%More specifically, we run \DeepPoly{}, recover an optimal solution, then for each neuron we find the most violated upper bounding inequality with respect to this solution (Proposition~\ref{lemma:general_separation}) and swap if violated, and finally rerun the propagation to produce the final bound.
%\cnote{If we need space, maybe the specific description here is redundant or could be simplified.}
% We use the tightest bound between the \DeepPoly{} bound and second one generated.

\paragraph{LP-based algorithm\ (\OptCtV{}).}
Each bound is generated by solving a series of LPs where our upper bounding inequalities are dynamically generated and added as cutting planes. We start with the standard $\Delta$-relaxation LP, solve it to optimality, and then for every neuron preceding the one we are bounding, we add the most violated inequality with respect to the LP optimum by solving~\eqref{separationproblem}. This can be repeated multiple times. In this implementation, we perform three rounds of separation. We generate new cuts from scratch for each bound that we compute.

\medskip

In both methods, at each neuron we take the best between the bound produced by the method and the trivial interval arithmetic bound. \appendixref{app:implementation_details} contains other implementation details.

We compare each of our novel algorithms against their natural baselines: \DeepPoly{} for our propagation-based method, and the standard $\Delta$-relaxation LP for our cutting plane method. Our implementation of \DeepPoly{} is slightly different from the one in~\cite{singh2019abstract} in that we take the best of interval arithmetic and the result of \DeepPoly{} at each neuron. Moreover, our implementation is sequential, even though operations in the same layer could be parallelized (for each of the algorithms implemented in this work). The LP method simply solves the $\Delta$-relaxation LP to generate bounds at each neuron. In addition, we compare them with \RefineZono{}~\cite{singh2019boosting} and \kPoly{}~\cite{singh2019beyond}, two state-of-the-art incomplete verification methods.

% Results from these methods are taken directly from~\cite{singh2019beyond}, including solving time.

% \jnote{Small nit: Stylize \RefineZono{}/\kPoly{} the same as DeepPoly, and use this typesetting in tables as well.}\cnote{I did this, but I'm actually not a fan of this typesetting in general. I think it doesn't look good aesthetically.} \jnote{I'm fine if you want to do away with the special typesetting, but we should be consistent however we decide.}

%Appendix~\ref{app:implementation_details} contains further details on the implementation.

\paragraph{Verification problem.} We consider the following verification problem: given a correctly labeled target image, certify that the neural network returns the same label for each input within $L_{\infty}$-distance at most $\epsilon$ of that target image. More precisely, given an image $\hat{x} \in [0,1]^m$ correctly labeled as $t$, a neural network where $f_k(x)$ returns its logit for class $k \in K$, and a distance $\epsilon > 0$, the image $\hat{x}$ is verified to be robust if $\max_{x \in [\hat{L}, \hat{U}]} \max_{k \in K} \{ f_k(x) - f_t(x)\} < 0$, where $\hat{L}_i = \max\{0, \hat{x}_i - \epsilon\}$ and $\hat{U}_i = \min\{1, \hat{x}_i + \epsilon\}$ for all $i = 1, \ldots, m$. For propagation-based methods, the inner $\max$ term can be handled by computing bounds for $f_k(x) - f_t(x)$ for every class $k \neq t$ and checking if the maximum bound is negative, although we only need to compute pre-activation bounds throughout the network once. For LP-based methods, this inner term can be incorporated directly into the model.

% Handling the inner $\max$ term can be done by computing bounds for $f_k(x) - f_t(x)$ for every class $k \neq t$ and checking if the maximum bound is negative.

To facilitate the comparison with existing algorithms, our experimental setup closely follows that of Singh et al.~\cite{singh2019beyond}. We experiment on a subset of trained neural networks from the publicly available ERAN dataset~\cite{eran_benchmark}. We examine the following networks: the fully connected ReLU networks 6x100 ($\epsilon = 0.026$), 9x100 ($\epsilon = 0.026$), 6x200 ($\epsilon = 0.015$), 9x200 ($\epsilon = 0.015$), all trained on MNIST without adversarial training; the ReLU convolutional networks ConvSmall for MNIST ($\epsilon = 0.12$), with 3 layers and trained without adversarial training; the ReLU network ConvBig for MNIST ($\epsilon = 0.3$), with 6 layers and trained with DiffAI; and the ReLU network ConvSmall for CIFAR-10 ($\epsilon = 2/255$), with 3 layers and trained with PGD. These $\epsilon$ values are the ones used in~\cite{singh2019beyond} and they are cited as being challenging. For more details on these networks, see Appendix~\ref{app:implementation_details} or~\cite{eran_benchmark}. For each network, we verify the first 1000 images from their respective test sets except those that are incorrectly classified.

Due to numerical issues with LPs, we zero out small values in the convolutional networks for the LP-based algorithms (see Appendix~\ref{app:implementation_details}). Other than this, we do not perform any tuning according to instance. Our implementation is in C++ and we perform our experiments in an Intel Xeon E5-2699 2.3Ghz machine with 128GB of RAM. We use Gurobi 8.1 as the LP solver, take advantage of incremental solves, and set the LP algorithm to dual simplex, as we find it to be faster for these LPs in practice. This means that our LP implementation does not run in polynomial time, even though it could in theory by using a different LP algorithm (see Corollary~\ref{polylemma:ref}).

To contextualize the results, we include an upper bound on the number of verifiable images. This is computed with a standard implementation of gradient descent with learning rate 0.01 and 20 steps. For each image, we take 100 random initializations (10 for MNIST ConvBig and CIFAR-10 ConvSmall) and check if the adversarial example produced by gradient descent is valid. The upper bound is the number of images for which we were unable to produce an adversarial example.

\subsection{Computational results}

\begin{table}[tpb]

    \caption{Number of images verified and average verification times per image for a set of networks from the ERAN dataset~\cite{eran_benchmark}. ConvS and ConvB denote ConvSmall and ConvBig, respectively. Results for \RefineZono{} and \kPoly{} are taken from~\cite{singh2019beyond}.}
    \label{tab:computational_results}
    \centering

    % \begin{tabular}{llrrrrrrr}
    %     \toprule
    %     & & \multicolumn{6}{c}{MNIST} & \multicolumn{1}{c}{CIFAR-10} \\\cmidrule(lr){3-8}\cmidrule(lr){9-9}
    %     Method    & & 6x100 & 9x100 & 6x200 & 9x200 & ConvS & ConvB & ConvS\\\midrule
    %     \DeepPoly  & \#verified &   160 &   182 &   292 &  259 &  162 &   652 &   359 \\
    %               & Time (s)   &   0.7 &   1.4 &   2.4 &  5.6 &  0.9 &   7.4 &   2.8 \\[0.25em]
    %     \DeepPoly{} + cuts & \#verified &   279 &   269 &   477 &  392 &  274 &   691 &   390 \\
    %               & Time (s)   &   8.7 &  19.3 &  25.2 & 57.2 &  5.3 &  16.3 &  15.3 \\[0.25em]
    %     LP        & \#verified &   201 &   223 &   344 &  307 &  242 &   743 &   373 \\
    %               & Time (s)   &  50.5 & 385.6 & 218.2 &    - & 23.1 &  24.9 &  38.1 \\[0.25em]
    %     LP + cuts & \#verified &   429 &   384 &   601 &  551 &  436 &   771 &   398 \\
    %               & Time (s)   & 136.7 & 759.4 & 402.8 &    - & 55.4 & 102.0 & 104.8 \\\midrule
    %     \RefineZono~\cite{singh2019boosting} & \#verified & 312 & 304 & 341 & 316 & 179 & 648 & 347 \\
    %     \kPoly~\cite{singh2019beyond}        & \#verified & 441 & 369 & 574 & 506 & 347 & 736 & 399 \\
    %              \bottomrule
    % \end{tabular}\vspace{1em}
    \begin{tabular}{llrrrrrrr}
        \toprule
        & & \multicolumn{6}{c}{MNIST} & \multicolumn{1}{c}{CIFAR-10} \\\cmidrule(lr){3-8}\cmidrule(lr){9-9}
        Method    & & 6x100 & 9x100 & 6x200 & 9x200 & ConvS & ConvB & ConvS\\\midrule
        \multirow{2}{*}{\DeepPoly}  & \#verified &   160 &   182 &   292 &  259 &  162 &   652 &   359 \\
                  & Time (s)   &   0.7 &   1.4 &   2.4 &  5.6 &  0.9 &   7.4 &   2.8 \\[0.25em]
        \multirow{2}{*}{\FastCtV} & \#verified &   279 &   269 &   477 &  392 &  274 &   691 &   390 \\
                  & Time (s)   &   8.7 &  19.3 &  25.2 & 57.2 &  5.3 &  16.3 &  15.3 \\[0.25em]
        \multirow{2}{*}{\texttt{LP}}        & \#verified &   201 &   223 &   344 &  307 &  242 &   743 &   373 \\
                  & Time (s)   &  50.5 & 385.6 & 218.2 & 2824.7 & 23.1 &  24.9 &  38.1 \\[0.25em]
        \multirow{2}{*}{\OptCtV} & \#verified &   429 &   384 &   601 &  528 &  436 &   771 &   398 \\
                  & Time (s)   & 136.7 & 759.4 & 402.8 & 3450.7 & 55.4 & 102.0 & 104.8 \\\midrule
        \RefineZono & \#verified & 312 & 304 & 341 & 316 & 179 & 648 & 347 \\
        \kPoly        & \#verified & 441 & 369 & 574 & 506 & 347 & 736 & 399 \\\midrule
        Upper bound &\#verified & 842 & 820 & 901 & 911 & 746 & 831 & 482\\
        % \multicolumn{2}{l}{Upper bound on \#verified} & 842 & 820 & 901 & 911 & 746 & 831 & 482\\
                 \bottomrule
    \end{tabular}\vspace{1em}

    % \cnote{TODO: Find out if we are allowed to change font size of table. (If anyone wants to check this, feel free to do that.)} \jnote{They state that changing fonts/margins can be grounds for rejection, but does not explicitly say this holds for tables as well. But, removing the citations for RefineZono/kPoly makes the normal sized table fit in the margins. Unfortunately I can't do ``\DeepPoly{} + cuts'' as it makes the table too wide...}}
\end{table}

The computational results in Table~\ref{tab:computational_results} demonstrate that adding the upper bounding inequalities proposed in this paper significantly improves the number of images verified compared to their base counterparts. While on average \FastCtV{} spends an order of magnitude more time than \DeepPoly{} to achieve this, it still takes below one minute on average for all instances examined. \OptCtV{} takes approximately 1.2 to 2.7 times of a pure LP method to generate bounds in the problems examined. Since we start from the LP basis of the previous solve, subsequent LPs after adding cuts are generally faster.

Interestingly, we observe that \FastCtV{} verifies more images than LP in almost all cases in much less time. This indicates that, in practice, a two-inequality relaxation with a single (carefully chosen) tighter inequality from \eqref{eqn:cut-family-2} can often be stronger than the three-inequality $\Delta$-relaxation.

When compared to other state-of-the-art incomplete verifiers, we observe that for the larger networks, improving \DeepPoly{} with our inequalities enables it to verify more images than \RefineZono{}~\cite{singh2019boosting}, a highly fine-tuned method that combines MIP, LP, and \DeepPoly{}, but without the expensive computation and the parameter tuning needs from \RefineZono{}. In addition, we find that adding our inequalities to LPs is competitive with \kPoly{}, surpassing it for some of the networks.
%\cedit{Although using LPs for all neurons may not scale very well, in practice we may treat it as a heavier method to be used only on earlier layers.}\cnote{I want to say that the large timings with LP for 9x200 are ok, see if you can improve this last sentence.}
While the timings in~\cite{singh2019beyond} may not be comparable to our timings, the authors report average times for \RefineZono{} and \kPoly{} within the range of 4 to 15 minutes and 40 seconds to 8 minutes, respectively.

% Appendix~\ref{app:supplementary} contains plots that illustrate the number of images verified given a budget of time for the results in Table~\ref{tab:computational_results}. Furthermore, we include in Appendix~\ref{app:supplementary} additional computational results where we consider multiple trained networks and distances from the base image ($\epsilon$).

Appendix~\ref{app:supplementary} contains additional computational results where we consider multiple trained networks and distances $\epsilon$ from the base image.

\paragraph{Outlook: Our methods as subroutines}The scope of our computational experiments is to demonstrate the practicality and strength of our full-neuron relaxation applied to simple methods, rather than to engineer full-blown state-of-the-art verification methods. Towards such a goal, we remark that both \RefineZono{} and \kPoly{} rely on LP and other faster verification methods as building blocks to a stronger method, and either of our methods could be plugged into them. For example, we could consider a hybrid approach similar to \RefineZono{} that uses the stronger, but slower  \OptCtV{} in the earlier layers (where it can have the most impact) and then switches to  \FastCtV{}, which could result in verification times closer to \FastCtV{} with an effectiveness closer to \OptCtV{}. In addition, \kPoly{} exploits the correlation between multiple neurons in the same layer, whereas our approach does not, suggesting that there is room to combine approaches. Finally, we note that solving time can be controlled with a more careful management of the inequalities to be added and parallelizing bound computation of neurons in the same layer.

% Our results suggest that replacing any LP solved in these methods by \FastCtV{} may improve both verification capability and computational performance.

\newpage
\section*{Broader Impact}

% Referees will be asked to check whether the Broader Impact is adequately addressed.

% self-driving cars, computer vision

% Verification is an important problem which ensures that the neural networks trained are robust to errors and adversarial attacks to the system.
% to make Internet decisions does not change its label of an image under small adversarial perturbations of the input, often in a short time frame.

In a world where deep learning is impacting our lives in ever more tangible ways, verification is an essential task to ensure that these black box systems behave as we expect them to. Our fast, simple algorithms have the potential to make a positive impact by verifying a larger number of inputs to be robust within a short time-frame, often required in several applications.
Of course, we should be cautious that although our algorithms provide a mathematical certificate of an instance being robust, failure to use the system correctly, such as modeling the verification problem in a way that does not reflect real-world concerns, can still lead to unreliable neural networks. We also highlight that our version of the verification problem, while accurately capturing a reasonable formal specification of robustness, clearly does not perfectly coincide with ``robustness'' as may be used in a colloquial sense. Therefore, we highlight the importance of understanding the strengths and limitations of the mathematical model of verification used, so that a false sense of complacency does not set in.

% This work does not present any foreseeable societal consequence.

% \section*{{TODOS}}
% \TODO{Check running time of propagation algorithms. Do we want to state overall complexity for running this for every neuron?}

\bibliographystyle{plain}
\bibliography{master_editable.bib}
\vfill
\pagebreak
%\TODO{Remove before submission. Must go in separate file.}
\appendix
\begin{section}{Proof of Theorem~\ref{thm:convex-hull}}\label{theooneproof}

We provide two different proofs for Theorem~\ref{thm:convex-hull}. The first proof is based on classical machinery from submodular and convex optimization. The alternative proof is based on projecting down an extended MIP formulation built using disjunctive programming. We include them both since each proof provides unique insights on our new relaxation.

We first state a lemma that is used by both proofs for bounding the number of inequalities. Notationally, we will take $\bfzero^d$ and $\bfone^d$ as the length $d$ vectors of all zeros and all ones, respectively, and $e(i)\in \bbR^n$ for $i\in \sidx{n}$ as the $i$-th canonical unit vector, where the length will be implicitly determined by the context of its use. In some cases it will be convenient to refer to the $0$-th canonical vector $e(0)=\bfzero^n$.

\begin{lemma}\label{hyperplanelemma}
If $u,v\in \{0,1\}^\effd$ are such that $\sum_{i=1}^\effd \abs{u_i-v_i}=1$, then we say that $uv$ is an edge of $[0,1]^\effd$. For $w\in \bbR^\effd$ and $b\in \bbR$, we say the hyperplane $w\cdot x+b=0$ cuts edge $uv$  of $[0,1]^\effd$ if  $w\cdot u+b <0$ and $w\cdot v+b \geq 0$. If $b<0$ and $\sum_{i=1}^\effd w_i +b \geq 0$, then the number of edges cut by one such hyperplane is lower-bounded by $\effd$ and upper-bounded by $\lceil \frac{1}{2} \effd \rceil \binom{\effd}{\lceil \frac{1}{2} \effd \rceil}$.
% \wnote{Krunal pointed out that the phrase "ranges tightly" is confusing, and could imply that \textbf{all} values in that range are possible.  I tweaked it a bit.}
% \jpvnote{Looks good, tweaked the last statement a bit more}
For each bound there exists a hyperplane with $w\in \bbR^\effd_+$ such that the bounds holds at equality.
\end{lemma}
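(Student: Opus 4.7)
The plan is to deduce both bounds from a single double-counting identity. First, I would reduce to the case $w \in \bbR_+^\effd$: for each coordinate $i$ with $w_i < 0$, apply the substitution $x_i \mapsto 1 - x_i$, which is a graph automorphism of $\{0,1\}^\effd$ and so preserves the cut count. After all such flips the new weights are $|w_i|$ and the new bias is $b' = b + \sum_{i : w_i < 0} w_i$. A short check shows $b' \leq b < 0$ and $\sum_i |w_i| + b' = b + \sum_i w_i + \sum_{i : w_i < 0} |w_i| \geq 0$, so the hypotheses on $(b, \sum_i w_i + b)$ are preserved; hence WLOG $w \in \bbR_+^\effd$.

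The heart of the argument is a counting identity via monotone paths. A monotone path in $\{0,1\}^\effd$ from $\bfzero$ to $\bfone$ is a permutation of $\sidx{\effd}$ specifying the order in which the bits are flipped from $0$ to $1$; there are $\effd!$ of them. With $w \geq 0$, the function $w \cdot x + b$ is nondecreasing along any such path; since it is strictly negative at $\bfzero$ and non-negative at $\bfone$, each monotone path crosses exactly one cut edge. Counting from the edge side, a cut edge $uv$ with $v = u + e(i)$ (so $u$ has the smaller popcount) lies on exactly $|u|!\,(\effd - |u| - 1)!$ monotone paths. Letting $a_k$ denote the number of cut edges whose lower-popcount endpoint has exactly $k$ ones, double-counting gives
\[
\sum_{k=0}^{\effd - 1} a_k\, k!\, (\effd - k - 1)! \;=\; \effd! \qquad \Longleftrightarrow \qquad \sum_{k=0}^{\effd - 1} \frac{a_k}{\binom{\effd - 1}{k}} \;=\; \effd.
\]

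Both bounds then fall out of this single identity. Since each $\binom{\effd - 1}{k} \geq 1$, the total cut count satisfies $\sum_k a_k \geq \sum_k a_k/\binom{\effd-1}{k} = \effd$; since each $\binom{\effd - 1}{k} \leq \binom{\effd - 1}{\lfloor (\effd - 1)/2 \rfloor}$, it also satisfies $\sum_k a_k \leq \effd \binom{\effd - 1}{\lfloor (\effd - 1)/2 \rfloor}$, and a routine binomial manipulation (multiply and divide by $\lceil \effd/2 \rceil$, using $\lfloor(\effd-1)/2\rfloor = \lceil \effd/2 \rceil - 1$) rewrites this as $\lceil \effd/2 \rceil \binom{\effd}{\lceil \effd/2 \rceil}$. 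For tightness, both extremes are realized with $w = \bfone \in \bbR_+^\effd$: $b \in (-1,0)$ gives $N = \{\bfzero\}$ and cuts exactly the $\effd$ edges incident to $\bfzero$, while $b = -\lceil \effd/2 \rceil + \tfrac{1}{2}$ cuts exactly the edges between the layers of popcount $\lceil \effd/2 \rceil - 1$ and $\lceil \effd/2 \rceil$, matching the upper bound. The only delicate step I anticipate is the sign-flip reduction, where both $b < 0$ and $\sum_i w_i + b \geq 0$ must be verified to persist simultaneously; after that everything is a monotone-path double count followed by a binomial identity.
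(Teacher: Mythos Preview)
Your proof is correct, but it follows a genuinely different route from the paper's. The paper does not reduce to $w\ge 0$; for the lower bound it simply observes that, since $w\cdot\bfzero+b<0$ and $w\cdot\bfone+b\ge 0$, the set of cut edges is an $s$--$t$ cut in the hypercube graph with $s=\bfzero$ and $t=\bfone$, and then invokes Menger's theorem (there are $\effd$ edge-disjoint $\bfzero$--$\bfone$ paths). For the upper bound the paper does not give a self-contained argument at all: it cites O'Neil's 1971 result on the maximum number of hypercube edges cut by a hyperplane. Your monotone-path double count is essentially the standard LYM-type proof of that result, and it has the pleasant feature of yielding both the lower and upper bounds from the single identity $\sum_k a_k/\binom{\effd-1}{k}=\effd$, with no appeal to Menger or to external references. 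The trade-off is that your argument needs the sign-flip reduction to $w\ge 0$ (which you handle correctly), whereas the paper's Menger step works for arbitrary $w$ directly. Your tightness examples agree with the paper's up to inessential shifts in $b$ (the paper uses $b=-1/2$ and $b=-\lceil\effd/2\rceil$; these produce the same cut sets as your choices).
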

\begin{proof}
Consider the graph $G=(V,E)$ with $V=\{0,1\}^\effd$ and $E$ equal to the edges of $[0,1]^\effd$. Let $s = \bfzero^d$ and $t = \bfone^d$. Then  $w\cdot s+b<0$ and $w\cdot t +b \geq 0$, so the edges of $[0,1]^\effd$ cut by the hyperplane form a $s-t$ graph-cut in $G$ (note that this does \emph{not} have the same meaning as the definition of cut for an edge given in the Lemma statement). Hence, the number of edges cut by the hyperplane are lower bounded by $\effd$ (e.g.\ follows by Menger's theorem
by noting that there are $\effd$ disjoint paths in $G$ from $s$ to $t$). An example of a hyperplane that achieves this lower bound is $w=\bfone^d$ and $b=-1/2$.

The tight upper bound follows from a simple adaptation of the proof of a result from \cite{o1971hyperplane}.\footnote{See also \cite[Theorem 7.9]{anthony2001discrete}: the proof of \cite[Lemma 2]{o1971hyperplane} can be readily adapted to accommodate non-strict, rather than strict, linear inequalities.} An example of a hyperplane that achieves this upper bound is $w=\bfone^d$ and $b=-\lceil \frac{1}{2} \effd \rceil$.
\end{proof}

\begin{subsection}{A proof using submodularity}
We start with an example.
\subsubsection{Illustrative example and definitions}
\begin{example}\label{simpleexample}
Consider the set from \eqref{Sdef:eq} for $n=2$, $w = (1,1)$, $b = (-1.5)$, $L=(0,0)$ and $U=(0,0)$, which corresponds to
\[ S=\Set{\rbra{x,y}\in [0,1]^2\times \bbR| y=g(x)}\]
for $g(x)\defeq\max\Set{0,x_1+x_2-1.5}$. Set $S$ is depicted in \autoref{fig:relunew} and we can check that $\Conv(S)$ is described by
\begin{subequations}\label{simpleexample:eq}
\begin{align}
    x&\in [0,1]^2\\
    y&\geq g(x)\label{relunew:eq1}\\
    y\leq r_1(x),\quad  y&\leq r_2(x)\label{relunew:eq2}
\end{align}
\end{subequations}
for $r_1(x)\defeq 0.5x_2$ and $r_2(x)\defeq 0.5x_1$. Inequality \eqref{relunew:eq1} is obtained by relaxing the equation describing $S$ to an inequality and using the fact that $g(x)$ is convex. Functions $r_1$ and $r_2$ from inequality \eqref{relunew:eq2} are depicted in Figures~\ref{fig:relunew:a} and \ref{fig:relunew:b}, respectively. These functions can be obtained through the following interpolation procedure.

First, consider the subdivision of $[0,1]^2$ into the triangles $\mathbf{T}_{1}$ and $\mathbf{T}_{2}$ depicted in Figures~\ref{fig:relunew:a} and \ref{fig:relunew:b}, respectively. As depicted  \autoref{fig:relunew:a}, the vertices of   $\mathbf{T}_{1}$ are obtained by incrementally adding the canonical vectors to $(0,0)$, in order, until we obtain  $(1,1)$. That is, the vertices of  $\mathbf{T}_{1}$ are $e(0)=(0,0)$, $e(0)+e(1)=e(1)=(0,1)$ and $e(0)+e(1)+e(2)=(1,1)$.  In contrast, as depicted in \autoref{fig:relunew:b}, the vertices of   $\mathbf{T}_{2}$ are obtained by incrementally adding the canonical vectors in reverse order (i.e. the vertices of  $\mathbf{T}_{2}$ are $e(0)=(0,0)$, $e(0)+e(2)=(1,0)$ and $e(0)+e(2)+e(1)=(1,1)$).

Second,  we obtain $r_1$ and $r_2$ by constructing the unique affine interpolation of $g$ on $\mathbf{T}_{1}$ and $\mathbf{T}_{2}$, respectively. That is, as depicted in  \autoref{fig:relunew:a},  $r_1(x)=\alpha^1\cdot x+ \beta_1 $, where $\alpha^1\in \bbR^2$ and $\beta_1\in \bbR$ are such that $r_1$ is equal to $g$ for the three vertices $(0,0)$, $(1,0)$ and $(1,1)$ of $\mathbf{T}_{1}$:
\[\begin{pmatrix}0 & 0\\ 1 & 0\\ 1 & 1\end{pmatrix}\alpha^1 + \beta_1 =\begin{pmatrix} g(0,0)\\g(1,0)\\g(1,1) \end{pmatrix} = \begin{pmatrix} 0\\0\\0.5 \end{pmatrix}.\]
The unique solution of this system is $\alpha^1=(0,0.5)$ and $\beta_1=0$, which yields $r_1(x)=0.5x_2$. Function $r_2$ is obtained by a similar procedure using the vertices of $\mathbf{T}_{2}$ as illustrated  \autoref{fig:relunew:b}.
\end{example}

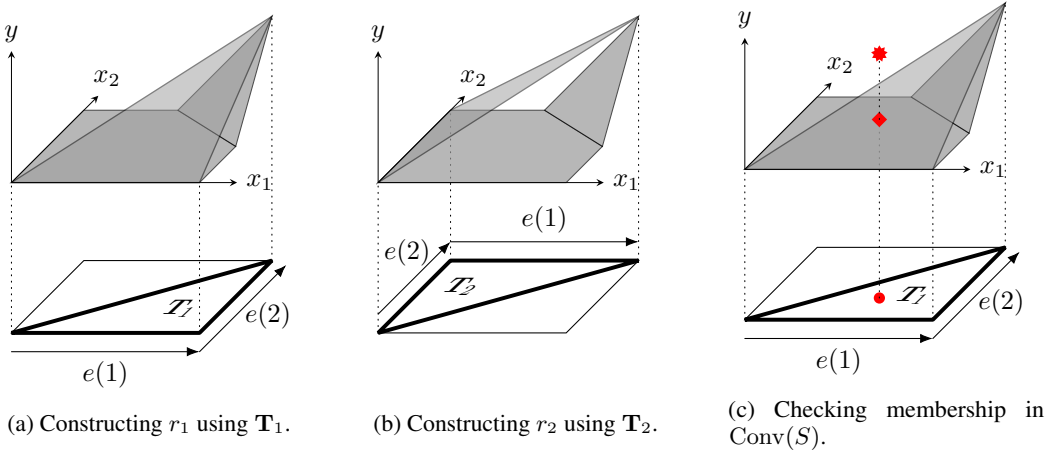
\begin{figure}[htpb]
    \centering
          \begin{subfigure}{.3\linewidth}
      \begin{tikzpicture}[yzx]
        % \draw [->, dashed, line width=1] (0,0,0) -- (1.2,0,0);
        % \draw [->, dashed, line width=1] (0,0,0) -- (0,1.2,0);
        % \draw [->, dashed, line width=1] (0,0,0) -- (0,0,0.7);
        \draw [->, >=stealth, line width=0.25] (0,0,0) -- (1.2,0,0);
        \draw [->, >=stealth, line width=0.25] (0,0,0) -- (0,1.2,0);
        \draw [->, >=stealth, line width=0.25] (0,0,0) -- (0,0,0.7);
        \node[above right] at (1.2,-.075,0) {$x_2$};
        \node[right] at (0,1.2,0) {$x_1$};
        \node[above] at (0,0,.7) {$y$};
        \coordinate (LL) at (0,0,0);
        \coordinate (UL) at (1,0,0);
        \coordinate (LU) at (0,1,0);
        \coordinate (UU) at (1,1,0.5);
        \coordinate (UM) at (1,0.5,0);
        \coordinate (MU) at (0.5,1,0);

        \draw [fill=gray!80,opacity=0.7] (LL) -- (UL) -- (UM) -- (MU) -- (LU) -- cycle;
        \draw [fill=gray!80,opacity=0.7] (UU) -- (UM) -- (MU) -- cycle;

        \def\gridlevel{-0.8}
        \def\arrowlevel{-0.9}
        \def\arrowuplevel{-0.7}
        \draw (0,0,\gridlevel) -- (0,1,\gridlevel) -- (1,1,\gridlevel) -- (1,0,\gridlevel) -- cycle;
        % T1
        \draw [fill=gray!80,opacity=0.5,line width=0.5pt] (LL) -- (LU) -- (UU) -- cycle;
        \draw[line width=1.5pt,line join=bevel] (0,0,\gridlevel) -- (0,1,\gridlevel) -- (1,1,\gridlevel)  -- cycle;
        \draw[dotted,line cap=round] (0,0,\gridlevel) -- (0,0,0);
        \draw[dotted,line cap=round] (0,1,\gridlevel) -- (0,1,0);
        \draw[dotted,line cap=round] (1,1,\gridlevel) -- (1,1,0.5);
        \draw[line cap=round,-Latex] (0,0,\arrowlevel) -- (0,1,\arrowlevel);
        \draw[line cap=round,-Latex] (0,1,\arrowlevel) -- (1.2,1,\arrowlevel);
        \node[below] at (0,0.5,\arrowlevel) {$e(1)$};
        \node[right] at (0.5,1,\arrowlevel) {$e(2)$};
        \node[yscale=0.8,xslant=0.85] (A) at (0.6,0.67,\arrowlevel) {  $\mathbf{T}_{1}$};
        %T2
        % \draw (LL) -- (UL) -- (UU) -- cycle;
        % \draw[line width=1.5pt,line join=bevel] (0,0,\gridlevel) -- (1,0,\gridlevel) -- (1,1,\gridlevel)  -- cycle;
        % \draw[dotted,line cap=round] (0,0,\gridlevel) -- (0,0,0);
        % \draw[dotted,line cap=round] (1,0,\gridlevel) -- (1,0,0);
        % \draw[dotted,line cap=round] (1,1,\gridlevel) -- (1,1,0.5);
        % \draw[line cap=round,-Latex] (0,0,\arrowuplevel) -- (1,0,\arrowuplevel);
        % \draw[line cap=round,-Latex] (1,0,\arrowuplevel) -- (1,1,\arrowuplevel);
        % \node[left] at (0.85,0,\arrowuplevel) {$e(2)$};
        % \node[above] at (1,0.5,\arrowuplevel) {$e(1)$};
        % \node[rotate=3,xslant=1.1] (A) at (0.9,0.1,\arrowlevel) {  $\mathbf{T}_{\pi_2}$};

        % \begin{scope}[canvas is yz plane at x=0]
        %     \draw [fill] (1,1/4) circle [radius=.025];
        % \end{scope}

    \end{tikzpicture}
      \captionsetup{font=small}
        \caption{Constructing $r_1$ using $\mathbf{T}_{1}$.  \label{fig:relunew:a}}
      \end{subfigure}
 \hspace{0.2in}
          \begin{subfigure}{.3\linewidth}
      \begin{tikzpicture}[yzx]
        % \draw [->, dashed, line width=1] (0,0,0) -- (1.2,0,0);
        % \draw [->, dashed, line width=1] (0,0,0) -- (0,1.2,0);
        % \draw [->, dashed, line width=1] (0,0,0) -- (0,0,0.7);
        \draw [->, >=stealth, line width=0.25] (0,0,0) -- (1.2,0,0);
        \draw [->, >=stealth, line width=0.25] (0,0,0) -- (0,1.2,0);
        \draw [->, >=stealth, line width=0.25] (0,0,0) -- (0,0,0.7);
        \node[above right] at (1.2,-.075,0) {$x_2$};
        \node[right] at (0,1.2,0) {$x_1$};
        \node[above] at (0,0,.7) {$y$};
        \coordinate (LL) at (0,0,0);
        \coordinate (UL) at (1,0,0);
        \coordinate (LU) at (0,1,0);
        \coordinate (UU) at (1,1,0.5);
        \coordinate (UM) at (1,0.5,0);
        \coordinate (MU) at (0.5,1,0);

        \def\gridlevel{-0.8}
        \def\arrowlevel{-0.9}
        \def\arrowuplevel{-0.7}
        \draw (0,0,\gridlevel) -- (0,1,\gridlevel) -- (1,1,\gridlevel) -- (1,0,\gridlevel) -- cycle;
        % T1
        % \draw (LL) -- (LU) -- (UU) -- cycle;
        % \draw[line width=1.5pt,line join=bevel] (0,0,\gridlevel) -- (0,1,\gridlevel) -- (1,1,\gridlevel)  -- cycle;
        % \draw[dotted,line cap=round] (0,0,\gridlevel) -- (0,0,0);
        % \draw[dotted,line cap=round] (0,1,\gridlevel) -- (0,1,0);
        % \draw[dotted,line cap=round] (1,1,\gridlevel) -- (1,1,0.5);
        % \draw[line cap=round,-Latex] (0,0,\arrowlevel) -- (0,1,\arrowlevel);
        % \draw[line cap=round,-Latex] (0,1,\arrowlevel) -- (1.2,1,\arrowlevel);
         \node[below,white] at (0,0.5,\arrowlevel) {$e(1)$};
         \node[right,white] at (0.5,1,\arrowlevel) {$e(2)$};
        % \node[rotate=3,xslant=1.1] (A) at (0.6,0.65,\arrowlevel) {  $\mathbf{T}_{\pi_1}$};
        %T2

        \draw[line width=1.5pt,line join=bevel] (0,0,\gridlevel) -- (1,0,\gridlevel) -- (1,1,\gridlevel)  -- cycle;
        \draw[dotted,line cap=round] (0,0,\gridlevel) -- (0,0,0);
        \draw[dotted,line cap=round] (1,0,\gridlevel) -- (1,0,0);
        \draw[dotted,line cap=round] (1,1,\gridlevel) -- (1,1,0.5);
        \draw[line cap=round,-Latex] (0,0,\arrowuplevel) -- (1,0,\arrowuplevel);
        \draw[line cap=round,-Latex] (1,0,\arrowuplevel) -- (1,1,\arrowuplevel);
        \node[left] at (0.85,0,\arrowuplevel) {$e(2)$};
        \node[above] at (1,0.5,\arrowuplevel) {$e(1)$};
        \node[yscale=0.8,xslant=0.85] (A) at (0.9,0.1,\arrowlevel) {  $\mathbf{T}_{2}$};
          \draw [fill=gray!80,opacity=0.7] (LL) -- (UL) -- (UM) -- (MU) -- (LU) -- cycle;
        \draw [fill=gray!80,opacity=0.7] (UU) -- (UM) -- (MU) -- cycle;
         \draw[fill=gray!80,opacity=0.5,line width=0.5pt] (LL) -- (UL) -- (UU) -- cycle;
        % \begin{scope}[canvas is yz plane at x=0]
        %     \draw [fill] (1,1/4) circle [radius=.025];
        % \end{scope}
    \end{tikzpicture}
     \captionsetup{font=small}
        \caption{Constructing $r_2$ using $\mathbf{T}_{2}$.  \label{fig:relunew:b}}
      \end{subfigure}
 \hspace{0.2in}
          \begin{subfigure}{.3\linewidth}
     \begin{tikzpicture}[yzx]
        % \draw [->, dashed, line width=1] (0,0,0) -- (1.2,0,0);
        % \draw [->, dashed, line width=1] (0,0,0) -- (0,1.2,0);
        % \draw [->, dashed, line width=1] (0,0,0) -- (0,0,0.7);
        \draw [->, >=stealth, line width=0.25] (0,0,0) -- (1.2,0,0);
        \draw [->, >=stealth, line width=0.25] (0,0,0) -- (0,1.2,0);
        \draw [->, >=stealth, line width=0.25] (0,0,0) -- (0,0,0.7);
        \node[above right] at (1.2,-.075,0) {$x_2$};
        \node[right] at (0,1.2,0) {$x_1$};
        \node[above] at (0,0,.7) {$y$};
        \coordinate (LL) at (0,0,0);
        \coordinate (UL) at (1,0,0);
        \coordinate (LU) at (0,1,0);
        \coordinate (UU) at (1,1,0.5);
        \coordinate (UM) at (1,0.5,0);
        \coordinate (MU) at (0.5,1,0);

        \def\gridlevel{-0.8}
        \def\arrowlevel{-0.9}
        \def\arrowuplevel{-0.7}

        \def\sepxone{0.6}
        \def\sepxtwo{0.3}
         \node [
    fill,
    red,
    circle,
    inner sep=1.5pt
] (tangent)  at (\sepxtwo,\sepxone,\gridlevel) {};
         \draw[dotted,line cap=round] (\sepxtwo,\sepxone,\sepxtwo*0.5) -- (\sepxtwo,\sepxone,\gridlevel);

        \draw [fill=gray!80,opacity=0.7] (LL) -- (UL) -- (UM) -- (MU) -- (LU) -- cycle;
        \draw [fill=gray!80,opacity=0.7] (UU) -- (UM) -- (MU) -- cycle;
        \draw (0,0,\gridlevel) -- (0,1,\gridlevel) -- (1,1,\gridlevel) -- (1,0,\gridlevel) -- cycle;
        % T1
        \draw[fill=gray!80,opacity=0.5,line width=0.5pt]  (LL) -- (LU) -- (UU) -- cycle;
         \node [
    fill,
    red,
    diamond,
    inner sep=1.5pt
] (tangent)  at (\sepxtwo,\sepxone,\sepxtwo*0.5) {};
    \draw[dotted,line cap=round] (\sepxtwo,\sepxone,\sepxtwo*0.5) -- (\sepxtwo,\sepxone,0.5);
    \node [
    fill,
    red,
    star,star points=9,,
    inner sep=1.5pt
] (tangent)  at (\sepxtwo,\sepxone,0.5) {};
        \draw[line width=1.5pt,line join=bevel] (0,0,\gridlevel) -- (0,1,\gridlevel) -- (1,1,\gridlevel)  -- cycle;
        \draw[dotted,line cap=round] (0,0,\gridlevel) -- (0,0,0);
        \draw[dotted,line cap=round] (0,1,\gridlevel) -- (0,1,0);
        \draw[dotted,line cap=round] (1,1,\gridlevel) -- (1,1,0.5);
        \draw[line cap=round,-Latex] (0,0,\arrowlevel) -- (0,1,\arrowlevel);
        \draw[line cap=round,-Latex] (0,1,\arrowlevel) -- (1.2,1,\arrowlevel);
        \node[below] at (0,0.5,\arrowlevel) {$e(1)$};
        \node[right] at (0.5,1,\arrowlevel) {$e(2)$};
        \node[yscale=0.8,xslant=0.85] (A) at (0.6,0.67,\arrowlevel) {  $\mathbf{T}_{1}$};

        %T2
        % \draw (LL) -- (UL) -- (UU) -- cycle;
        % \draw[line width=1.5pt,line join=bevel] (0,0,\gridlevel) -- (1,0,\gridlevel) -- (1,1,\gridlevel)  -- cycle;
        % \draw[dotted,line cap=round] (0,0,\gridlevel) -- (0,0,0);
        % \draw[dotted,line cap=round] (1,0,\gridlevel) -- (1,0,0);
        % \draw[dotted,line cap=round] (1,1,\gridlevel) -- (1,1,0.5);
        % \draw[line cap=round,-Latex] (0,0,\arrowuplevel) -- (1,0,\arrowuplevel);
        % \draw[line cap=round,-Latex] (1,0,\arrowuplevel) -- (1,1,\arrowuplevel);
        % \node[left] at (0.85,0,\arrowuplevel) {$e(2)$};
        % \node[above] at (1,0.5,\arrowuplevel) {$e(1)$};
        % \node[rotate=3,xslant=1.1] (A) at (0.9,0.1,\arrowlevel) {  $\mathbf{T}_{\pi_2}$};

        % \begin{scope}[canvas is yz plane at x=0]
        %     \draw [fill] (1,1/4) circle [radius=.025];
        % \end{scope}
    \end{tikzpicture}
     \captionsetup{font=small}
        \caption{Checking membership in $\Conv(S)$.  \label{fig:relunew:c}}
        \end{subfigure}
    \caption{Using interpolation on triangles to construct $\Conv(S)$ for \autoref{simpleexample}.
    }
    % \textbf{(Left)} A depiction of all post-activation values feasible for the $\Delta$-relaxation, and \text{(Right)} the convex hull of all feasible post-activation values. On both sides we mark a point that lies in the left set, but not the right set.}
    \label{fig:relunew}
\end{figure}

The subdivision of $[0,1]^2$ into  $\mathbf{T}_{1}$ and  $\mathbf{T}_{2}$ can be extended to $[0,1]^n$ by considering all $n!$ possible orders in which we can obtain $\bfone^n$ from $\bfzero^n$ by incrementally adding the canonical vectors. We represent these orders using the set of all permutations of $\sidx{n}$. In Example~\ref{simpleexample}, this set is given by $\mathcal{S}_2\defeq \Set{\pi_1,\pi_2}$, where  $\pi_i:\sidx{2}\to\sidx{2}$ for each $i\in\sidx{2}$, $\pi_1(1)=1$, $\pi_1(2)=2$, $\pi_2(1)=2$, and $\pi_2(2)=1$. Then,  under the notation of \autoref{kuhntriand:def} below, we have $\mathbf{T}_{1}=\mathbf{T}_{\pi_1}$ and $\mathbf{T}_{2}=\mathbf{T}_{\pi_2}$.
%We begin with a few definitions. First, we consider the following standard subdivision of $[0,1]^n$.
%The first step to prove \autoref{thm:convex-hull} is to extend to $[0,1]^n$ the subdivision of $[0,1]$ into triangles $\mathbf{T}_{\pi_1}$ and   $\mathbf{T}_{\pi_2}$ from \wedit{Example~\ref{example:simple1}}.
\begin{definition}\label{kuhntriand:def}
Let $\mathcal{S}_n$ be the set of all permutations of $\sidx{n}$. Then for every $\pi\in\mathcal{S}_n$, we define $\mathbf{V}_{\pi}=\Set{\sum_{i=0}^j e\rbra{\pi\rbra{i}}}_{j=0}^n$ and
\begin{equation}\label{triangle}
\mathbf{T}_{\pi}=\conv\rbra{\mathbf{V}_{\pi}}=\Set{x\in \bbR^n| 1\geq x_{\pi(1)}\geq x_{\pi(2)}\geq \ldots\geq x_{\pi(n)}\geq 0 }.
\end{equation}
\end{definition}
The collection of simplices $\set{\mathbf{T}_{\pi}}_{\pi \in \mathcal{S}_n}$, whose union is $[0,1]^n$, is known as the \emph{Kuhn triangulation} of $[0,1]^n$ \cite{todd1976computation}.

The number of simplices in the Kuhn triangulation is exponential, so an $n$-dimensional generalization of \autoref{simpleexample} could contain an exponential number of inequalities in \eqref{relunew:eq2}. Fortunately, as illustrated in the following example, the characterization of $\mathbf{T}_{\pi}$ in the right hand side of \eqref{triangle} allow us to easily filter for relevant inequalities.

\begin{namedexample}[\autoref{simpleexample} continued]
Consider the point $\rbra{x^*,y^*}=(0.6,0.3,0.5)$ depicted as a red star in  \autoref{fig:relunew:c}. To check if $\rbra{x^*,y^*}\in \Conv(S)$ we can first verify that $y^*\geq g\rbra{x^*}$ and $x^*\in [0,1]^2$. It then only remains to check that $\rbra{x^*,y^*}$ satisfies all inequalities in \eqref{relunew:eq2}. However, we can instead exploit the fact that if $x\in \mathbf{T}_{1}$, then $r_1(x)=\min\Set{r_1(x),r_2(x)}$. As illustrated in  \autoref{fig:relunew:c} we can use the fact that $x_1^* \geq x_2^*$ to conclude that $x^*$ (depicted as a red circle in \autoref{fig:relunew:c}) belongs to $ \mathbf{T}_{1}$. Finally, we can check that  $r_1\rbra{x^*}=0.3<0.5$ to conclude that $\rbra{x^*,y^*}\notin \Conv(S)$ (Point $\rbra{x^*,0.3}$ is depicted as a red diamond in \autoref{fig:relunew:c}).
\end{namedexample}

To show that the ideas in \autoref{simpleexample} can be generalized, we will exploit properties of submodular functions. For that we connect functions from $[0,1]^n$ with set-functions. We pick one specific connection that simplifies the statement and proof of \autoref{thm:convex-hull}.
\begin{definition} \label{def:H}
A set-function $H:2^{\llbracket n \rrbracket}\to\bbR$ is submodular if
\[H(S)+H(T)\geq H(S\cup T) + H(S\cap T)\quad \forall S, T\subseteq \sidx{n}.\]
For any $h:[0,1]^n\to\bbR$ we define the set-function  $H:2^{\llbracket n \rrbracket}\to\bbR$  given by  $H(I)=h\rbra{\sum_{i\notin I} e(i)}$ for each $I\subseteq \sidx{n}$. In particular, $H\rbra{\sidx{n}}=h\rbra{\bfzero^n}$ and $H\rbra{\emptyset}=h\rbra{\bfone^n}$. In general, for any function from $[0,1]^n$ to $\bbR$ defined as a lower case letter (e.g. $h$), we let the associated set-function be defined by the upper case version of this letter (e.g. $H$).
\end{definition}
%\wnote{Not clear here that 0 is the all-0's vector and 1 is the all-1's vector?}
%\jpvnote{Added a somewhat ugly fix, but it avoids additional notation}
%\jpvnote{More ugly notation added for $f$ to $F$ used below. Maybe we should think of a better notation.}

% \cnote{The sentence below needs some clarification on what $S$ we mean. There is an $S$ above and an $S$ below and neither of them are the one you mean.}
% \jpvnote{added comment and link to $S$}

\subsubsection{Proof of \autoref{thm:convex-hull}}

Our proof has three steps. First, we formalize the idea in Example~\ref{simpleexample} for arbitrary dimensions (Theorem~\ref{thm:convex-hull:alt:simple}). Then, we reduce the number of inequalities by characterizing which of the simplices $\mathbf{T}_{\pi}$ lead to identical inequalities (Lemma~\ref{lem:convex-hull:alt:interpolation}). Finally, to complete the proof of Theorem~\ref{thm:convex-hull}, we describe the explicit form of these inequalities.

Corollary 3.14 in \cite{tawarmalani2013explicit} gives us a precise description of $\Conv\rbra{Q}$ where $Q$ is a normalized version of $S$ from \eqref{Sdef:eq} that also considers any convex activation function. We include a submodularity-based proof of the corollary for completeness, adapted to our context.

\begin{restatable}{theorem}{convsimple} \label{thm:convex-hull:alt:simple}
Let $w\in \bbR^n_+$ and $b\in \bbR$,
%$b<0$ be such that $w\cdot 1+b>0$.
 $f(x)=w\cdot x+b$, $\rho:\bbR\to\bbR$ be any convex function, $g(x)=\rho(f(x))$ and
 $Q = \Set{(x,y)\in[0,1]^n\times \bbR |  y = \rho(f(x))}$.

%  \cnote{It's probably because I'm not used to interpolations, what does ``in $\mathbf{T}_{\pi}$'' indicate? I'm a bit confused with the way this is phrased. Wouldn't you typically take an interpolation of points?}
% \jpvnote{An affine interpolation ``on'' a simplex (switched the ``in'') is the interpolation on  the vertices of the simplex. I double checked and \cite{bach2013learning} uses this terminology at  the end of section 3.1}

For each $\pi\in \mathcal{S}_n$ let $r_{\pi}:[0,1]^n \to \bbR$ be the unique affine interpolation\footnote{
Such an affine interpolation exists and is unique because $\mathbf{V}_{\pi}$ is a set of $n+1$ affinely independent points.
} of $g$ on $\mathbf{T}_{\pi}$ such that $r_{\pi}\rbra{v}=g(v)$ for all $v\in\mathbf{V}_{\pi}$.
Then $\Conv(Q)$ equals the set of all $(x,y)\in\bbR^n\times\bbR$  satisfying
\begin{subequations} \label{eqn:relu-conv-hull:simple}
\begin{align}
y &\geq g(x)\label{eqn:lb1:simple} \\
y &\leq r_{\pi}\rbra{x} \quad &\forall \pi \in \mathcal{S}_n \label{newfacets} \\
0 &\leq x_i \leq 1 \quad &\forall i \in \llbracket n \rrbracket
\end{align}
\end{subequations}

% To separate a point $\rbra{x^*,y^*} \in [0,1]^n \times \mathbb{R}$ we first check $y^*\geq g\rbra{x^*}$. Then we find the permutation $\pi$ such that  $x^*\in \mathbf{T}_{\pi}$ and check $y^*\leq r_{\pi}\rbra{x^*}$. Finding $\pi$ can be done by sorting.
\end{restatable}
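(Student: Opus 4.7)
The plan is to establish set equality by proving the two inclusions separately.

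For $\Conv(Q) \subseteq$ the polyhedron defined by \eqref{eqn:relu-conv-hull:simple}, I would first note that $g(x) = \rho(w \cdot x + b)$ is convex on $[0,1]^n$ (being the composition of a convex function and an affine function), so the constraint $y \geq g(x)$ is satisfied by every point of $Q$, hence by $\Conv(Q)$. The real work is the validity of each upper-bounding inequality $y \leq r_\pi(x)$. Since $r_\pi$ is affine and the points of $Q$ have the form $(x, g(x))$, validity reduces to showing $r_\pi(x) \geq g(x)$ for every $x \in [0,1]^n$. Because $g$ is convex on the cube and $r_\pi$ is affine, and because $[0,1]^n$ is the convex hull of $\{0,1\}^n$, this further reduces to the vertex-wise claim $r_\pi(v) \geq g(v)$ for every $v \in \{0,1\}^n$.

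For $v \in \mathbf{V}_\pi$, the interpolation condition gives equality. For the remaining vertices $v = \mathbf{1}_I$, I would use the explicit form of $r_\pi$: the interpolation conditions $r_\pi(v_j) = \rho(W_j + b)$ with $W_j = \sum_{i=1}^{j} w_{\pi(i)}$ force the coordinate coefficients to be $\alpha_{\pi(j)} = \rho(W_j + b) - \rho(W_{j-1}+b)$ and constant term $\rho(b)$. Writing the indices of $I$ in $\pi$-order as $\pi(j_1), \ldots, \pi(j_k)$, the inequality $r_\pi(\mathbf{1}_I) \geq g(\mathbf{1}_I)$ becomes a telescoping comparison of secant increments of $\rho$ over intervals that are translates of one another. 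The hypothesis $w \geq 0$ guarantees these intervals are arranged so that the secant associated with $r_\pi$ sits to the right of the secant associated with $g$, and then convexity of $\rho$ (equivalently, monotonicity of $\rho'$, or that $\rho(s+D) - \rho(s)$ is nondecreasing in $s$ for each fixed $D \geq 0$) yields the desired bound. This is exactly the submodularity of the set function $G$ from \autoref{def:H} (in the paper's convention), so an alternative route is to cite Lovász extension theory: when $G$ is submodular, its Lovász extension $\hat G$ is convex and equals $r_\pi$ on $\mathbf{T}_\pi$, and one then verifies $\hat G \geq g$ on $[0,1]^n$ using that each $r_\pi \geq g$ on $\mathbf{T}_\pi$ by convexity of $g$.

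For the reverse inclusion, I would use the Kuhn triangulation property $\bigcup_{\pi \in \mathcal{S}_n} \mathbf{T}_\pi = [0,1]^n$. Given a feasible $(x, y)$, pick $\pi^*$ with $x \in \mathbf{T}_{\pi^*}$ and write $x = \sum_j \lambda_j v_j$ as a convex combination of the affinely independent vertices of $\mathbf{T}_{\pi^*}$. The interpolation identity then gives $(x, r_{\pi^*}(x)) = \sum_j \lambda_j (v_j, g(v_j)) \in \Conv(Q)$, since each $(v_j, g(v_j)) \in Q$. Because $(x, g(x)) \in Q$ and the feasibility constraints imply $g(x) \leq y \leq r_{\pi^*}(x)$, the point $(x, y)$ is a convex combination of $(x, g(x))$ and $(x, r_{\pi^*}(x))$, both in $\Conv(Q)$. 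Therefore $(x, y) \in \Conv(Q)$, completing the containment.

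The main obstacle is the vertex inequality $r_\pi(v) \geq g(v)$ for $v \notin \mathbf{V}_\pi$, which is where the hypothesis $w \geq 0$ is genuinely needed and where the submodularity machinery (or the explicit convex-secant bookkeeping) comes in. Everything else is a routine combination of convexity of $g$, affinity of $r_\pi$, and the Kuhn subdivision of the cube.
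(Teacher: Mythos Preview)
Your proof is correct, and it covers the same mathematical ground as the paper's, but the organization is genuinely different. The paper argues via envelope theory: it sets $h=-g$, invokes the identity $\Conv(Q)=\{(x,y):\underline h(x)\le -y\le \overline h(x)\}$, observes $\overline h=h$ by concavity, and then identifies $\underline h$ as the Lov\'asz extension of the submodular set function $H$, citing standard references for each step. Your proposal instead proves the two inclusions directly: for $\Conv(Q)\subseteq\{\eqref{eqn:relu-conv-hull:simple}\}$ you reduce validity of $y\le r_\pi(x)$ to the vertex inequality $r_\pi(\mathbf 1_I)\ge g(\mathbf 1_I)$ and verify it with an explicit telescoping/increasing-differences argument (this is exactly supermodularity of $I\mapsto\rho(\sum_{i\in I}w_i+b)$ unpacked by hand, and your observation that $W_{j_l-1}\ge D_{l-1}$ because $w\ge 0$ is precisely where the sign hypothesis enters); for the reverse inclusion you pick the Kuhn simplex containing $x$ and exhibit $(x,y)$ as a convex combination of $(x,g(x))\in Q$ and $(x,r_{\pi^*}(x))\in\Conv(Q)$. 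What you gain is a self-contained proof that does not cite the envelope characterization or Lov\'asz-extension convexity as black boxes; what the paper gains is brevity and a clear conceptual link to the submodularity literature. You already note the Lov\'asz route as an alternative, which is essentially the paper's argument, so you are aware of both.
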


%\wnote{
%One fact we seem to be using is that: a piecewise affine function can only be convex if it equals the \emph{maximum} of its pieces.
%Is this supposed to be obvious?  I'm having trouble seeing why but I'm probably missing something.
%}
%\jpvnote{Not sure if it is true for every PWL function (there may be some problematic boundary cases). For our case a proof by contradiction goes as follows. Suppose there is a $\pi$ and $x\in [0,1]^n\setminus \mathbf{T}_{\pi}$ such that
%$\underline{h}(x)< -r_{\pi}(x)$. Let $u=(1/(n+1))\sum_{v\in \mathbf{V}_{\pi}}v$  and $w$ be a point in the open line segment between $u$ and $x$ that also belongs to $\mathbf{T}_{\pi}$ (which exists because $u$ is in the interior of  $\mathbf{T}_{\pi}$ and x is not in $\mathbf{T}_{\pi}$). Then there exist $\lambda\in (0,1)$ such that $w=\lambda u+(1-\lambda) x$. Also, because $\underline{h}(u)= -r_{\pi}(u) $, $\underline{h}(w)= -r_{\pi}(w) $ and $r_{\pi}(u)$ is an affine function we have $\underline{h}(w)= -r_{\pi}(w) = - \lambda r_{\pi}(u) - (1-\lambda)  r_{\pi}(x)=  \lambda \underline{h}(u) - (1-\lambda) r_{\pi}(x)>\lambda \underline{h}(u) + (1-\lambda) \underline{h}(x)$, which contradicts convexity of $\underline{h}$.}

\begin{proof}
Let $h:[0,1]^n\to \bbR$ be such that $h(x)=-g(x)=-\rho(f(x))$ for all $x\in [0,1]^n$. In addition, let $\underline{h}$ and $\overline{h}$  respectively be the convex and concave envelopes of $h$ (i.e.\ the largest convex underestimator of $h$, which is well-defined because the pointwise maximum of convex functions lying below $h$ is a convex function, and the smallest concave overestimator of $h$, which is similarly well-defined). Then $Q = \Set{(x,y)\in[0,1]^n\times \bbR |  - y = h(x)}$ and $\Conv\rbra{Q}=\Set{(x,y)\in[0,1]^n\times \bbR |  \underline{h}(x) \leq -y \leq \overline{h}(x)}$ (e.g. \cite[Proposition B.1]{salman2019convex}). Function $h$ is concave and hence $\overline{h}=h$, so it only remains to describe $\underline{h}$.

To describe $\underline{h}$, we define a set function $H$ based on $h$ (see Definition~\ref{def:H}), which is submodular
because $-\rho$ is concave and $w$ is non-negative (e.g. see \cite[Section 3.1]{ahmed2011maximizing}).
Submodularity allows us to describe the lower convex envelope of the continuous function $h$ through the \emph{Lov{\'a}sz extension} of the set function $H$.
This extension is the piecewise affine function from $[0,1]^n$ to $\bbR$ defined over the pieces $\{\mathbf{T}_{\pi}:\pi\in\mathcal{S}_n\}$, which equals $\max_{\pi\in\mathcal{S}_n}(-r_{\pi})$ by convexity (e.g. see \cite{bach2013learning} for further details).
Therefore the constraint required for $\conv(Q)$ is $\underline{h}(x)\le-y\Longleftrightarrow y\le\min_{\pi\in\mathcal{S}_n}r_{\pi}(x)$ which completes the derivation of inequalities~\eqref{newfacets} in the theorem statement.
\end{proof}

Note that even though there are exponentially many inequalities in~\eqref{newfacets},
the tightest constraint on $y$ at any given point $x\in[0,1]^n$ can be efficiently found,
by sorting the coordinates of $x$ to find the simplex $\mathbf{T}_{\pi}$ to which $x$ belongs.
Moreover,
going from $[0,1]^n$ to $[L,U]$ and eliminating the sign restriction on $w$ can be achieved with standard variable transformations (e.g. see the comments before \cite[Corollary 3.14]{tawarmalani2013explicit}).

Before demonstrating the variable transformations,
we first further refine \autoref{thm:convex-hull:alt:simple} for the case when $\rho$ is equal to the ReLU activation function $\sigma$. In particular, we generally have that each one of the $n!$ inequalities in \eqref{newfacets} is  facet-defining because they hold at equality over the $n+1$ affinely independent points  $\set{\rbra{v,g(v)}}_{v\in\mathbf{V}_{\pi}}$. Hence, they are all needed to describe $\Conv(R)$. However, because it may happen that $r_{\pi}=r_{\pi'}$ for $\pi\neq \pi'$, the number of inequalities in \eqref{newfacets} after removing duplicates may be much smaller. The following lemma shows that this is indeed the case when $\rho$ is equal to the ReLU activation function $\sigma$. The lemma also gives a closed form expression for the interpolating functions $r_{\pi}$ in this case.

% however it could happen that for two different $\pi$ we get the same inequality
% All $n!$ inequalities in \eqref{newfacets} are needed, but we may have $r_{\pi}=r_{\pi'}$ for $\pi\neq \pi'$.

% % \TODO{}(ctjandra) Show that the number of upper bounding facets is the number of edges cut from the hypercube by the hyperplane $\sum_{i=1}^n w_i x_i + b = 0$, or equivalently, the number of vertices of the hypersimplex corresponding to this set. This is tightly bounded by $\lceil \frac{1}{2} n \rceil \binom{n}{\lceil \frac{1}{2} n \rceil}$ (P.E. O'Neil, "Hyperplane cuts of an n-cube"). In particular, we might be able to come up with conditions on $(w,b)$ on when the formulation is small.

\begin{restatable}{lemma}{interpolationlemma}\label{lem:convex-hull:alt:interpolation}
Let $w\in \bbR^n_+$ and $-\sum_{i=1}^n w_i\leq b<0$, $f(x)=w\cdot x+b$, and $g(x)=\sigma(f(x))$.   If
 $\Set{r_{\pi}}_{\pi\in \mathcal{S}_n}$ are  the  affine interpolation functions from \autoref{thm:convex-hull:alt:simple}, then
\[
\Set{\rbra{x,y}\in \bbR^{n+1} | y\leq {r}_{\pi}\rbra{x} \quad \forall \pi \in\mathcal{S}_n}=\Set{\rbra{{x},y}\in \bbR^{n+1} | y\leq {r}_{I,h}\rbra{{x}} \quad \forall \rbra{I,h}\in {\calI}}\]
where
$\calI\defeq\Set{
        (I,h) \in 2^{\llbracket n \rrbracket} \times \llbracket n \rrbracket |
                    F(I) \geq 0,\  F(I\cup\{ h\}) < 0
    }$,
    ${r}_{I,h}\rbra{{x}}\defeq F(I) {x_h} + \sum_{i\in I} {w}_{i} {x}_{i}$, and $F:2^{\sidx{n}}\to \bbR$ is the set-function associated to $f$ as  defined in Definition~\ref{def:H}.

%   Furthermore, if $\effd\defeq\abs{\Set{i\in \sidx{n}| w_i\neq 0}}$, then $\effd \leq \abs{\calI}\leq \lceil \frac{1}{2} \effd \rceil \binom{\effd}{\lceil \frac{1}{2} \effd \rceil}$ and for each of these inequalities (and each $d\in \sidx{n}$) there exist $w\in \bbR^n_+$ and $b\in(-\sum_{i=1}^nw_i,0)$ such that it holds at equality.
\end{restatable}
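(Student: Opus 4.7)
The plan is to compute $r_\pi$ in closed form for each $\pi$ and show it coincides with a specific $r_{I,h}$ with $(I,h)\in\calI$, and conversely to show that every $(I,h)\in\calI$ arises from some $\pi$; this yields the equality of the two inequality systems.

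First I would evaluate $g$ at the vertices of $\mathbf{T}_\pi$. Write $v_j=\sum_{i=0}^{j}e(\pi(i))\in\mathbf{V}_\pi$ and $s_j\defeq w\cdot v_j+b=\sum_{i=1}^{j}w_{\pi(i)}+b$. The hypotheses $w\ge\bfzero$ and $-\sum_i w_i\le b<0$ make $s_0,s_1,\dots,s_n$ a nondecreasing sequence with $s_0<0\le s_n$, so there is a unique $k\in\sidx{n}$ with $s_{k-1}<0\le s_k$. Consequently $g(v_j)=\sigma(s_j)=0$ for $j<k$ and $g(v_j)=s_j$ for $j\ge k$. Next I would derive $r_\pi$ from a telescoping argument: the interpolation conditions $r_\pi(v_j)=g(v_j)$ together with $v_j=v_{j-1}+e(\pi(j))$ force the intercept to equal $g(v_0)=0$ and the coefficient of $x_{\pi(j)}$ to equal $g(v_j)-g(v_{j-1})$. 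Splitting into the three regimes gives coefficient $0$ for $j<k$, coefficient $s_k$ for $j=k$, and coefficient $w_{\pi(j)}$ for $j>k$, hence
\begin{equation*}
r_\pi(x)\;=\;s_k\,x_{\pi(k)}\;+\;\sum_{j>k}w_{\pi(j)}\,x_{\pi(j)}.
\end{equation*}

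Now I would identify the correspondence $\pi\leftrightarrow(I,h)$. Set $h\defeq\pi(k)$ and $I\defeq\{\pi(k+1),\dots,\pi(n)\}$. The complement of $I$ in $\sidx{n}$ is $\{\pi(1),\dots,\pi(k)\}$, so by definition of $F$ we have $F(I)=s_k\ge 0$ and $F(I\cup\{h\})=s_{k-1}<0$, placing $(I,h)\in\calI$; the displayed formula collapses to $r_\pi(x)=F(I)x_h+\sum_{i\in I}w_i x_i=r_{I,h}(x)$. For the reverse inclusion, any $(I,h)\in\calI$ satisfies $h\notin I$ automatically (otherwise $F(I)=F(I\cup\{h\})$ could not straddle zero), so I can construct a permutation $\pi$ that enumerates $I^{c}\setminus\{h\}$ first in any order, then $h$, then $I$ in any order; repeating the above calculation with $k\defeq|I^c|$ shows $r_\pi=r_{I,h}$. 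Both halves together give the claimed equality of halfspace intersections.

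The main obstacle is simply keeping the bookkeeping correct at the activation threshold $k$, where the slope of $r_\pi$ jumps from zero on the ``inactive'' prefix to the genuine weights $w_{\pi(j)}$ on the ``active'' suffix. The sign conditions $w\ge\bfzero$ and $s_0<0\le s_n$ ensure that the transition index is unique and interior to $\sidx{n}$, so no extra case analysis is needed. In particular no further submodular machinery is required beyond what \autoref{thm:convex-hull:alt:simple} already supplies: here each piece $r_\pi$ is simplified individually, and the pruning of duplicates is a direct consequence of the fact that the closed-form depends only on which prefix has been activated and on the single index sitting at the threshold.
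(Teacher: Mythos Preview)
Your proposal is correct and follows essentially the same approach as the paper: both compute $r_\pi$ via the telescoping differences $g(v_j)-g(v_{j-1})$, identify the unique threshold index $k$ where the ReLU activates, and then match the resulting affine function to $r_{I,h}$ with $I=\{\pi(k+1),\dots,\pi(n)\}$ and $h=\pi(k)$. The paper phrases the same computation through the set-function notation $F(I(j))$ and $G(I(j))$, while you work directly with the partial sums $s_j$, but the argument and the reverse construction of $\pi$ from $(I,h)$ are identical in substance.
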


%\wnote{Where is $F(I)$ defined?}
%\jpvnote{Clarified that $F$ comes from $f$ through the ugly notation from Definition~\ref{def:H}}
 \begin{proof}
% If $F(\emptyset) < 0$,
% (b) $f_{\sidx{n}}\rbra{\sidx{n}} \geq 0$, or (c)
Fix  $\pi \in\mathcal{S}_n$ and for each $j\in \sidx{n}$ let $I(j)\defeq\Set{\pi(i)}_{i=j+1}^n$.

% so that for any $h:[0,1]^n\to\bbR$
% \[
% H(I(j))=h\rbra{\sum\nolimits_{i=0}^j e\rbra{\pi\rbra{i}}}=h\rbra{\sum\nolimits_{i=1}^j e\rbra{\pi\rbra{i}}}.
% \]

Then the interpolation condition for ${r}_{\pi}$ given by  $r_{\pi}\rbra{v}=g(v)$ for all $v\in\mathbf{V}_{\pi}$ is equivalent to
\begin{equation}\label{lem:convex-hull:alt:interpolation:eq:int}
    R_{\pi}\rbra{I(j)}=G\rbra{I(j)}\quad \forall j=0,1,\ldots,n
\end{equation}
where $R_{\pi}$ and $G$ are the set-functions associated to $r_{\pi}$ and $g$ as defined in \autoref{def:H}.
For $j=0$, condition \eqref{lem:convex-hull:alt:interpolation:eq:int} implies $r_{\pi}\rbra{\bfzero^n}=R_{\pi}\rbra{\sidx{n}}=G\rbra{\sidx{n}}=g(\bfzero^n)=0$ and hence there exists $\alpha\in \bbR^n$ such that ${r}_{\pi}\rbra{{x}}=\alpha \cdot {x}$ (i.e. ${r}_{\pi}$ is a linear function). For $j\in\sidx{n}$, condition \eqref{lem:convex-hull:alt:interpolation:eq:int} further implies that
\begin{equation}\label{lem:convex-hull:alt:interpolation:eq:alphas}
\alpha_{\pi(j)} = g\rbra{\sum\nolimits_{i=0}^j e\rbra{\pi\rbra{i}}}- g\rbra{\sum\nolimits_{i=0}^{j-1} e\rbra{\pi\rbra{i}}}=G\rbra{I(j)}-G\rbra{I(j-1)} \quad \forall j\in \sidx{n}.
\end{equation}
Now, because $F(\emptyset)=f(\bfone^n)\geq 0$, $w\in \bbR^n_+$, and $b<0$, there exists a unique  $k\in \sidx{n}$ such that $\rbra{I(k),\pi(k)}\in \calI$. Furthermore,   $w\in \bbR^n_+$, $F\rbra{I\rbra{k}\cup\Set{\pi\rbra{k}}}=F\rbra{I\rbra{k-1}}<0$, and $F\rbra{I\rbra{k}}\geq 0$ imply
\begin{subequations}\label{lem:convex-hull:alt:interpolation:eq:alphas12}
\begin{alignat}{3}
F\rbra{I(j)}&<0\quad &\text{ and }\quad G\rbra{I(j)}&=0 &\quad &\forall j=0,\ldots,k-1; \label{lem:convex-hull:alt:interpolation:eq:alphas1}\\
F\rbra{I(j)}&\geq 0\quad &\text{ and }\quad G\rbra{I(j)}&=F\rbra{I(j)} &\quad &\forall j=k,\ldots,n. \label{lem:convex-hull:alt:interpolation:eq:alphas2}
\end{alignat}
\end{subequations}
Equations \eqref{lem:convex-hull:alt:interpolation:eq:alphas} and \eqref{lem:convex-hull:alt:interpolation:eq:alphas1} imply $\alpha_{\pi(j)}=0$ for all $j\in \sidx{k}$ or equivalently $\alpha_i=0$ for all $i\notin I\rbra{k}\cup\Set{\pi\rbra{k}}$. Equations \eqref{lem:convex-hull:alt:interpolation:eq:alphas} and \eqref{lem:convex-hull:alt:interpolation:eq:alphas12} imply $\alpha_{\pi\rbra{k}}= G\rbra{I\rbra{k}}=F\rbra{I\rbra{k}}$. Finally, equations \eqref{lem:convex-hull:alt:interpolation:eq:alphas} and \eqref{lem:convex-hull:alt:interpolation:eq:alphas2} imply that $\alpha_{\pi(j)}=w_{\pi\rbra{j}}$ for all $j=k+1,\ldots,n$ or equivalently $\alpha_i=w_i$ for all $i\in I$. Hence, $r_\pi=r_{I\rbra{k},\pi\rbra{k}}$. The lemma follows by noting that for any $\rbra{I,h}\in {\calI}$ there exists at least one $\pi \in\mathcal{S}_n$ such that $\rbra{I\rbra{k},\pi\rbra{k}}=\rbra{I,h}$.
\end{proof}

% implies that $F\rbra{I(j)}<0$ and hence $G\rbra{I(j)}=0$ for all $j\in \set{0}\cup \sidx{k-1}$.

% For each $j\in \sidx{n}$ let $\rbra{I(j),h(j)}\defeq\rbra{\Set{e\rbra{\pi(i)}}_{i=j+1}^n,\pi(j)}$ so that $f_{\sidx{n}}(I(j))=\sum_{i=j+1}^n w_{\pi(i)}+b$ and $f_{\sidx{n}}(I(j)\cup\set{h(j)})=\sum_{i=j}^n w_{\pi(i)}+b$. Then, $w\in \bbR^n_>$ implies that either (a) $f_{\sidx{n}}(\emptyset) < 0$, (b) $f_{\sidx{n}}\rbra{\sidx{n}} \geq 0$, or (c) there exist a unique  $j\in \sidx{n}$ such that $\rbra{I(j),h(j)}\in \calJ$.

Finally, we obtain the proof of Theorem~\ref{thm:convex-hull} recalling that $f(x)= w\cdot x+b$ for $w\in \bbR^n$ and $b\in \bbR$, and $S = \Set{(x,y)\in [L,U]\times \bbR |  y = \sigma(f(x)) }$ for  $L,U\in\bbR^n$  such that $L<U$.

\generaltheo*
\begin{proof}
Recalling that $\calJ\defeq\Set{
        (I,h) \in 2^{\llbracket n \rrbracket} \times \llbracket n \rrbracket |
                    \ell(I) \geq  0,\quad
            \ell(I\cup\{ h\}) < 0,\quad w_i\neq 0\quad \forall i\in I}$ we can assume without loss of generality that $w_i\neq 0$ for all $i\in\sidx{n}$ and hence $d=n$ (Indices $i$ with $w_i=0$ do not affect \eqref{eqn:cut-family-2} or the definition of $\calJ$ and the only inequalities for $S$ or $\Conv(S)$ in which a given $x_i$ appears are $L_i\leq x_i\leq U_i$).

For the first case, the result follows because $f(x)<0$ for all $x\in[L,U]$ and hence $g(x)=0$ for all $x\in[L,U]$.

For the second case, the result follows because $f(x)\geq0$ for all $x\in[L,U]$ and hence $g(x)=f(x)$ for all $x\in[L,U]$.
%Let $g(x)=\sigma_{\mathit{RELU}}(f(x))$. Consider

For the third case, recall that  $\breve{L}_i = \begin{cases} L_i & w_i \geq 0 \\ U_i & \text{o.w.} \end{cases}$ and $\breve{U}_i = \begin{cases} U_i & w_i \geq 0 \\ L_i & \text{o.w.} \end{cases}$, and consider the affine variable transformation given by
\begin{equation}\label{thm:convex-hull:eq:conversion1}
    \breve{x}_i\defeq \frac{x_i-\breve{L}_i}{\breve{U}_i-\breve{L}_i} \quad \text{ and } \quad x_i= (\breve{U}_i-\breve{L}_i) \breve{x}_i +\breve{L}_i\quad \forall i\in \sidx{n}.
\end{equation}
Let  $\breve{w}_i\defeq w_i (\breve{U}_i-\breve{L}_i)$ for each $i\in \sidx{n}$,   $\breve{b}\defeq b+\sum_{i=1}^n w_i \breve{L}_i=\ell(\sidx{n})< 0$, and $\breve{f}(\breve{x})\defeq\breve{w}\cdot \breve{x}+\breve{b}$ (recall that  $\ell(I) \defeq \sum_{i \in I} w_i\breve{L}_i + \sum_{i \not\in I} w_i\breve{U}_i + b$). Then we may infer that
\begin{equation}\label{thm:convex-hull:eq:conversion2}
\breve{w}_i \breve{x}_i=w_i (x_i-\breve{L}_i)\quad \forall i\in \sidx{n},
\end{equation}
that $f(x)=\breve{f}(\breve{x})$, and finally that $\rbra{x,y}\in S$ if and only if $\rbra{\breve{x},y}\in \breve{S} \defeq \Set{\rbra{\breve{x},y}\in [0,1]^n\times \bbR |  y = \sigma(\breve{f}(\breve{x})) }$.

In addition, we conclude $\breve{w}\in \bbR^n_+$, using the definition of $\breve{L}$ and $\breve{U}$ and the fact that $L<U$. Hence, \autoref{thm:convex-hull:alt:simple} and \autoref{lem:convex-hull:alt:interpolation} are applicable for $\breve{S}$ and $\breve{g}(\breve{x})=\sigma(\breve{f}(\breve{x}))$. Then
\[\Conv(\breve{S})=\Set{\rbra{\breve{x},y}\in  [0,1]^n\times \bbR_+ | \breve{f}\rbra{\breve{x}}\leq y\leq {r}_{I,h}\rbra{\breve{x}} \quad \forall \rbra{I,h}\in {\calI}}\]
where $\calI=\Set{
        (I,h) \in 2^{\llbracket n \rrbracket} \times \llbracket n \rrbracket |
                    \breve{F}(I) \geq 0, \ \breve{F}(I\cup\{ h\}) < 0
    }$ and
    ${r}_{I,h}(\breve{x})= \breve{F}(I) {\breve{x}_h} + \sum_{i\in I} \breve{w}_{i} \breve{x}_{i}$. Using the definitions of $\breve{b}$ and $ \breve{w}_{i}$ we get
 \begin{equation}\label{thm:convex-hull:eq:lconversion}
     \breve{F}(I)= \sum_{i\notin I} \breve{w}_{i} +\breve{b} = \sum_{i\notin I} w_i \rbra{\breve{U}_i-\breve{L}_i} +\rbra{b+\sum_{i=1}^n w_i \breve{L}_i}= \sum_{i\notin I} w_i\breve{U}_i + \sum_{i\in I} w_i\breve{L}_i+b=\ell\rbra{I}
 \end{equation}
  and hence $\calI=\calJ=\Set{
        (I,h) \in 2^{\llbracket n \rrbracket} \times \llbracket n \rrbracket |
                    \ell(I) \geq  0,\:
            \ell(I\cup\{ h\}) < 0
    }$. Combining (\ref{thm:convex-hull:eq:conversion1}--\ref{thm:convex-hull:eq:lconversion}), we get
\[\breve{r}_{I,h}\rbra{\breve{x}}=\breve{F}\rbra{I} \breve{x}_{h} + \sum_{i\in I} \breve{w}_{i} \breve{x}_{i}= \ell\rbra{I} \frac{x_h-\breve{L}_h}{\breve{U}_h-\breve{L}_h} +\sum_{i\in I}w_i \rbra{x_i-\breve{L}_i}. \]
Hence, $\Conv\rbra{S}$ is described by \eqref{eqn:relu-conv-hull}.

% \TODO{JPV:Adapt for $d$}

Finally, $\rbra{I,h}\in\calJ$ if and only if the hyperplane   $\sum_{i=1}^n w_i x_i + b = 0$ cuts the edge $uv$ of $[0,1]^n$ given by  $u\defeq\sum_{i\notin \rbra{I\cup \set{h}}} e(i)$ and $v\defeq \sum_{i\notin I} e(i)$  (with the convention that an empty sum is equal to zero). The result on $\abs{\calJ}$ then follows by \autoref{hyperplanelemma} recalling that without loss of generality we have assumed $n=d$.

\end{proof}

% \jnote{Is the following paragraph misplaced?}
% \jpvnote{fixed}
\end{subsection}
\begin{subsection}{An alternative proof using mixed-integer programming and projection}\label{secondproof}

\newcommand{\Rsharp}{R_{\operatorname{sharp}}}
\newcommand{\Smax}{S_{\operatorname{max}}}

% {\color{red} TODO: Ensure $\Delta^d$ is defined.}

% This proof follows the framework in [TODO: MP paper], projecting out the binary variables $z$, which indicate whether a ReLU is active or not.

% Define the following set:
% \begin{subequations}\label{eqn:max_sharp_set_primal}
% \begin{align}
%     \Rsharp = \{(x,y,z)\ |\ &y \leq \bar{h}(x, z)\label{eqn:max_sharp_set_primal-1}\\
%     &y \geq w^k \cdot x + b^k\quad\forall k \in \llbracket d \rrbracket \label{eqn:max_sharp_set_primal-2}\\
%     &(x,y,z) \in D \times \mathbb{R} \times \Delta^d\}
% \end{align}
% \end{subequations}
% where
% \begin{align*}
%     \bar{h}(x, z) \defeq \max_{\tx^1, \ldots, \tx^d}\Set{\sum_{k=1}^d (w^k \cdot \tx^k + b^k z_k) | \begin{array}{lc} x = \sum_k \tx^k & \\ \tx^k \in z_k \cdot D &\ \forall k \in \llbracket d \rrbracket \end{array}}.
% \end{align*}

% Proposition 5 in [TODO: MP paper] shows that $\Rsharp$ is a sharp MIP formulation of $\Smax$. In other words, $\Proj_{x,y}(\Rsharp) = \Conv(\Smax)$.

% Using this result, we can show the following proposition. Observe that the difference between the set $R$ defined below and $\Rsharp$ is that the $z$ variable is absorbed by the upper bounding function, $\tilde{h}(x)$. Crucially, $\Rsharp$ is in $(x,y,z)$-space, and $R$ is in $(x,y)$-space.

We can alternatively prove Theorem~\ref{thm:convex-hull} by connecting it to the MIP formulation from \cite{anderson2020strong} for  $S$ defined in \eqref{Sdef:eq}. For this, first recall that  that $f(x)= w\cdot x+b$ for $w\in \bbR^n$ and $b\in \bbR$, and $S = \Set{(x,y)\in [L,U]\times \bbR |  y = \sigma(f(x)) }$ for  $L,U\in\bbR^n$  such that $L<U$.

\begin{corollary}\label{MPApapercoro}
Let
\[
    \Rsharp \defeq \Set{(x,y,z)\in [L,U]\times \bbR\times [0,1]^2 |
    \begin{alignedat}{4}
    y &\geq 0,\\
    y &\geq w \cdot x + b,\\
    y &\leq \bar{f}(x, z),\\
    z_1+z_2&=1
    \end{alignedat}},
\]
where
\begin{align*}
    \bar{f}(x, z) \defeq \max_{\tx^1, \tx^2}\Set{w \cdot \tx^2 + b z_2 |
    \begin{alignedat}{3}
    x &= \tx^1+\tx^2,  \\
    L z_k \leq \tx^k &\leq U  z_k &\quad& \forall k \in \llbracket 2 \rrbracket \\
    \tx^1, \tx^2&\in \bbR^n
    \end{alignedat}}.
\end{align*}
Then $\Conv\rbra{S}=\Proj_{x,y}\rbra{\Rsharp}\defeq \Set{(x,y)\in \bbR^{n+1}|\exists z\in \bbR^2\:\text{ s.t. } \: \rbra{x,y,x}\in \Rsharp}$.
\end{corollary}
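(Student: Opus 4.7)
The plan is to establish the corollary by applying Balas' disjunctive programming construction and then projecting out auxiliary variables by Fourier--Motzkin. First I would decompose $S = S_1 \cup S_2$ as the union of two polytopes corresponding to the two branches of the ReLU: $S_1 \defeq \{(x,y)\in [L,U]\times \bbR : y = w\cdot x + b,\ w\cdot x + b \geq 0\}$ (the active branch) and $S_2 \defeq \{(x,y)\in [L,U]\times \bbR : y = 0,\ w\cdot x + b \leq 0\}$ (the inactive branch). Since each $S_k$ is a bounded polytope, Balas' theorem represents $\Conv(S)$ as the projection onto $(x,y)$ of the standard lifted formulation with split variables $\tilde{x}^k,\tilde{y}^k$ and multipliers $z_k\geq 0$ satisfying $z_1+z_2 = 1$, together with $x = \tilde{x}^1+\tilde{x}^2$, $y = \tilde{y}^1+\tilde{y}^2$, $Lz_k \leq \tilde{x}^k \leq Uz_k$, and the defining linear constraints of each scaled piece $z_kS_k$. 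Substituting $\tilde{y}^1 = 0$ and $\tilde{y}^2 = w\cdot \tilde{x}^2 + bz_2$ to eliminate the $\tilde{y}^k$ leaves $y = w\cdot \tilde{x}^2 + bz_2$ together with $w\cdot \tilde{x}^1 + bz_1 \leq 0$ and $w\cdot \tilde{x}^2 + bz_2 \geq 0$.

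Next I would eliminate $\tilde{x}^1$ via $\tilde{x}^1 = x - \tilde{x}^2$. The box $Lz_1 \leq \tilde{x}^1\leq Uz_1$ becomes $x - Uz_1 \leq \tilde{x}^2 \leq x - Lz_1$; the inequality $w\cdot \tilde{x}^1 + bz_1\leq 0$ combined with $y = w\cdot \tilde{x}^2 + bz_2$ and $z_1+z_2=1$ gives $y \geq w\cdot x + b$; and $w\cdot \tilde{x}^2 + bz_2\geq 0$ gives $y\geq 0$. These recover the two lower bounds on $y$ in $\Rsharp$. What remains is to project out $\tilde{x}^2$, whose role is purely to certify the equation $y = w\cdot \tilde{x}^2 + bz_2$ under the intersected box constraints $\tilde{x}^2 \in [Lz_2,Uz_2]\cap[x-Uz_1,x-Lz_1]$. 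Componentwise this intersection is nonempty iff $L\leq x\leq U$ (using $z_1+z_2=1$), and for feasible boxes a witnessing $\tilde{x}^2$ with $w\cdot \tilde{x}^2 + bz_2 = y$ exists iff $\underline{v}(x,z) \leq y \leq \bar{f}(x,z)$, where $\bar{f}$ is as in the statement and $\underline{v}$ is the analogous minimum.

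The main obstacle is to show that the apparent lower bound $y \geq \underline{v}(x,z)$ is redundant given $y \geq 0$ and $y \geq w\cdot x + b$. For this I would exhibit the explicit feasible choice $\tilde{x}^1 = z_1 x$, $\tilde{x}^2 = z_2 x$ (valid because $x\in[L,U]$ and $z_k\in[0,1]$), which yields
$$\underline{v}(x,z) \;\leq\; w\cdot (z_2 x) + bz_2 \;=\; z_2(w\cdot x + b) \;\leq\; \max\{0,\,w\cdot x + b\} \;\leq\; y,$$
the middle inequality using $z_2\in[0,1]$. Thus the eliminated system is exactly $\Rsharp$, so $\Conv(S) = \Proj_{x,y}(\Rsharp)$. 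I expect that establishing the redundancy of $\underline{v}$ is the only non-routine step; all other reductions are mechanical Fourier--Motzkin and substitution.
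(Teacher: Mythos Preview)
Your argument is correct and is essentially a self-contained unpacking of what the paper obtains by citation: the paper's proof is the single line ``Follows from \cite[Proposition~5]{anderson2020strong} for the case $d=2$, $w^1=0$, $b^1=0$, $w^2=w$, $b^2=b$,'' and that proposition is precisely the Balas/Cayley disjunctive formulation you reconstruct. Your identification and handling of the only non-mechanical step --- the redundancy of the lower envelope $\underline{v}(x,z)$ via the feasible split $\tilde{x}^k = z_k x$ --- is clean and correct.

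One cosmetic fix: you define $S_1$ as the active branch and $S_2$ as the inactive branch, but your substitutions $\tilde{y}^1 = 0$, $\tilde{y}^2 = w\cdot\tilde{x}^2 + bz_2$ and the constraint $w\cdot\tilde{x}^1 + bz_1 \le 0$ treat index~1 as inactive and index~2 as active. Swap the definitions of $S_1$ and $S_2$ (or swap the indices in the substitutions) so they match; nothing else changes.
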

\begin{proof}
Follows from \cite[Proposition 5]{anderson2020strong} for the case $d=2$, $w^1=0$, $b^1=0$, $w^2=w$, $b^2=b$.
\end{proof}
\begin{lemma}\label{projectedMPlemma}
Let
\[
    R \defeq \Set{(x,y)\in [L,U]\times \bbR |
    \begin{alignedat}{4}
    y &\geq 0,\\
    y &\geq w \cdot x + b,\\
    y &\leq \tilde{f}(x)\\
    \end{alignedat}}
\]
where
\begin{equation}\label{tildahdef}
    \tilde{f}(x) \defeq \max_{\tx^1, \tx^2,z}\Set{w \cdot \tx^2 + b z_2 |
    \begin{alignedat}{3}
    x &= \tx^1+\tx^2,  \\
    L z_k \leq \tx^k &\leq U  z_k &\quad& \forall k \in \llbracket 2 \rrbracket \\
     \tx^1, \tx^2&\in \bbR^n\\
    z_1+z_2&=1\\
    z&\in [0,1]^2
    \end{alignedat}}.
    \end{equation}
Then $\Conv\rbra{S}=R$.
\end{lemma}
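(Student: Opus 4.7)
My plan is to show $\Conv(S) = R$ by chaining two identities: $\Conv(S) = \Proj_{x,y}(\Rsharp)$ (from \autoref{MPApapercoro}) and $\Proj_{x,y}(\Rsharp) = R$. The only substantive step is the second one, and it reduces to observing that the constraint $y \leq \bar{f}(x,z)$ in $\Rsharp$ (for some feasible $z$) can be equivalently written as $y \leq \tilde{f}(x)$ by merging the outer maximization over $z$ with the inner maximization over $(\tx^1, \tx^2)$ that defines $\bar{f}$.

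Concretely, I would proceed as follows. First, unpack the definition of $\Proj_{x,y}(\Rsharp)$: a point $(x,y) \in \bbR^{n+1}$ lies in the projection iff $x \in [L,U]$, $y \geq 0$, $y \geq w\cdot x + b$, and there exists $z \in [0,1]^2$ with $z_1+z_2 = 1$ satisfying $y \leq \bar{f}(x,z)$. The first three conditions match those in $R$ verbatim, so it suffices to show that
\[
\exists\, z \in [0,1]^2,\ z_1+z_2=1 : y \leq \bar{f}(x,z) \iff y \leq \tilde{f}(x).
\]
Second, I would establish this by exchanging maxima: $\tilde{f}(x)$ is exactly the joint maximum of the objective $w\cdot \tx^2 + bz_2$ over $(\tx^1, \tx^2, z)$ subject to all the constraints defining \eqref{tildahdef}, while $\bar{f}(x,z)$ is the partial maximum over $(\tx^1, \tx^2)$ for fixed $z$. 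Hence $\tilde{f}(x) = \max_{z} \bar{f}(x,z)$, where the outer max ranges over $z \in [0,1]^2$ with $z_1+z_2=1$, and the equivalence above follows immediately for the forward direction.

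For the reverse direction ($y \leq \tilde{f}(x) \Rightarrow \exists\, z$ with $y \leq \bar{f}(x,z)$), I would argue that for any $x \in [L,U]$ the feasible set in \eqref{tildahdef} is nonempty (e.g., take $z=(1,0)$, $\tx^1=x$, $\tx^2=0$, which satisfies all constraints since $L \leq x \leq U$), and the objective is bounded above on the compact feasible set, so the maximum is attained at some $(\tx^{1*}, \tx^{2*}, z^*)$. Taking $z = z^*$ gives $\bar{f}(x, z^*) \geq w \cdot \tx^{2*} + b z^*_2 = \tilde{f}(x) \geq y$, as required.

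The main (and only mild) obstacle is checking that $\bar f(x,z)$ is well-defined (i.e., the inner LP is feasible) for the $z$ supplied by $\tilde f(x)$, which I would handle via the explicit witness above; everything else is a clean reindexing of the optimization problems. Combining these two steps yields $\Proj_{x,y}(\Rsharp) = R$, and with \autoref{MPApapercoro} the lemma follows.
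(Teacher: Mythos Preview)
Your proposal is correct and follows essentially the same approach as the paper: reduce via \autoref{MPApapercoro} to showing $\Proj_{x,y}(\Rsharp)=R$, then establish both inclusions using $\tilde{f}(x)=\max_z \bar{f}(x,z)$ together with attainment of the maximum in \eqref{tildahdef} (the paper phrases this as a bounded nonempty LP, while you invoke compactness with the explicit witness $z=(1,0)$, $\tx^1=x$, $\tx^2=0$). The arguments are interchangeable in content and level of detail.
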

\begin{proof}
By \autoref{MPApapercoro} it suffices to show $R = \Proj_{x,y}(\Rsharp)$.

Inclusion $\Proj_{x,y}(\Rsharp) \subseteq R$ follows by noting that $\bar{f}(\hat{x}, \hat{z}) \leq \tilde{f}(\hat{x})$  for any $(\hat{x}, \hat{y}, \hat{z}) \in \Rsharp$.

For inclusion $R \subseteq \Proj_{x,y}(\Rsharp)$, let $(\hat{x}, \hat{y}) \in R$, and let $\rbra{\tx^1, \tx^2,z}\in \bbR^{2n+2}$ be an optimal solution to the optimization problem in the right hand side of \eqref{tildahdef} for $x=\hat{x}$. Such solution exists because for $\hat{x}\in [L,U]$ this optimization problem is the maximization of a linear function over a non-empty bounded polyhedron. Then, $\tilde{f}(\hat{x}) = \bar{f}(\hat{x}, {z})$, and hence $(\hat{x}, \hat{y}, z) \in \Rsharp$.
\end{proof}

\generaltheo*
\begin{proof}

Recalling that $\calJ\defeq\Set{
        (I,h) \in 2^{\llbracket n \rrbracket} \times \llbracket n \rrbracket |
                    \ell(I) \geq  0,\quad
            \ell(I\cup\{ h\}) < 0,\quad w_i\neq 0\quad \forall i\in I}$ we can assume without loss of generality that $w_i\neq 0$ for all $i\in\sidx{n}$ and hence $d=n$ (Indices $i$ with $w_i=0$ do not affect \eqref{eqn:cut-family-2} or the definition of $\calJ$ and the only inequalities for $S$ or $\Conv(S)$ in which $x_i$ appear for such index are $L_i\leq x_i\leq U_i$).

For the first case, the result follows because $f(x)<0$ for all $x\in[L,U]$ and hence $g(x)=0$ for all $x\in[L,U]$.

For the second case, the result follows because $f(x)\geq0$ for all $x\in[L,U]$ and hence $g(x)=f(x)$ for all $x\in[L,U]$.
%Let $g(x)=\sigma_{\mathit{RELU}}(f(x))$. Consider

For the third case,  it suffices to show that
\begin{equation}\label{alternateprooffunceq}
    \tilde{f}(x) = \min_{(I, h) \in \calJ} \left\{\sum_{i \in I} w_i(x_i-\breve{L}_i) + \frac{\ell(I)}{\breve{U}_{ h}-\breve{L}_{ h}} (x_{ h}-\breve{L}_{ h})\right\},
\end{equation}
in which case, set $R$ from \autoref{projectedMPlemma} is exactly the set described by  \eqref{eqn:relu-conv-hull}. To show \eqref{alternateprooffunceq} we first simplify the optimization problem defining  $\tilde{f}(x)$ by applying the simple substitutions $\tilde{x} \defeq \tilde{x}^2 = x - \tilde{x}^1$ and $z \defeq z_2 = 1 - z_1$:
\begin{align*}
    \tilde{f}(x) = \max_{\tilde{x}, z}\Set{w \cdot \tilde{x} + b z\ |
    \begin{alignedat}{3}
    L(1-z) &\leq\ & x - \tx &\leq U(1-z),\\ Lz &\leq\ & \tx &\leq Uz,\\ z &\in& [0,1]
    \end{alignedat}
    }.
\end{align*}

This optimization problem is feasible and bounded when $L \leq x \leq U$, and thus we may assume an optimal solution exists.

% If $w_i = 0$, then we can project out $\tilde{x}_i$ as long as a feasible value for $\tilde{x}_i$ exists for any $z \in [0, 1]$. In particular, note that applying Fourier-Motzkin elimination to project this variable out yields only inequalities that are valid assuming $L \leq x \leq U$ and $z \in [0,1]$. Thus, we can assume without loss of generality that $w_i \neq 0$ for all $i \in \llbracket n \rrbracket$.

Consider some $i \in \llbracket n \rrbracket$. If $w_i > 0$, then $\tilde{x}_i \geq L_i z$ and $x_i - \tilde{x}_i \leq U_i (1 - z)$ hold at any optimal solution, since we are maximizing the problem and each constraint involves only a single $x_i$ and $z$. Analogously, if $w_i < 0$, then $\tilde{x}_i \leq U_i z$ and $x_i - \tilde{x}_i \geq L_i (1 - z)$ are implied as well. To unify these two cases into one as a simplification, observe that these constraints can be expressed as $w_i \tilde{x}_i \geq w_i\breve{L}_i z$ and $w_i (x_i - \tilde{x}_i) \leq w_i \breve{U}_i (1 - z)$ respectively (recall that $w_i \neq 0$ by assumption, and that $\breve{L}_i = L_i$ if $w_i \geq 0$, or $U_i$ otherwise, and $\breve{U}_i = U_i$ if $w_i \geq 0$, or $L_i$ otherwise). Therefore, we can drop these constraints and keep the remaining ones:
\begin{align*}
    \tilde{f}(x) = \max_{\tilde{x}, z}\Set{w \cdot \tilde{x} + b z\ |
    \begin{alignedat}{3}
    w_i (x_i - \tx_i) &\geq w_i \breve{L}_i(1-z) &\quad& \forall i \in \llbracket n \rrbracket,\\
    w_i \tx_i &\leq w_i \breve{U}_iz &\quad& \forall i \in \llbracket n \rrbracket \\
    z &\in [0,1]
    \end{alignedat}
    }.
\end{align*}

Define $\gamma_i \defeq w_i (\breve{U}_i z - \tilde{x}_i)$ for all $i \in \llbracket n \rrbracket$. We can then rewrite the problem as:
\begin{align*}
    \tilde{f}(x) = \max_{\gamma, z}\Set{(w \cdot \breve{U} + b) z - \sum_{i=1}^n \gamma_i \ |
    \begin{alignedat}{3}
    w_i (\breve{U}_i - \breve{L}_i)z - \gamma_i &\leq w_i (x_i - \breve{L}_i) &\quad&\forall i \in \llbracket n \rrbracket \\
    \gamma &\geq 0,\\ z &\in [0,1]
    \end{alignedat}
    }.
\end{align*}

We next take the dual of this problem. By strong duality, the following holds:
\begin{align*}
    \tilde{f}(x) = \min_{\alpha, \beta}\Set{\sum_{i=1}^n w_i (x_i - \breve{L}_i) \alpha_i + \beta \ |
    \begin{alignedat}{3}
    \sum_{i=1}^n w_i (\breve{U}_i - \breve{L}_i) \alpha_i + \beta &\geq \sum_{i=1}^n w_i \breve{U}_i + b,\\ \alpha &\in [0,1]^n,\\ \beta &\geq 0
    \end{alignedat}
    }.
\end{align*}

To conclude the proof, we describe the optimal solutions of the optimization problem above.
%, which is the separation problem of this inequality.
Note that it is a minimization variant of a fractional knapsack problem and it can be solved by a greedy algorithm, in which we order the indices of $\alpha$ by $\frac{x_i - \breve{L}_i}{\breve{U}_i - \breve{L}_i}$ and maximally select those with the smallest ratios, until the knapsack constraint is satisfied at equality. We also need to consider $\beta$ in the knapsack, but since the ratios for $\alpha_i$ are in $[0,1]$ and the ratio for $\beta$ is 1, $\beta$ would only be picked last. Moreover, under the assumptions of our current third case, we have $\ell(\sidx{n})=\sum_{i=1}^n w_i \breve{L}_i + b < 0$, and thus that we can satisfy the knapsack constraint by choosing from $\alpha$'s (recall that $\ell(I) = \sum_{i \in I} w_i\breve{L}_i + \sum_{i \not\in I} w_i\breve{U}_i + b$). Therefore we may set $\beta = 0$.

% {\color{red} TODO: We should need the other side of the irreducibility assumption, $\sum_{i=1}^n w_i \breve{U}_i + b \geq 0$. Why doesn't that appear here?}
% \jpvnote{Answer:
% The conditions are $\ell(\sidx{n})< 0$, $\ell(\emptyset)\geq 0$. The first one ensures $\beta=0$, the second just ensures that the RHS of the knapsack is non-negative. If the RHS is negative then the only thing that happens is that $\alpha=0$ is a solution and $\tilde{f}(x)=0$, which is exactly what should happen when $\ell(\emptyset)\geq 0$ (everything is ok, we just cover the second case with the third case) }

Let $I$ be the set of indices in which $\alpha_i = 1$ for the optimal solution and $h$ be the next index to be considered by the greedy procedure after the elements in $I$. Then
\[ \alpha_h = \frac{\rbra{\sum_{i=1}^n w_i \breve{U}_i + b}-\rbra{\sum_{i\in I} w_i (\breve{U}_i - \breve{L}_i)}}{\breve{U}_h - \breve{L}_h}=\frac{\ell(I)}{\breve{U}_h - \breve{L}_h}\in [0,1).\]

Observe that $\ell(I) \geq 0$ is equivalent to stating that the items in $I$ are below the knapsack capacity, since $\ell(I)$ equals the capacity of the knapsack minus the total weight of the items in $I$. Therefore, $\ell(I) \geq 0$ and $\ell(I\cup\{ h\}) < 0$ (i.e.\ the items in $I$ fit but we can only add $h$ partially). Hence, we can write the optimization problem defining $\tilde{f}(x)$ as finding the optimal $I$ and $h$:
\begin{align*}
    \tilde{f}(x) = \min_{I, h \notin I} \left\{ \sum_{i \in I} w_i(x_i-\breve{L}_i) + \frac{\ell(I)}{\breve{U}_{ h}-\breve{L}_{ h}} (x_{ h}-\breve{L}_{ h})
    \ |\ \ell(I) \geq  0,\:
            \ell(I\cup\{ h\}) < 0\right\}.
\end{align*}
We obtain \eqref{alternateprooffunceq} by recalling that $\calJ=\Set{
        (I,h) \in 2^{\llbracket n \rrbracket} \times \llbracket n \rrbracket |
                    \ell(I) \geq  0,\:
            \ell(I\cup\{ h\}) < 0
    }$.

Finally, $\rbra{I,h}\in\calJ$ if and only if the hyperplane   $\sum_{i=1}^n w_i x_i + b = 0$ cuts the edge $uv$ of $[0,1]^n$ given by  $u\defeq\sum_{i\notin \rbra{I\cup \set{h}}} e(i)$ and $v\defeq \sum_{i\notin I} e(i)$  (with the convention that an empty sum is equal to zero). The result on $\abs{\calJ}$ then follows by \autoref{hyperplanelemma} recalling that without loss of generality we have assumed $n=d$.
\end{proof}

\end{subsection}
\end{section}

\begin{section}{Proofs of other results from Section~\ref{exactconvexsec}}\label{otherproofsfromthree}
\arbitrarygap*
  \begin{proof}
    This follows as a straightforward extension of \cite[Example 2]{anderson2020strong}, as the $\Delta$-relaxation is equal to the projection of the big-$M$ formulation presented in that work.
  \end{proof}

The following proposition shows how the additional structure in \autoref{lem:convex-hull:alt:interpolation} allows increasing the speed of checking for violated inequalities from $\mathcal{O}(n\log(n))$, achievable by sorting the input components, to $\mathcal{O}(n)$.

\separationlemma*
\begin{proof}
Recall that $\calJ\defeq\Set{
        (I,h) \in 2^{\llbracket n \rrbracket} \times \llbracket n \rrbracket |
                    \ell(I) \geq  0,\quad
            \ell(I\cup\{ h\}) < 0,\quad w_i\neq 0 \ \forall i\in I
    }$,
    $\ell(I) \defeq \sum_{i \in I} w_i\breve{L}_i + \sum_{i \not\in I} w_i\breve{U}_i + b$,
$\breve{L}_i \defeq \begin{cases} L_i & w_i \geq 0 \\ U_i & \text{o.w.} \end{cases}$ and $\breve{U}_i \defeq \begin{cases} U_i & w_i \geq 0 \\ L_i & \text{o.w.} \end{cases}$ for each $i\in\sidx{n}$, and \eqref{separationproblem} is the optimization problem given by
\begin{equation*}
    \upsilon(x)\defeq\min\Set{ \sum\nolimits_{i \in I} w_i(x_i-\breve{L}_i) + \frac{\ell(I)}{\breve{U}_{ h}-\breve{L}_{ h}} (x_{ h}-\breve{L}_{ h})  | (I, h) \in \calJ }.
\end{equation*}

First, we can check in $\mathcal{O}(n)$ time if $\ell(\sidx{n})\geq 0$ or $\ell(\emptyset)< 0$, in which case $\calJ=\emptyset$ and  \eqref{separationproblem} is infeasible.    Otherwise,  $\ell(\sidx{n})<0$, $\ell(\emptyset)\geq 0$, and $\calJ\neq\emptyset$.

We can also remove  in $\mathcal{O}(n)$ time all $i\in \sidx{n}$ such that $w_i=0$. Then without loss  of generality we may assume that $w_i\neq 0$ for all $i\in\sidx{n}$ and hence $L<U$ implies that
\begin{equation}\label{eq:separation-frac-knap-new-positiveweight}
    w_i (\breve{U}_i - \breve{L}_i)>0 \quad \forall i\in \sidx{n}.
\end{equation}

We will show that \eqref{separationproblem} is equivalent to the  linear programming problem
\begin{subequations} \label{eqn:separation-frac-knap-new-again}
\begin{align}
   \omega(x)\defeq \min_{v} \quad& \sum_{i=1}^n w_i(x_i-\breve{L}_i)v_i \\
    \text{s.t.}\quad& \sum_{i=1}^n w_i (\breve{U}_i - \breve{L}_i) v_i = \sum_{i=1}^n w_i \breve{U}_i + b,  \label{eqn:separation-frac-knap-eq}\\
    & 0 \leq v \leq 1.
\end{align}
\end{subequations}
Note that the set of basic feasible solutions for the linear programming problem is exactly the set of all feasible points with at most one fractional component (see, e.g., \cite[Chapter 3]{Bertsimas:1997}). That is, all basic feasible solutions of \eqref{eqn:separation-frac-knap-new-again} are elements of  $\calV\defeq \Set{v\in [0,1]^n | \abs{\Set{i\in \sidx{n}|v_i\in (0,1)}}\leq 1}$.

To prove that  $\omega(x)\leq  \upsilon(x)$,
%. Let $\calK\defeq \Set{(I,h) \in 2^{\llbracket n \rrbracket} \times {\llbracket n \rrbracket} | h\notin I}$ and
consider the  mapping $\Phi:\calJ\to \calV$ given by
\[
    \Phi\rbra{\rbra{I,h}}_i = \begin{cases} 1 & i \in {I} \\ \frac{\ell({I})}{w_h(\breve{U}_h-\breve{L}_h)} & i = {h} \\ 0 & \text{o.w.} \end{cases} \quad \forall i \in \llbracket n \rrbracket.
\]
Let  $(\bar{I},\bar{h}) \in \calJ$ be an optimal solution for \eqref{separationproblem} and let $\bar{v}=\Phi\rbra{\rbra{\bar{I},\bar{h}}}$.
Then
\[\sum_{i=1}^n w_i (\breve{U}_i - \breve{L}_i) \bar{v}_i = \sum_{i\in \bar{I}} w_i (\breve{U}_i - \breve{L}_i) + w_{\bar{h}} (\breve{U}_{\bar{h}} - \breve{L}_{\bar{h}})\frac{\ell(\bar{I})}{w_{\bar{h}}(\breve{U}_{\bar{h}}-\breve{L}_{\bar{h}})}=\sum_{i=1}^n w_i \breve{U}_i + b,\]
 and hence $\bar{v}$ satisfies \eqref{eqn:separation-frac-knap-eq}.  Algebraic manipulation shows that
\begin{equation} \label{eqn:twiddle-one-comp-new}
    w_h(\breve{U}_h-\breve{L}_h) = \ell(I) - \ell(I \cup \{h\}) \quad \forall I\subseteq \sidx{n},\quad h\in \sidx{n}\setminus I.
\end{equation}
 In addition, $(\bar{I},\bar{h}) \in \calJ$ implies $\ell(\bar{I}) \geq 0$ and $\ell(\bar{I} \cup \{\bar{h}\}) < 0$. Combining this  with \eqref{eqn:twiddle-one-comp-new} gives the inequality $\ell(\bar{I}) < w_{\bar{h}}(\breve{U}_{\bar{h}}-\breve{L}_{\bar{h}})$. Therefore,  $\bar{v}_h \in [0, 1)$, and hence $\bar{v}_h$ is feasible for \eqref{eqn:separation-frac-knap-new-again}. In addition, for any  $(I,h)\in\calJ$ we have that
\[\sum_{i=1}^n w_i(x_i-\breve{L}_i)\bar{v}_i=\sum_{i\in {I}}  w_i(x_i-\breve{L}_i) + w_{{h}} (\breve{U}_{{h}} - \breve{L}_{{h}})\frac{\ell({I})}{w_h(x_h-\breve{L}_h)}\]
and hence  the objective value of  $\bar{v}_h$ for \eqref{eqn:separation-frac-knap-new-again} is the same as the objective value of $({I},{h})$ for \eqref{separationproblem}.

To prove $\omega(x)\geq  \upsilon(x)$ we will show that, through $\Phi$, the greedy procedure to solve \eqref{separationproblem}  described in the main text just before the statement of \autoref{lemma:general_separation}, becomes the standard greedy procedure for \eqref{eqn:separation-frac-knap-new-again} and hence also yields an optimal basic feasible solution to \eqref{eqn:separation-frac-knap-new-again}. For simplicity, assume without loss of generality that we have re-ordered the indices in $\sidx{n}$ so that
% \begin{equation}\label{nondecreasingequation}
%     \frac{x_i - \breve{L}_i}{\breve{U}_i - \breve{L}_i}\leq \frac{x_j - \breve{L}_j}{\breve{U}_j - \breve{L}_j} \quad \forall i,j\in \sidx{n},\quad i<j.
% \end{equation}
\begin{equation}\label{nondecreasingequation}
    \frac{x_1 - \breve{L}_1}{\breve{U}_1 - \breve{L}_1} \leq \frac{x_2 - \breve{L}_2}{\breve{U}_2 - \breve{L}_2} \leq \cdots \leq \frac{x_n - \breve{L}_n}{\breve{U}_n - \breve{L}_n}.
\end{equation}
Then the greedy procedure that incrementally grows $I$ terminates with some $(I,h)\in \calJ$ where $I=\sidx{h-1}$. Then $v= \Phi\rbra{\rbra{\sidx{h-1},h}} $ is a basic feasible solution for  \eqref{eqn:separation-frac-knap-new-again} with the same objective value as the objective value of $(I,h)$ for \eqref{separationproblem}. To conclude that $\omega(x)\geq  \upsilon(x)$, we claim that $v$ is an optimal solution for \eqref{eqn:separation-frac-knap-new-again} since the standard greedy procedure for \eqref{eqn:separation-frac-knap-new-again} is known to generate the optimal solution for this problem. For completeness, we give the following self contained proof of the claim. Assume for a contradiction that $\omega\rbra{x}<\sum_{i=1}^n w_i(x_i-\breve{L}_i)v_i$ and let  $v'$ be an optimal solution to  \eqref{eqn:separation-frac-knap-new-again}. Because $v'\neq v$ and both $v$ and $v'$ satisfy \eqref{eqn:separation-frac-knap-eq}, \eqref{eq:separation-frac-knap-new-positiveweight} implies there must exists $j_1,j_2\in \sidx{n}$ such that $j_1<j_2$, $j_1\leq h$, $v'_{j_1}<v_{j_1}$, $j_2\geq h$ and $v'_{j_2}>v_{j_2}$. Let $\epsilon>0$ be the largest value such that $v'_{j_1}+\frac{\epsilon}{w_{{j_1}} (\breve{U}_{{j_1}} - \breve{L}_{{j_1}})}\leq v_{j_1}$ and $v'_{j_2}-\frac{\epsilon}{w_{{j_2}} (\breve{U}_{{j_2}} - \breve{L}_{{j_2}})}\geq v_{j_2}$, and let
\[v''\defeq v'+\frac{\epsilon}{w_{{j_1}} (\breve{U}_{{j_1}} - \breve{L}_{{j_1}})}e(j_1)-\frac{\epsilon}{w_{{j_2}} (\breve{U}_{{j_2}} - \breve{L}_{{j_2}})}e(j_2).\]
By \eqref{nondecreasingequation} we either have
\begin{equation}
    \frac{x_{j_1} - \breve{L}_{j_1}}{\breve{U}_{j_1} - \breve{L}_{j_1}}= \frac{x_{j_2} - \breve{L}_{j_2}}{\breve{U}_{j_2} - \breve{L}_{j_2}} \quad \text{ or }  \quad   \frac{x_{j_1} - \breve{L}_{j_1}}{\breve{U}_{j_1} - \breve{L}_{j_1}}< \frac{x_{j_2} - \breve{L}_{j_2}}{\breve{U}_{j_2} - \breve{L}_{j_2}}.
\end{equation}
In the first case $v''$ is a feasible solution to \eqref{eqn:separation-frac-knap-new-again} that has fewer different components with $v$ and has the same objective value as $v'$. Hence, by repeating this procedure we will eventually  have the second case in which $v''$ is a feasible solution to \eqref{eqn:separation-frac-knap-new-again} that has an objective value strictly smaller than that of  $v'$, which contradicts the optimality of $v'$.

The greedy procedure to solve  \eqref{separationproblem}  and  \eqref{eqn:separation-frac-knap-new-again} can be executed in $\mathcal{O}(n\log(n))$ time through the sorting required to get \eqref{nondecreasingequation}. However, an optimal basic feasible solution $\hat{\alpha}$ to  \eqref{eqn:separation-frac-knap-new-again} can also be obtained in  $\mathcal{O}(n)$ time by solving a weighted median problem (e.g. \cite[Chapter 17.1]{Korte:2000}). This solution can be converted to an optimal solution to \eqref{separationproblem}  in $\mathcal{O}(n)$  time as follows. Because  $\hat{\alpha}$ is a basic feasible solution to \eqref{eqn:separation-frac-knap-new-again}, it has at most one fractional component (see, e.g., \cite[Chapter 3]{Bertsimas:1997}). Take $\hat{I} = \Set{i \in \llbracket n \rrbracket | \hat{\alpha}_i = 1}$. If $\hat{v}$ has one fractional component, take $\hat{h}$ to be this component. Then, because $\hat{\alpha}$ satisfies \eqref{eqn:separation-frac-knap-eq}  we have
\begin{equation}\label{eq:intermdiate-sep-lemma-obj1}
    w_{\hat{h}} (\breve{U}_{\hat{h}} - \breve{L}_{\hat{h}})\hat{\alpha}_{\hat{h}} = \sum_{i=1}^n w_i \breve{U}_i + b -\sum_{i\in \hat{I}} w_i (\breve{U}_i - \breve{L}_i) = \ell(\hat{I})
\end{equation}

Together with $\hat{\alpha}_{\hat{h}}\in (0,1)$, \eqref{eqn:twiddle-one-comp-new} for $I=\hat{I}$ and $h=\hat{h}$, and  \eqref{eq:separation-frac-knap-new-positiveweight} for $i=\hat{h}$, we have $ \ell(\hat{I})>0$ and
\[ \ell(\hat{I}) - \ell(\hat{I} \cup \{\hat{h}\}) >  \ell(\hat{I}).\]
Then $ \ell(\hat{I} \cup \{\hat{h}\}) < 0$ and $(\hat{I},\hat{h})\in \calJ$. Finally, \eqref{eq:intermdiate-sep-lemma-obj1} implies that the objective value of $\hat{\alpha}$ for \eqref{eqn:separation-frac-knap-new-again} is the same as the objective value of $(\hat{I},\hat{h})$ for \eqref{separationproblem}.

If, on the other hand, $\hat{v}$ has no fractional component, then  $\hat{\alpha}$ satisfying  \eqref{eqn:separation-frac-knap-eq}  implies
\begin{equation}\label{eq:intermdiate-sep-lemma-obj2}
0 = \sum_{i=1}^n w_i \breve{U}_i + b -\sum_{i\in \hat{I}} w_i (\breve{U}_i - \breve{L}_i) = \ell(\hat{I}).
\end{equation}
Then, $\ell(\sidx{n})<0$ implies that there exists $\hat{h}\in \sidx{n}\setminus \hat{I}$ such that $\ell(\hat{I}\cup\{\hat{h}\})<0$ and $(\hat{I},\hat{h}) \in \calJ$. Finally, \eqref{eq:intermdiate-sep-lemma-obj2} implies that the objective value of $\hat{\alpha}$ for \eqref{eqn:separation-frac-knap-new-again} is the same as the objective value of $(\hat{I},\hat{h})$ for \eqref{separationproblem}. This conversion of an optimal basic feasible solution for \eqref{eqn:separation-frac-knap-new-again} to a solution to \eqref{separationproblem} also gives an alternate proof to  $\omega(x)\geq  \upsilon(x)$.
\end{proof}

\polylemma*
\begin{proof}
If $w$ and $b$ are rational, then the coefficients of the inequalities in \eqref{eqn:cut-family-2} are also rational numbers with sizes that are polynomial in the sizes of $w$ and $b$. Then the result follows from \autoref{lemma:general_separation} and
\cite[Theorem 7.26]{conforti2014integer}.
\end{proof}

\end{section}

\begin{section}{Propagation algorithms}\label{propagationalgappendix}
\begin{subsection}{Description and analysis of algorithms}

In this section, we provide pseudocode for the propagation-based algorithms described in Section~\ref{propagationsec}. In the scope of a single neuron, Algorithm~\ref{alg:backwards} specifies the framework outlined in Section~\ref{sec:generic_framework} and Algorithm~\ref{alg:iterative} (which requires Algorithm~\ref{alg:forwards}) details our new algorithm proposed in Section~\ref{sec:new_prop_algorithm}. Finally, Algorithm~\ref{alg:full} establishes how to compute bounds for the entire network, considering \DeepPoly{}~\cite{singh2019abstract} and \FastLin{}~\cite{weng2018towards} as possible initial methods.

%While Algorithms~\ref{alg:backwards} to~\ref{alg:iterative} generically allow any set of affine bounding functions, in Algorithm~\ref{alg:full} we consider specifically \DeepPoly{}~\cite{singh2019abstract} and \FastLin{}~\cite{weng2018towards} as possible initial methods for simplicity.

\begin{algorithm}[H]
\caption{The Backwards Pass for Upper Bounds}
\label{alg:backwards}
\begin{algorithmic}[1]
%\State \jnote{Use $z$ instead of $\hat{z}$ in def'n of $\calL$ and $\calU$?}
\Inputs{Input domain $X \subseteq \bbR^{m}$, affine functions $\calL_i(z_{1:i-1})=\sum_{j=1}^{i-1}w^l_{ij}z_j+b^l_i$, $\calU_i(z_{1:i-1})=\sum_{j=1}^{i-1}w^u_{ij}z_j+b^u_i$ for each $i=m+1,\ldots,\eta$, and affine function $\mathcal{C}(z)=\sum_{i=1}^{\eta}c_iz_i + b$}
\Outputs{Upper bound on $\mathcal{C}(z)$, optimal point $x^*\in X$, and boolean vector $(\ubact_{m+1},\ldots,\ubact_\eta)$}\smallskip
\Function{PropagationBound}{$X, \calL, \calU, \calC$}
\State $\ubact_i \gets \texttt{false}$ for all $i = m+1, \ldots, \eta$
\State $Q \gets \Set{i | c_i \neq 0, i > m}$\Comment{Set of variable indices to be substituted}
\State $\texttt{expr} \gets \sum_{i=1}^{\eta} c_i z_i + b$\Comment{Denote by $\texttt{expr.w[i]}$ the coefficient for $z_i$ in $\texttt{expr}$, $\forall i$}
\While{$Q$ is not empty}
\State $i \gets$ pop largest value from $Q$, removing it
\State $\ubact_i \gets (\texttt{expr.w[i]} > 0)$
\State $\texttt{expr} \gets \texttt{expr} - \texttt{expr.w[i]}z_i$\Comment{Remove term from expression}
\If{$\texttt{expr.w[i]} > 0$}
\State $\texttt{expr} \gets \texttt{expr} + \texttt{expr.w[i]} \ \mathcal{U}_i(z_{1:i-1})$
\State $Q \gets Q \cup \Set{j | w_{ij}^u \neq 0, j > m}$
\ElsIf{$\texttt{expr.w[i]} < 0$}
\State $\texttt{expr} \gets \texttt{expr} + \texttt{expr.w[i]} \ \mathcal{L}_i(z_{1:i-1})$
\State $Q \gets Q \cup \Set{j | w_{ij}^l \neq 0, j > m}$
\EndIf
\EndWhile
\State $B, x^* \gets \max_{x\in X} \texttt{expr}$, along with an optimal solution
\State \Return $B, x^*, \ubact$
\EndFunction
\end{algorithmic}
\end{algorithm}

% \vspace{-0.5em}

\begin{algorithm}[H]
\caption{The Forward Pass}
\label{alg:forwards}
\begin{algorithmic}[1]
\Inputs{Partial optimal solution $x^*\in X$ from Algorithm~\ref{alg:backwards} and boolean vector $(\ubact_{m+1},\ldots,\ubact_{\eta})$ where $\ubact_i =$ \texttt{true} if the upper bound $\mathcal{U}_i$ was used to substitute variable $i$ in Algorithm 1, or \texttt{false} otherwise.}
\Outputs{Optimal solution $z^*_{1:\eta}$ to~\eqref{eqn:relaxed-problem}}\smallskip
\Function{RecoverCompleteSolution}{$x^*, \ubact$}
\State $z^*_i=x^*_i$ for $i=1,\ldots,m$
\For{$i=m+1,\ldots,\eta$}
\If{$\texttt{ub\_used}_i=\mathtt{true}$}
\State $z^*_i\gets\calU_i(z^*_{1:i-1})$
\Else
\State $z^*_i\gets\calL_i(z^*_{1:i-1})$
\EndIf
\EndFor
\State \Return $z^*_{1:\eta}$
\EndFunction
\end{algorithmic}
\end{algorithm}

\algnewcommand{\IIf}[1]{\State\algorithmicif\ #1\ \algorithmicthen}
\algnewcommand{\IElse}[1]{\unskip\ \algorithmicelse\ #1}
\algnewcommand{\EndIIf}{\unskip\ \algorithmicend\ \algorithmicif}

% \begin{algorithm}[H]
% \caption{The Iterative Algorithm}
% \label{alg:iterative}
% \begin{algorithmic}[1]
% \Inputs{The desired number of iterations $K \geq 0$, input domain $X \subseteq \bbR^m$, and affine function $\mathcal{C} : \bbR^{\eta} \to \bbR$}
% \Outputs{An upper bound on $\max_{x \in X} \mathcal{C}(x)$}\smallskip
% \State Initialize $\{\calL_i,\calU_i\}_{i=m+1}^{\eta}$ according to \DeepPoly{}, as discussed in Section~\ref{ssec:selecting-bounding-functions}
% \State $B, x^*, \ubact \gets$ Run Algorithm~\ref{alg:backwards} with inputs $X$, $\{\calL_i\}_{i=m+1}^{\eta}$, $\{\calU_i\}_{i=m+1}^{\eta}$, and $\mathcal{C}$
% \For{$\texttt{iter} = 1, \ldots, K$}
%     \State $z^*_{1:\eta} \gets $ Run Algorithm~\ref{alg:forwards} with inputs $x^*$ and $\ubact$
%     \For{$i = m + 1, \ldots, \eta$}
%         \State $\calU_i', v \gets$ most violated inequality w.r.t.\ to $z^*_{1:\eta}$ from \eqref{eqn:cut-family-2} (per \autoref{lemma:general_separation}) and its violation
%         \IIf{$v > 0$} $\calU_i \gets \calU_i'$ \EndIIf
%     \EndFor
% \State $B', x^*, \ubact \gets$ Run Algorithm~\ref{alg:backwards} with inputs $X$, $\{\calL_i\}_{i=m+1}^{\eta}$, $\{\calU_i\}_{i=m+1}^{\eta}$, and $\mathcal{C}$
% \IIf{$B' < B$} $B \gets B'$ \EndIIf
% \EndFor
% \State \Return $B$
% \end{algorithmic}
% \end{algorithm}

\algnewcommand{\AlgNote}[1]{%
  \State \textbf{Note:}
  \Statex \hspace*{\algorithmicindent}\parbox[t]{.85\linewidth}{\raggedright #1}
}

\begin{algorithm}[thpb]
\caption{The Iterative Algorithm}
\label{alg:iterative}
\begin{algorithmic}[1]
\Inputs{Input domain $X \subseteq \bbR^m$, initial affine bounding functions $\{\calL_i^{\mathrm{init}}\}_{i=m+1}^\eta$, $\{\calU_i^{\mathrm{init}}\}_{i=m+1}^\eta$, affine function $\mathcal{C} : \bbR^{\eta} \to \bbR$, and number of iterations $T \geq 0$}
\Outputs{An upper bound on $\max_{x \in X} \mathcal{C}(x)$}\smallskip
\Function{TightenedPropagationBound}{$X, \calL, \calU, \calC, k$}
\State $\{\calL_i,\calU_i\}_{i=m+1}^{\eta} \gets \{\calL_i^{\mathrm{init}},\calU_i^{\mathrm{init}}\}_{i=m+1}^{\eta}$
\State $B, x^*, \ubact \gets$ \textsc{PropagationBound}($X, \{\calL_i\}_{i=m+1}^{\eta}, \{\calU_i\}_{i=m+1}^{\eta}, \mathcal{C}$)
% Run Algorithm~\ref{alg:backwards} with inputs $X$, $\{\calL_i\}_{i=m+1}^{\eta}$, $\{\calU_i\}_{i=m+1}^{\eta}$, and $\mathcal{C}$
\For{$\texttt{iter} = 1, \ldots, T$}
    \State $z^*_{1:\eta} \gets $ \textsc{RecoverCompleteSolution}($x^*, \ubact$)
    % Run Algorithm~\ref{alg:forwards} with inputs $x^*$ and $\ubact$
    \For{$i = m + 1, \ldots, \eta$}
        \State $\calU_i', v \gets$ most violated inequality w.r.t.\  $z^*_{1:\eta}$ from \eqref{eqn:cut-family-2} (per Prop.~\ref{lemma:general_separation}) and its violation
        \IIf{$v > 0$} $\calU_i \gets \calU_i'$ \EndIIf
    \EndFor
\State $B', x^*, \ubact \gets$ \textsc{PropagationBound}($X, \{\calL_i\}_{i=m+1}^{\eta}, \{\calU_i\}_{i=m+1}^{\eta}, \mathcal{C}$)
% Run Algorithm~\ref{alg:backwards} with inputs $X$, $\{\calL_i\}_{i=m+1}^{\eta}$, $\{\calU_i\}_{i=m+1}^{\eta}$, and $\mathcal{C}$
\IIf{$B' < B$} $B \gets B'$ \EndIIf
\EndFor
\State \Return $B$
\EndFunction
\end{algorithmic}
\end{algorithm}

% \vspace{-1.25em}

\begin{algorithm}[thpb]
\caption{\FastCtV{} Algorithm}
\label{alg:full}
\begin{algorithmic}[1]
\Inputs{A feedforward neural network as defined in~\eqref{eqn:topo-network-equations} (with input domain $X$, ReLU neurons $i = m+1, \ldots, N$, and a single affine output neuron indexed by $N+1$),  \texttt{initial\_method} $\in \{\DeepPoly{}, \FastLin{}\}$, and number of iterations per neuron $T \geq 0$ (note that if $T = 0$, we recover \DeepPoly{} or \FastLin{})}
\Outputs{Lower and upper bounds $\{\hat{L}_i, \hat{U}_i\}_{i=1}^{N+1}$ on the pre-activation function (if ReLU) or output (if affine) of neuron $i$}
% \AlgNote{Denote Algorithm~\ref{alg:iterative} with a fixed number of iterations $K$ by \textsc{Bound($X, \mathcal{L}, \mathcal{U}, \mathcal{C}$)}\\ (observe that if $K = 0$, this is equivalent to \DeepPoly{} or \FastLin{})}\smallskip
\smallskip
\Function{FastC2V}{$X, W, b, \mathtt{initial\_method}, T$}
\For{$i = m+1, \ldots, N+1$}
\State $\calC(z_{1:i-1}) \gets \sum_{j=1}^{i-1} w_{i,j} z_j + b_i$
\State $\hat{L}_i \gets -$ \textsc{TightenedPropagationBound($X, \{\calL_j\}_{j=m+1}^{i-1}, \{\calU_j\}_{j=m+1}^{i-1}, -\calC, T$)}
\State $\hat{U}_i \gets$ \textsc{TightenedPropagationBound($X, \{\calL_j\}_{j=m+1}^{i-1}, \{\calU_j\}_{j=m+1}^{i-1}, \calC, T$)}

\IIf{$i = N+1$} \textbf{break} \EndIIf

\State{$\triangleright$ Build bounding functions $\calL_i$ and $\calU_i$ for subsequent iterations}

\If{$\hat{L}_i \geq 0$} \Comment{ReLU $i$ is always active for any $z_{1:m} \in X$}
    \State $\calL_i(z_{1:i-1}) \gets \sum_{j=1}^{i-1} w_{i,j} z_j + b_i$
    \State $\calU_i(z_{1:i-1}) \gets \sum_{j=1}^{i-1} w_{i,j} z_j + b_i$
\ElsIf{$\hat{U}_i \leq 0$} \Comment{ReLU $i$ is always inactive for any $z_{1:m} \in X$}
    \State $\calL_i(z_{1:i-1}) \gets 0$
    \State $\calU_i(z_{1:i-1}) \gets 0$
\Else
    \State $\calU_i(z_{1:i-1}) \gets \frac{\hat{U}_i}{\hat{U}_i - \hat{L}_i} (\sum_{j=1}^{i-1} w_{i,j} z_j + b_i - \hat{L}_i)$
    \If{\texttt{initial\_method} $=$ \DeepPoly{}}
    \IIf{$|\hat{L}_i| \geq |\hat{U}_i|$} $\calL_i(z_{1:i-1}) \gets 0$ \IElse $\calL_i(z_{1:i-1}) \gets \sum_{j=1}^{i-1} w_{i,j} z_j + b_i$
    \EndIIf
    \Else \quad $\triangleright$ \texttt{initial\_method} $=$ \FastLin{}
    \State $\calL_i(z_{1:i-1}) \gets \frac{\hat{U}_i}{\hat{U}_i - \hat{L}_i} (\sum_{j=1}^{i-1} w_{i,j} z_j + b_i)$
    \EndIf
\EndIf
\EndFor
\State \Return $\{\hat{L}_i, \hat{U}_i\}_{i=1}^{N+1}$
\EndFunction
\end{algorithmic}
\end{algorithm}

\begin{restatable}{proposition}{forwards} \label{prop:forwards-solves}
The solution $z^*$ returned by Algorithm~\ref{alg:forwards} is optimal for the relaxed problem \eqref{eqn:relaxed-problem}.
\end{restatable}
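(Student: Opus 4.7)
The plan is to establish two claims: (i) $z^*$ is feasible for the relaxed problem \eqref{eqn:relaxed-problem}, and (ii) its objective value $\calC(z^*)$ equals the bound $B$ returned by the backward pass of Algorithm~\ref{alg:backwards}, which is the optimal value of \eqref{eqn:relaxed-problem} by the standard validity of the Fourier--Motzkin reduction discussed in Section~\ref{sec:generic_framework}. Together these give optimality of $z^*$.

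Feasibility is immediate from the construction. By the first line of Algorithm~\ref{alg:forwards} we have $z^*_{1:m}=x^*\in X$, and for each $i\geq m+1$ the forward pass sets $z^*_i$ equal to either $\calL_i(z^*_{1:i-1})$ or $\calU_i(z^*_{1:i-1})$. Provided the relaxation is nonempty, we have $\calL_i(z^*_{1:i-1})\leq \calU_i(z^*_{1:i-1})$ along the forward trajectory, so the interval constraint \eqref{eqn:relaxedConstr} is satisfied at every intermediate neuron.

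The core of the argument is an induction on the iterations of the backward pass showing that the value of the running expression \texttt{expr} evaluated at $z^*$ always equals $\calC(z^*)$. Let $E_t$ denote $\texttt{expr}$ after $t$ substitutions, so $E_0=\calC$ and $E_t(z^*)=\calC(z^*)$ holds trivially at $t=0$. If iteration $t$ pops an index $i$ with coefficient $\alpha$ of $z_i$ in $E_{t-1}$, Algorithm~\ref{alg:backwards} performs the update $E_t=E_{t-1}-\alpha z_i+\alpha\,h_i$, where $h_i=\calU_i(z_{1:i-1})$ when $\alpha>0$, $h_i=\calL_i(z_{1:i-1})$ when $\alpha<0$, and the last term is absent when $\alpha=0$. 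Crucially, the same iteration sets $\ubact_i=(\alpha>0)$, so Algorithm~\ref{alg:forwards} assigns $z^*_i=\calU_i(z^*_{1:i-1})$ precisely when $\alpha>0$ and $z^*_i=\calL_i(z^*_{1:i-1})$ otherwise. Hence, for $\alpha\neq 0$ the correction $-\alpha z^*_i+\alpha\,h_i(z^*_{1:i-1})$ cancels, while for $\alpha=0$ no change is made; in all cases $E_t(z^*)=E_{t-1}(z^*)$. At termination the final expression $E_T$ depends only on $z_{1:m}$, and the backward pass returns $B=\max_{x\in X} E_T(x)=E_T(x^*)$. Combining with the induction, $\calC(z^*)=E_T(x^*)=B$, which together with feasibility yields optimality.

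The main bookkeeping subtlety is handling indices $i\in\{m+1,\ldots,\eta\}$ that the backward pass never processes (because \texttt{expr} has zero coefficient on $z_i$ throughout). For such $i$, the flag $\ubact_i$ remains false and Algorithm~\ref{alg:forwards} sets $z^*_i=\calL_i(z^*_{1:i-1})$. One must verify that such $z_i$ never appears in $E_T$, and never appears in any $\calL_j$ or $\calU_j$ for a processed $j$ (otherwise $i$ would have been inserted into $Q$ and processed), so its value influences neither $\calC(z^*)$ nor the construction of any later $z^*_j$. This observation, combined with the consistency $\calL_i\leq \calU_i$ along the forward trajectory required for feasibility, is the only place where the argument requires care.
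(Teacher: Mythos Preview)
Your proof is correct and follows essentially the same line as the paper's: both show that the forward pass produces a feasible $z^*$ at which every backward substitution holds with equality, so that $\calC(z^*)=B$. The only organizational difference is that you appeal to the Fourier--Motzkin interpretation asserted in Section~\ref{sec:generic_framework} for the fact that $B$ equals the LP optimum, whereas the paper re-derives this inline via the monotone chain $\texttt{expr}^0\le\texttt{expr}^1\le\cdots\le\texttt{expr}^K$ valid for all feasible $z$; your explicit bookkeeping for never-processed indices is a detail the paper leaves implicit.
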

\begin{proof}
Denote by $\texttt{expr}^{k} \defeq \sum_{j \in J^k} \texttt{w}_j^k z_j + \texttt{b}^k$ the expression $\texttt{expr}$ at the end of iteration $k = 1, \ldots, K$ of the while loop in Algorithm~\ref{alg:backwards}, for some subsets $J^1,\ldots,J^K \subseteq \llbracket \eta \rrbracket$, and let $\texttt{expr}^0$ be the initial \texttt{expr} as defined in line 5, i.e.\ $\mathcal{C}(z)$. For each $k = 0,\ldots,K-1$, we obtain $\texttt{expr}^{k+1}$ by replacing, for some $i$, $z_i$ by $\mathcal{U}_i(z_{1:i-1})$ if $\texttt{w}_i^k > 0$, or by  $\mathcal{L}_i(z_{1:i-1})$ if if $\texttt{w}_i^k < 0$. Note that if $\texttt{w}_i^k = 0$, we can safely ignore any substitution because it will not affect the expression. Due to the constraints~\eqref{eqn:relaxedConstr}, this substitution implies that $\texttt{expr}^k \leq \texttt{expr}^{k+1}$ for any $z_{1:m} \in X$. This inductively establishes that, restricting to $z_{1:m} \in X$,
\begin{align}\label{eqn:inductionConc}
\mathcal{C}(z) = \sum_{j=1}^{\eta} c_j z_j + b \leq \sum_{j \in J^1} \texttt{w}_j^1 z_j + \texttt{b}^1 \leq \ldots \leq \sum_{j \in J^K} \texttt{w}_j^K z_j + \texttt{b}^K,
\end{align}
Note that $J^K \subseteq \{z_1, \ldots, z_m\}$ since we have made all the substitutions possible for $i > m$. Therefore, the optimal value of~\eqref{eqn:relaxed-problem} is upper-bounded by the bound corresponding to the solution returned by Algorithm~\ref{alg:backwards}, that is,
\begin{align*}
    \max \Set{\mathcal{C}(z) | z_{1:m} \in X,\ \eqref{eqn:relaxedConstr}} \leq \max \Set{\textstyle\sum_{j \in J^K} \texttt{w}_j^K z_j + \texttt{b}^K | z_{1:m} \in X}.
\end{align*}

To see that this upper bound is achieved, observe that each inequality in~\eqref{eqn:inductionConc} holds as equality if we substitute $z_j=z^*_j$ for all $j$, by construction of Algorithm~\ref{alg:forwards} and boolean vector $(\ubact_{m+1},\ldots,\ubact_{\eta})$. Note also that $z^*$ satisfies~\eqref{eqn:relaxedConstr} by construction. That is, we have a feasible $z^*$ such that $\mathcal{C}(z^*)$ is no less than the optimal value of~\eqref{eqn:relaxed-problem}, and thus $z^*$ must be an optimal solution.
\end{proof}

We would like to highlight to the interested reader that this result can also be derived from an argument using Fourier-Motzkin elimination~\cite[Chapter 2.8]{Bertsimas:1997} to project out the intermediate variables $z_{m+1:\eta}$. Notably, as each inequality neuron has exactly one inequality upper bounding and one inequality lower bounding its post-activation value, this projection does not produce an ``explosion'' of new inequalities as is typically observed when applying Fourier-Motzkin to an arbitrary polyhedron.

Define $C \defeq |\Set{i \in \llbracket \eta \rrbracket | c_i \neq 0}|$ and suppose that we use the affine bounding inequalities from \FastLin{} or \DeepPoly{}. Let $T$ be the number of iterations in Algorithm~\ref{alg:iterative}, $\mathit{opt}(X)$ be the time required to maximize an arbitrary affine function over $X$, and $A$ be the number of arcs in the network (i.e.\ nonzero weights).

\begin{observation}\label{obs:algorithm_single_complexity}
    Algorithm~\ref{alg:backwards} runs in $\mathit{opt}(X) + \mathcal{O}(C + A)$ time. Algorithm~\ref{alg:forwards} runs in $\mathcal{O}(A)$ time. Algorithm~\ref{alg:iterative} runs in $(T+1) \mathit{opt}(X) + \mathcal{O}(T(C+A))$ time.
\end{observation}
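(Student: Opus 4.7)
The plan is to bound the running time of each algorithm by a simple amortization over the arcs of the network. First I would analyze Algorithm~\ref{alg:backwards} by showing that each index $i \in \{m+1,\ldots,\eta\}$ enters the body of the while loop at most once. The key observation is that popped indices are processed in strictly decreasing order, and once the term $\texttt{expr.w[i]}z_i$ is removed from $\texttt{expr}$, no subsequent substitution can re-introduce $z_i$: every $\calU_j$ and $\calL_j$ used in later iterations has $j < i$ and depends only on $z_1,\ldots,z_{j-1}$. Hence the total cost of the while loop is bounded by the sum over processed neurons of the number of nonzero coefficients in $\calU_i$ or $\calL_i$, which telescopes to $\mathcal{O}(A)$. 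Initializing $\texttt{expr}$ and $Q$ costs $\mathcal{O}(C)$, and the final maximization of a linear function of $z_{1:m}$ over $X$ costs $\mathit{opt}(X)$, giving the claimed bound $\mathit{opt}(X) + \mathcal{O}(C + A)$.

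Algorithm~\ref{alg:forwards} is a straightforward forward pass: for each neuron $i = m+1,\ldots,\eta$ we evaluate an affine function of its predecessors, at cost proportional to the in-degree of $i$. Summing over all $i$ yields $\mathcal{O}(A)$.

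For Algorithm~\ref{alg:iterative} I would just sum the costs of its components. There are $T+1$ calls to \textsc{PropagationBound}, each contributing $\mathit{opt}(X) + \mathcal{O}(C+A)$ by the analysis above. Each of the $T$ inner iterations additionally performs one call to \textsc{RecoverCompleteSolution}, costing $\mathcal{O}(A)$, followed by a loop over $i = m+1,\ldots,\eta$ that invokes the separation routine of \autoref{lemma:general_separation}. By that proposition, each separation call runs in time linear in the number of nonzero coefficients incident to neuron $i$, so this loop also costs $\mathcal{O}(A)$ in total. Adding these contributions yields $(T+1)\mathit{opt}(X) + \mathcal{O}(T(C+A))$, as claimed.

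The only subtle step is the amortization for Algorithm~\ref{alg:backwards}: we need both that each index leaves $Q$ permanently after being processed (so there are at most $\mathcal{O}(\eta)$ pops) and that the per-pop work equals the in-degree of the corresponding neuron (so that the sum telescopes to $A$). The minor implementation caveat is that ``pop the largest index'' must be realized without a logarithmic overhead; this is accomplished either by iterating $i$ from $\eta$ down to $m+1$ with a boolean indicator of membership in $Q$, or by using a flat $\mathcal{O}(\eta)$-sized bucket. Everything else reduces to a direct per-line operation count.
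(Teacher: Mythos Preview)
The paper states this result as an \emph{Observation} and offers no proof; your argument is a correct and complete justification of the claimed bounds.

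Your amortization for Algorithm~\ref{alg:backwards} is exactly the right idea: because the largest remaining index is always popped and every substitution only introduces strictly smaller indices, each intermediate neuron is processed at most once, and the per-pop work is bounded by its in-degree, summing to $\mathcal{O}(A)$. Your remark about realizing ``pop the largest'' in constant time via a decreasing sweep with a membership indicator is precisely the implementation detail needed to avoid a spurious $\log$ factor; the paper leaves this implicit. The analyses of Algorithms~\ref{alg:forwards} and~\ref{alg:iterative} are likewise straightforward accounting, and your appeal to \autoref{lemma:general_separation} for the per-neuron separation cost is what makes the inner loop of Algorithm~\ref{alg:iterative} fit inside $\mathcal{O}(A)$. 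One small point worth noting (which the paper also glosses over): the stated bound $\mathcal{O}(T(C+A))$ for Algorithm~\ref{alg:iterative} tacitly assumes $T\ge 1$, since the initial call to \textsc{PropagationBound} already costs $\mathcal{O}(C+A)$; this is consistent with the case split in the subsequent Observation.
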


\begin{observation}\label{obs:algorithm_all_complexity}
Algorithm~\ref{alg:full} takes  $\mathcal{O}(NT (\mathit{opt}(X)+A))$ time if $T \geq 1$. If $T = 0$, then Algorithm~\ref{alg:full} takes $\mathcal{O}(N(\mathit{opt}(X)+A))$ time.
\end{observation}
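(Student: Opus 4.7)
The plan is to derive Observation~\ref{obs:algorithm_all_complexity} by plugging the per-neuron cost from Observation~\ref{obs:algorithm_single_complexity} into the outer loop of Algorithm~\ref{alg:full}. The outer loop iterates over $i = m+1, \ldots, N+1$, which is $\mathcal{O}(N)$ iterations. Inside each iteration the algorithm (i)~builds the linear functional $\mathcal{C}$ from the weights of neuron $i$, (ii)~invokes \textsc{TightenedPropagationBound} twice (once for the lower and once for the upper pre-activation bound), and (iii)~assembles the affine bounding functions $\mathcal{L}_i$ and $\mathcal{U}_i$ from the case analysis on $\hat L_i, \hat U_i$. Steps (i) and (iii) each touch only the nonzero incoming weights of neuron $i$, and summed over $i$ these are exactly $\mathcal{O}(A)$ work across the whole execution, so they are dominated by the cost of the propagation bounds.

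For the case $T \geq 1$, I would invoke the third part of Observation~\ref{obs:algorithm_single_complexity}: a single call to \textsc{TightenedPropagationBound} runs in $(T+1)\,\mathit{opt}(X) + \mathcal{O}(T(C+A))$ time. Since $C$ (the number of nonzero coefficients of the particular $\mathcal{C}$ being bounded) is at most $A$, and since $T+1 = \mathcal{O}(T)$ for $T \geq 1$, this simplifies to $\mathcal{O}(T(\mathit{opt}(X)+A))$. Two such calls per iteration, times $\mathcal{O}(N)$ iterations, gives the claimed $\mathcal{O}(NT(\mathit{opt}(X)+A))$ bound.

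For $T = 0$, I would observe that Algorithm~\ref{alg:iterative} executes only the initial call to \textsc{PropagationBound} (the separation-and-refresh for-loop is skipped entirely because the range $1,\ldots,T$ is empty). By the first part of Observation~\ref{obs:algorithm_single_complexity}, one such call costs $\mathit{opt}(X) + \mathcal{O}(C+A) = \mathcal{O}(\mathit{opt}(X)+A)$ after again absorbing $C$ into $A$. Multiplying by the $\mathcal{O}(N)$ iterations of the outer loop yields $\mathcal{O}(N(\mathit{opt}(X)+A))$.

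There is no real obstacle here; the only subtlety worth flagging explicitly is the bookkeeping that justifies absorbing $C$ into $A$ and charging construction of $\mathcal{C}$, $\mathcal{L}_i$, $\mathcal{U}_i$ against the $\mathcal{O}(A)$ total. Since each of these objects is supported on the in-neighborhood of a single neuron and $\sum_i |\text{in-neighborhood}(i)| = A$, these bookkeeping costs are strictly dominated by the propagation-bound terms already accounted for, so no additional term appears in the final expressions.
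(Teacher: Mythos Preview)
Your derivation is correct and is exactly the intended justification: the paper states Observation~\ref{obs:algorithm_all_complexity} without proof, treating it as an immediate consequence of Observation~\ref{obs:algorithm_single_complexity} applied across the $\mathcal{O}(N)$ iterations of Algorithm~\ref{alg:full}'s outer loop. Your bookkeeping for absorbing $C$ into $A$ and handling the $T+1 = \mathcal{O}(T)$ simplification is precisely what is needed to close the gap between the per-call bound and the stated total.
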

% }

\end{subsection}
\begin{subsection}{Proofs of other results from \autoref{propagationsec}}
\relaxequallp*
\begin{proof}
For any $x\in X$, by definition of validity in~\eqref{eqn:affineValid}, setting $z_i \gets z_i(x)$ for all $i=1,\ldots,N$ yields a feasible solution to~\eqref{eqn:relaxed-problem} with objective value $c\rbra{z_{1:N}\rbra{x}}$, completing the proof.
\end{proof}

\end{subsection}
\end{section}

\begin{section}{An example for \FastCtV{}}\label{app:fastc2v_example}

\begin{figure}[t]
\centering
\begin{subfigure}[t]{.55\linewidth}
\begin{tikzpicture}[scale=0.7, every node/.style={scale=0.7}]]
\node at (0,0) {$[-1,1]$};
\node at (0,1) {\Large$\times$};
\node at (0,2) {$[-1,1]$};
\node[draw=black, circle, label={[xshift=-1em]above:$\max(0, -x_1 + x_2 + 1)$}](v1) at (3,2) {$h_{11}$};
\node[draw=black, circle, label={[xshift=-1em]below:$\max(0, -x_1 + 0.5)$}](v2) at (3,0) {$h_{12}$};
\draw[->, >=stealth, shorten >= 2pt]  (0.75,2) -- (v1);
\draw[->, >=stealth, shorten >= 2pt]  (0.75,0) -- (v2);
\node[draw=black, circle, label={[xshift=1em]above:$\max(0, h_{12} + 1)$}](v3) at (6,2) {$h_{21}$};
\node[draw=black, circle, label={[xshift=2em]below:$\max(0, -1.5 h_{11} + h_{12} + 0.5)$}](v4) at (6,0) {$h_{22}$};
\draw[->, >=stealth, shorten >= 2pt]  (v1) -- (v3);
\draw[->, >=stealth, shorten >= 2pt]  (v1) -- (v4);
\draw[->, >=stealth, shorten >= 2pt]  (v2) -- (v3);
\draw[->, >=stealth, shorten >= 2pt]  (v2) -- (v4);
\node[draw=black, circle, label={above:$h_{11} + h_{12}$}](v5) at (9,1) {$y$};
\draw[->, >=stealth, shorten >= 2pt]  (v3) -- (v5);
\draw[->, >=stealth, shorten >= 2pt]  (v4) -- (v5);
\end{tikzpicture}\caption{Structure and weights of the example network.}
\end{subfigure}
\begin{subfigure}[t]{.4\linewidth}
\centering
\includegraphics[width=\textwidth]{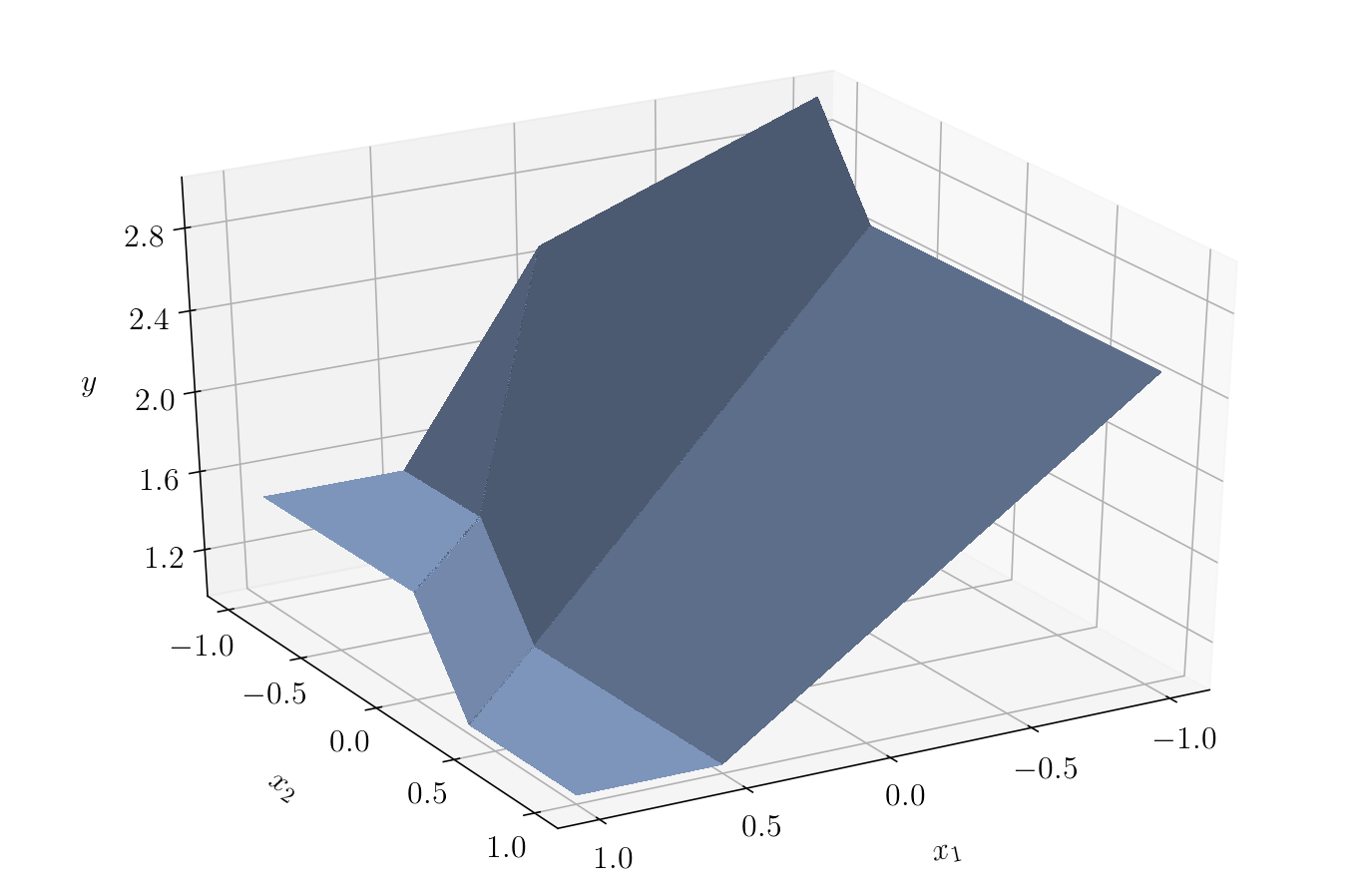}\caption{Output of the example network (rotated).}
\end{subfigure}
\caption{An example network with 4 ReLUs on which we simulate \FastCtV{}.}\label{fig:fastc2v_example_network}
\end{figure}

In this section, we walk through the \FastCtV{} algorithm step-by-step for the following $\mathbb{R}^2 \to \mathbb{R}$ network with four ReLUs, also illustrated in Figure~\ref{fig:fastc2v_example_network}:
\begin{align*}
    x_1 &\in [-1, 1]\\
    x_2 &\in [-1, 1]\\
    h_{11} &= \max(0, -x_1 + x_2 + 1)\\
    h_{12} &= \max(0, -x_1 + 0.5)\\
    h_{21} &= \max(0, h_{12} + 1)\\
    h_{22} &= \max(0, -1.5 h_{11} + h_{12} + 0.5)\\
    y &= h_{21} + h_{22}
\end{align*}

Our goal is to compute an upper bound for $y$ using \FastCtV.

Our procedure requires lower and upper bounds for each pre-activation function, and this can be obtained by running the same algorithm for each neuron, layer by layer. For simplicity, in this example we start from bounds computed via interval arithmetic.

Denote by $\hat{h}_{ij}$ the pre-activation function of $h_{ij}$. To apply interval arithmetic, we simply substitute the variables by their lower or upper bounds as to minimize or maximize them (applying the ReLU activation function when needed). For example, the interval arithmetic upper bound of $\hat{h}_{11}$ is $-1.5\times(-1) + 1 + 0.5 = 3$. Starting at $x_1 \in [-1,1]$ and $x_2 \in [-1, 1]$, we have:
\begin{align*}
    \hat{h}_{11} &\in [-1, 3] \hspace{2.55em} (h_{11} \in [0, 3])& \hat{h}_{21} &\in [1, 2.5] \quad (h_{21} \in [1, 2.5])\\
    \hat{h}_{12} &\in [-0.5, 1.5] \quad (h_{12} \in [0, 1.5])& \hat{h}_{22} &\in [-4, 2] \quad (h_{22} \in [0, 2])
\end{align*}

Note that $\hat{h}_{21} \geq 0$ for any input $x \in [-1,1]^2$, and thus we may infer that the ReLU will always be active. That is, we can assume $h_{21} = h_{12} + 1$. This linearization step not only can have a large impact in bound strength, but also is required for correctness, as the formulations assume that the lower bound is negative and the upper bound positive.

Therefore, we drop $h_{21}$ altogether and set
\begin{align*}
    y = h_{12} + h_{22} + 1.
\end{align*}

Observe that interval arithmetic already gives us a simple upper bound on $y$ of 4.5.

We begin by applying \texttt{DeepPoly}~\cite{singh2019abstract} (or \texttt{CROWN-Ada}~\cite{zhang2018efficient}), following Algorithm~\ref{alg:backwards}. Consider briefly a ReLU $y = \max(0, w^\top x + b)$ with pre-activation bounds $[\hat{L}, \hat{U}]$. In \texttt{DeepPoly}, we select the lower bounding inequality to be $y \geq 0$ if $|\hat{L}| \geq |\hat{U}|$, or $y \geq w^\top x + b$ otherwise. The upper bounding inequality comes from the $\Delta$-relaxation and can be expressed as $y \leq \frac{\hat{U}}{\hat{U}-\hat{L}} (w^\top x + b - \hat{L})$. Thus, based on the previously computed bounds, we have:
\begin{align*}
    -x_1 + x_2 + 1 &\leq h_{11} \leq -\frac{3}{4} x_1 + \frac{3}{4} x_2 + \frac{3}{2}\\
    -x_1 + \frac{1}{2} &\leq h_{12} \leq -\frac{3}{4} x_1 + \frac{3}{4}\\
    0 &\leq h_{22} \leq -\frac{1}{2} h_{11} + \frac{1}{3} h_{12} + \frac{3}{2}
\end{align*}

The next step is to maximize $y$ over the relaxation given by the above inequalities plus bounds (including on the input). We replace variables with the above bounding inequalities, layer by layer. Since we are maximizing, we use the upper bounding inequality if the corresponding coefficient is positive, or the lower bound inequality otherwise. This maintains validity of the inequality throughout the process.
\begin{align*}
    y = h_{12} + h_{22} + 1 &\leq h_{12} + \left(-\frac{1}{2} h_{11} + \frac{1}{3} h_{12} + \frac{3}{2}\right) + 1\\
    &= -\frac{1}{2} h_{11} + \frac{4}{3} h_{12} + \frac{5}{2}\\
    &\leq -\frac{1}{2}\left(-x_1 + x_2 + 1\right) + \frac{4}{3}\left(-\frac{3}{4} x_1 + \frac{3}{4}\right) + \frac{5}{2}\\
    &= -\frac{1}{2} x_1 - \frac{1}{2} x_2 + 3
\end{align*}

Now that we have inferred the above upper bounding inequality on $y$, we convert it into an upper bound by solving the simple problem $\max_{x \in [-1,1]^2} -\frac{1}{2} x_1 - \frac{1}{2} x_2 + 3$, which yields $4$, with an optimal solution $(-1, -1)$. This is the resulting upper bound from the \texttt{DeepPoly} algorithm.

We next show how to tighten it with \FastCtV. The first step is to recover an actual optimal solution of the relaxation above. This is the forward pass described in Algorithm~\ref{alg:forwards}.

We first make note that we used the upper bounding inequality for $h_{12}$ and $h_{22}$ and the lower bounding inequality for $h_{11}$. We start from the optimal solution in the input space, $(-1, -1)$, and recover values for each $h_{ij}$ and $y$ according to the bounding inequalities used, considering them to be equalities. For example, $h_{11} = -(-1) + (-1) + 1 = 1$. The result is the solution $\bar{p} = (-1, -1, 1, 1.5, 1.5, 4)$ in the $(x_1, x_2, h_{11}, h_{12}, h_{22}, y)$-space.

We now perform the main step of \FastCtV, which is to swap upper bounding inequalities based on $p$. They are swapped to whichever inequality is violated by $p$, or not swapped if no inequality is violated for a given ReLU neuron.

In this example, we skip $h_{11}$ and $h_{12}$ for simplicity as no swapping occurs, and we focus on $h_{22}$.
For $h_{22}$, the relevant values of $\bar{p}$ are $\bar{h}_{11} = 1$, $\bar{h}_{12} = 1.5$, and $\bar{h}_{22} = 1.5$. Normally, we would solve the separation problem at this point, but for illustrative purposes we list out all possible upper bounding inequalities that we can swap to.

Recall Theorem~\ref{thm:convex-hull} and compute:
\begin{align*}
    \ell(\varnothing) &= 2 & \ell(\{1\}) &= -2.5\\
    \ell(\{2\}) &= 0.5 & \ell(\{1,2\}) &= -4
\end{align*}

Based on these values, we have $\mathcal{J} = \{(\varnothing, 1), (\{2\}, 1)\}$, or in other words, two possible inequalities to swap to. By following the formulation in Theorem~\ref{thm:convex-hull}, we obtain the inequalities
\begin{align}
    h_{22} &\leq -\frac{2}{3} h_{11} + 2\label{eq:example_ineq_1}\\
    h_{22} &\leq -\frac{1}{6} h_{11} + h_{12} + \frac{1}{2}\label{eq:example_ineq_2}
\end{align}

\begin{figure}[t]
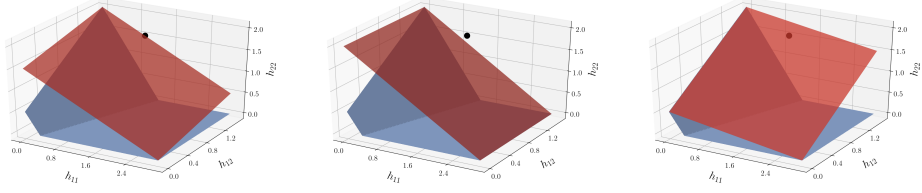

\centering
\begin{subfigure}[t]{.3\linewidth}
\centering
\includegraphics[width=\textwidth]{fastc2v_example_2a.png}\caption{Original inequality from the $\Delta$-relaxation.}
\end{subfigure}
\begin{subfigure}[t]{.3\linewidth}
\centering
\includegraphics[width=\textwidth]{fastc2v_example_2b.png}\caption{Inequality~\eqref{eq:example_ineq_1}.}
\end{subfigure}
\begin{subfigure}[t]{.3\linewidth}
\centering
\includegraphics[width=\textwidth]{fastc2v_example_2c.png}\caption{Inequality~\eqref{eq:example_ineq_2}.}
\end{subfigure}
\caption{Three options of upper bounding inequalities for $h_{22}$. The black point depicts the solution that we would like to separate, which is cut off by inequality~\eqref{eq:example_ineq_1}.}\label{fig:fastc2v_example_inequalities}
\end{figure}

These inequalities are illustrated in Figure~\ref{fig:fastc2v_example_inequalities}. We observe that our point $p$ is cut off by the inequality~\eqref{eq:example_ineq_1}: $1.5 = \bar{h}_{22} > -\frac{2}{3} \bar{h}_{11} + 2 = \frac{4}{3} \approx 1.333$. Therefore, for this neuron, we swap the upper bounding inequality to~\eqref{eq:example_ineq_1}. In other words, our pair of inequalities for $h_{22}$ is now:
\begin{align*}
    0 &\leq h_{22} \leq -\frac{2}{3} h_{11} + 2
\end{align*}

The last step of \FastCtV\ is to redo the backward propagation with the swapped inequalities and recompute the bound. We obtain:
\begin{align*}
    y = h_{12} + h_{22} + 1 &\leq h_{12} + \left(-\frac{2}{3} h_{11} + 2\right) + 1\\
    &= -\frac{2}{3} h_{11} + h_{12} + 3\\
    &\leq -\frac{2}{3}\left(-x_1 + x_2 + 1\right) + \left(-\frac{3}{4} x_1 + \frac{3}{4}\right) + 3\\
    &= -\frac{1}{12} x_1 - \frac{2}{3} x_2 + \frac{37}{12}\\
\end{align*}

Solving $\max_{x \in [-1,1]^2} -\frac{1}{12} x_1 - \frac{2}{3} x_2 + \frac{37}{12}$ gives us an improved bound of $\frac{23}{6} \approx 3.833$, completing the \FastCtV\ algorithm for upper bounding $y$. Note that this procedure is not guaranteed to improve the initial bound, and in general we take the best between the initial bound and the new one.

Incidentally, we observe in Figure~\ref{fig:fastc2v_example_inequalities}(a) that the big-$M$ inequality from the $\Delta$-relaxation is, in general, not facet-defining for the convex hull of the feasible points depicted in blue. This explains why it can not be directly reconstructed from our convex hull description~\eqref{eqn:relu-conv-hull}.

\end{section}

\begin{section}{Implementation details}\label{app:implementation_details}

In this section, we add to the implementation details provided in  Section~\ref{sec:computational}.

The implementation of the propagation-based algorithm involves the following details:

\begin{itemize}
    \item It may occur that the result of Algorithm~\ref{alg:backwards} has zero coefficients for some variables $x_i$, in which case any feasible value for $x_i$ produces an optimal solution. For those variables, we select the midpoint between the lower bound and upper bound to proceed with Algorithm~\ref{alg:forwards}.
    \item We find that running more than one iteration of the propagation-based algorithm does not yield improving results. A possible reason for this is that while these inequalities are stronger in some portions of the input space, they are looser by themselves in others, and balancing this can be difficult. Improving this trade-off however is outside the scope of this paper.
    \item We use no tolerance on violation. That is, every violated inequality is swapped in.

\end{itemize}

The implementation of the LP-based algorithm involves the following details:

\begin{itemize}
    \item We find that the Conv networks examined are very numerically unstable for LPs due to the presence of very small weights in the networks. Taking no action results in imprecise solutions, sometimes resulting in infeasible LPs being constructed. To improve on this instability, we consider as zero any weight or generated bound below $10^{-5}$. In addition, we run \DeepPoly{} before the LP to quickly check if the neuron can be linearized. This is applied only to the LP-based methods. Note that the default feasibility and optimality tolerances in Gurobi are $10^{-6}$. With this, we end up solving an approximate problem rather than the exact problem, though arguably it is too difficult to solve these numerically unstable LPs with high precision and reasonable time in practice.
    \item For separation, we implement the $O(n \log n)$ version of the algorithm based on sorting instead of the $O(n)$ version.
    \item For each bound computed, we generate new cuts from scratch. More specifically, when solving for each bound, we make a copy of the model and its LP basis from the previous solve, run the LP solve and cut loop, retrieve the bound, and then discard this copy of the model.
    \item We add cuts whose violation exceeds a tolerance of $10^{-5}$.
    \item  In the context of mixed-integer programming, it is well known that selecting a smaller subset of cuts to add can be very beneficial to reduce solving time, but for simplicity, we perform no cut selection in this method.
    \item An alternative to the LP-based method is to solve a MIP with analogous cutting planes with binary variables~\cite{anderson2020strong}, but we find that this method, free of binary variables, is more lightweight and effective even without cut selection and all the presolve functionalities of modern MIP solvers. The ability to solve these LPs very quickly is important since we solve them at every neuron. In addition, this gives us more fine-grained control on the cuts, providing a better opportunity to evaluate our inequalities.
\end{itemize}

The implementation of all algorithms involve the following details:

\begin{itemize}
    \item We attempt to linearize each neuron with simple interval arithmetic before running a more expensive procedure. This makes a particularly large difference in solving time for the Conv networks, in which many neurons are linearizable.
    \item As done in other algorithms in the literature, we elide the last affine layer, a step that is naturally incorporated in the framework from Section~\ref{sec:generic_framework}. In other words, we do not consider the last affine layer to be a neuron but to be the objective function.
    \item We fully compute the bounds of all neurons in the network, including differences of logits. We make no attempt to stop early even if we have the opportunity to infer robustness earlier.
    \item When solving the verification problem, scalar bounds on the intermediate neurons only need to be computed once per input image (i.e. once per set $X$), and can be reused for each target class (i.e. reused for different objectives $c$).%\jnote{This last sentence is confusing to me.}\cnote{What about this rephrasing? Omitting it might be fine too, I think it's a bit intuitive.} \jnote{Moving to appendix in the interest of space. Also reworded to be more specific.}
\end{itemize}

The details of the networks from the ERAN dataset~\cite{eran_benchmark} are the following. To simplify notation, we denote a dense layer by \texttt{Dense(size, activation)} and a convolutional layer by \texttt{Conv2D(number of filters, kernel size, strides, padding, activation)}.

\begin{itemize}
    \item 6x100: $5 \times$ \texttt{Dense(100, ReLU)} followed by \texttt{Dense(10, ReLU)}. This totals 510 units. Trained on the MNIST dataset with no adversarial training.
    \item 9x100: $8 \times$ \texttt{Dense(100, ReLU)} followed by \texttt{Dense(10, ReLU)}. This totals 810 units. Trained on the MNIST dataset with no adversarial training.
    \item 6x200: $5 \times$ \texttt{Dense(200, ReLU)} followed by \texttt{Dense(10, ReLU)}. This totals 1010 units. Trained on the MNIST dataset with no adversarial training.
    \item 6x200: $8 \times$ \texttt{Dense(200, ReLU)} followed by \texttt{Dense(10, ReLU)}. This totals 1610 units. Trained on the MNIST dataset with no adversarial training.
    \item MNIST ConvSmall: \texttt{Conv2D(16, (4,4), (2,2), valid, ReLU)}, \texttt{Conv2D(32, (4,4), (2,2), valid, ReLU)}, \texttt{Dense(100, ReLU)}, \texttt{Dense(10, linear)}. This totals 3604 units. Trained on the MNIST dataset with no adversarial training.
    \item MNIST ConvBig: \texttt{Conv2D(32, (3,3), (1,1), same, ReLU)}, \texttt{Conv2D(32, (4,4), (2,2), same, ReLU)}, \texttt{Conv2D(64, (3,3), (1,1), same, ReLU)}, \texttt{Conv2D(64, (4,4), (2,2), same, ReLU)}, \texttt{Dense(512, ReLU)}, \texttt{Dense(512, ReLU)}, \texttt{Dense(10, linear)}. This totals 48064 units. Trained on the MNIST dataset with DiffAI for adversarial training.
    \item CIFAR-10 ConvSmall: \texttt{Conv2D(16, (4,4), (2,2), valid, ReLU)}, \texttt{Conv2D(32, (4,4), (2,2), valid, ReLU)}, \texttt{Dense(100, ReLU)}, \texttt{Dense(10, linear)}. This totals 4852 units. Trained on the CIFAR-10 dataset with projected gradient descent for adversarial training.
\end{itemize}

\end{section}

\begin{section}{Supplementary computational results}\label{app:supplementary}

\begin{figure}[t]
\centering
\includegraphics[width=0.95\textwidth]{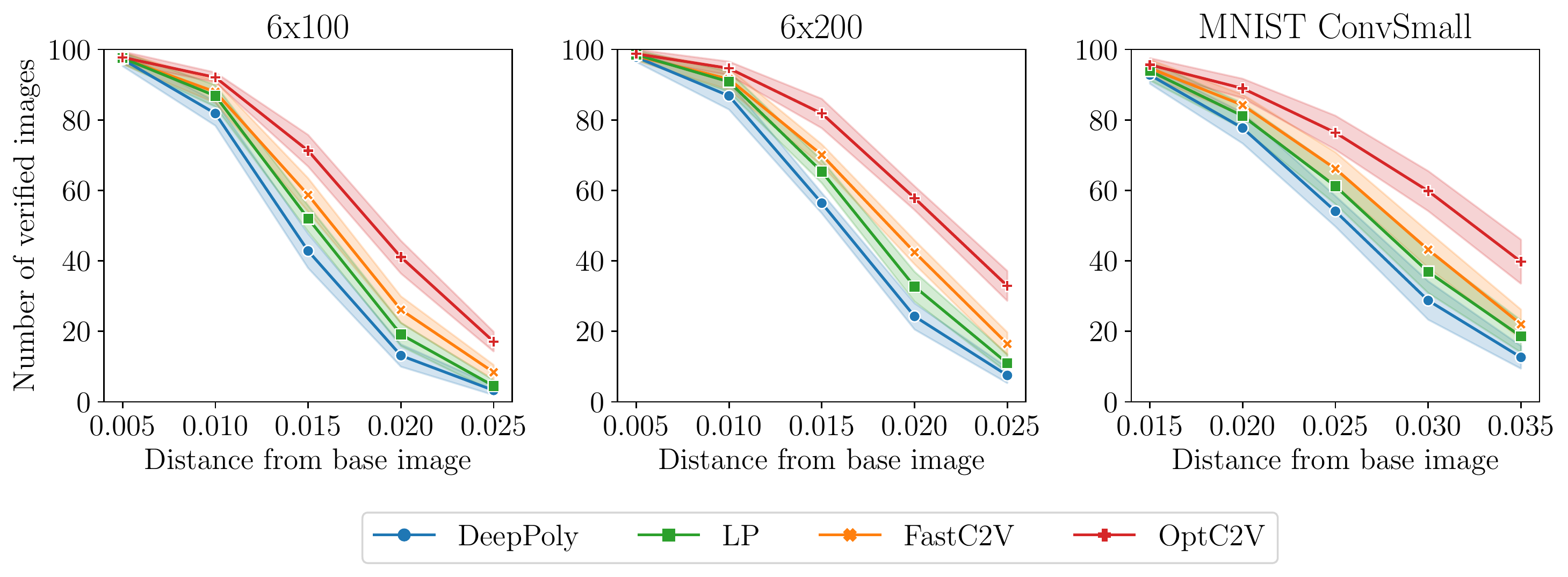}\caption{Number of verified images by each method given various values of the allowed distance from the base image. Lines are averages over 16 randomly initialized networks and error bands represent standard deviation.}\label{fig:supp_variance}
\end{figure}

\begin{figure}[htbp]
\centering
\includegraphics[width=0.9\textwidth]{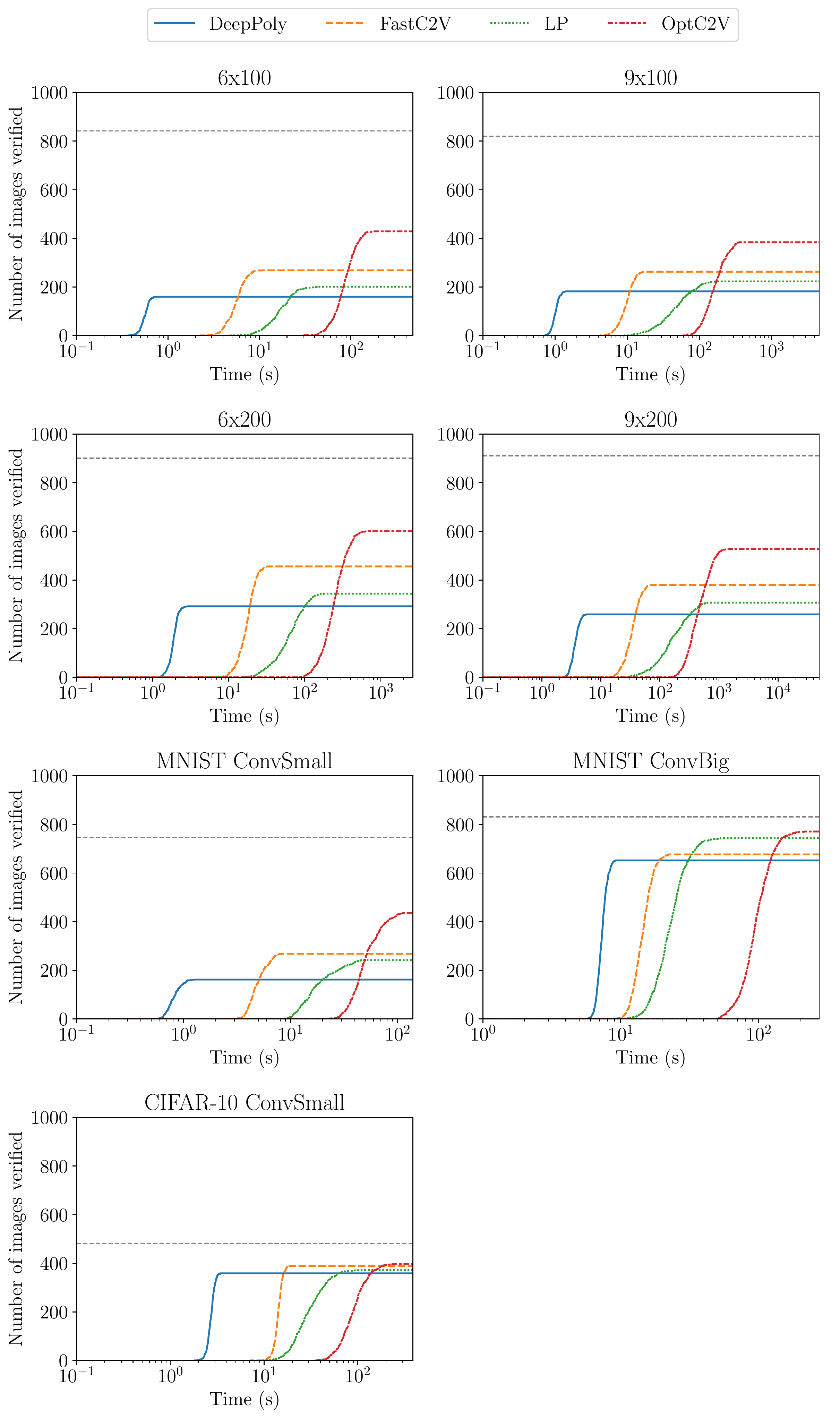}\caption{Survival plots for the results in Section~\ref{sec:computational}. The horizontal dashed line is the upper bound on the number of verifiable images.}\label{fig:supp_survival}
\end{figure}

We computationally examine the sensitivity of the algorithms in this paper to different training initializations and distances from the base image.

% 6x100
% Train (IS) accuracy: [0.9883, 0.9747, 0.9882, 0.9812, 0.9862, 0.9827, 0.9850, 0.9882, 0.9883, 0.9889, 0.9879, 0.9859, 0.9801, 0.9893, 0.9887, 0.9844]
% Test (OOS) accuracy: [0.9711, 0.9600, 0.9730, 0.9660, 0.9706, 0.9665, 0.9716, 0.9749, 0.9727, 0.9766, 0.9711, 0.9701, 0.9620, 0.9753, 0.9747, 0.9700]
% Average train accuracy: 0.9855
% Average test accuracy: 0.9703875

% 6x200
% Train (IS) accuracy: [0.9844, 0.9890, 0.9900, 0.9905, 0.9920, 0.9922, 0.9884, 0.9909, 0.9904, 0.9833, 0.9925, 0.9923, 0.9930, 0.9918, 0.9926, 0.9926]
% Test (OOS) accuracy: [0.9702, 0.9771, 0.9746, 0.9778, 0.9777, 0.9768, 0.9753, 0.9781, 0.9760, 0.9707, 0.9801, 0.9776, 0.9775, 0.9755, 0.9760, 0.9770]
% Average train accuracy: 0.99036875
% Average test accuracy: 0.976125

% MNIST ConvSmall
% Train (IS) accuracy: [0.9944, 0.9950, 0.9950, 0.9951, 0.9966, 0.9980, 0.9971, 0.9970, 0.9955, 0.9958, 0.9964, 0.9950, 0.9957, 0.9963, 0.9973, 0.9965]
% Test (OOS) accuracy: [0.9855, 0.9853, 0.9854, 0.9841, 0.9880, 0.9893, 0.9876, 0.9867, 0.9862, 0.9852, 0.9879, 0.9842, 0.9860, 0.9867, 0.9868, 0.9865]
% Average train accuracy: 0.99604375
% Average test accuracy: 0.9863375

We focus on networks for the MNIST dataset. The first two architectures 6x100 and 6x200 have 6 hidden layers of 100 and 200 ReLUs respectively, followed by a linear output layer of 10 ReLUs (this differs slightly from the ERAN networks of the same name described in Appendix~\ref{app:implementation_details}). The MNIST ConvSmall architecture is the same as described in Appendix~\ref{app:implementation_details}. Average test accuracies are 97.04\%, 97.61\%, and 98.63\% respectively. For each architecture and distance, we train 16 randomly initialized networks. Each network is trained with a learning rate of 0.001 for 10 epochs using the Adam training algorithm, without any adversarial training.

Figure~\ref{fig:supp_variance} illustrates the average number of verified images. The error bands represent standard deviation over the 16 networks. We observe that OptC2V and FastC2V perform well across different networks and distances.

In addition, Figure~\ref{fig:supp_survival} depicts survival plots for the results from Table~\ref{tab:computational_results}: the number of images that can be verified given individual time budgets.

\end{section}

\end{document}